\newcommand{\assign}{:=}
\newcommand{\tmem}[1]{{\em #1\/}}
\newcommand{\tmop}[1]{\ensuremath{\operatorname{#1}}}
\newcommand{\tmtextit}[1]{{\itshape{#1}}}
\newcommand{\mathbbm}{\mathbb}
\newcommand{\nin}{\not\in}
\newcommand{\pot}{d}
\newcommand{\restrict}{\llcorner}
\newtheorem{proposition}{Proposition}
\newtheorem{corollary}{Corollary} 
\begin{document}

\title{Continuous Multiclass Labeling Approaches and Algorithms}

\author{J.~Lellmann\footnotemark[2] \and C.~Schn{\"o}rr\footnotemark[2]}

\maketitle

\renewcommand{\thefootnote}{\fnsymbol{footnote}}
\footnotetext[2]{
        Image and Pattern Analysis \& HCI,
				Dept. of Mathematics and Computer Science, University of Heidelberg,
				{\tt \{lellmann,schnoerr\}@math.uni-heidelberg.de}}
\renewcommand{\thefootnote}{\arabic{footnote}}

\begin{abstract}
  We study convex relaxations of the image labeling problem on a continuous domain
  with regularizers based on metric interaction potentials. The generic
  framework ensures existence of minimizers and covers a wide
  range of relaxations of the originally combinatorial problem.  
  We focus on two specific relaxations that differ in flexibility
  and simplicity -- one can be used to tightly relax any metric
  interaction potential, while the other one only covers Euclidean
  metrics but requires less computational effort. For solving the
  nonsmooth discretized problem, we propose a globally convergent Douglas-Rachford
  scheme, and show that a sequence of dual iterates can be recovered
  in order to provide a~posteriori optimality bounds. In a
  quantitative comparison to two other first-order methods,
  the approach shows competitive performance on synthetical and real-world images.  
  By combining the method with an improved binarization technique for
  nonstandard potentials, we were able to routinely recover discrete
  solutions within $1\%$--$5\%$ of the global optimum for the
  combinatorial image labeling problem. 
\end{abstract}



\section{Problem Formulation}

The multi-class image labeling problem consists in finding, for each pixel $x$
in the image domain $\Omega \subseteq \mathbbm{R}^d$, a label $\ell (x) \in
\{1, \ldots, l\}$ which assigns one of $l$ class labels to $x$ so that the
labeling function $\ell$ adheres to some local data fidelity as well as nonlocal spatial
coherency constraints.

This problem class occurs in many applications, such as segmentation, multiview reconstruction, stitching, and inpainting \cite{Paragios2006}.
We consider the variational formulation
\begin{equation}
  \inf_{\ell : \Omega \rightarrow \{1, \ldots, l\}} f (\ell), \; f (\ell)
  \assign \underbrace{\int_{\Omega} s (x, \ell (x))
  dx}_{\text{data term}} + \underbrace{J (\ell).}_{\text{regularizer}} \label{eq:problemcomb}
\end{equation}
The {\tmem{data term}} assigns to each possible label $\ell (x)$ a {\tmem{local cost}}
$s (x, \ell (x)) \in \mathbbm{R}$, while the {\tmem{regularizer}} $J$ enforces the desired
spatial coherency. We will in particular be interested in regularizers that
penalize the weighted length of interfaces between regions of constant labeling.
Minimizing $f$ is an inherently combinatorial problem, as there is a discrete
decision to be made for each point in the image.

In the fully discrete setting, the problem can be express\-ed in terms of a Markov Random Field
\cite{Winkler2006} with a discrete state space, where the data and
regularization terms can be thought of as unary and binary potentials,
respectively. For graph-based discretizations of $J$, the resulting objective only contains
terms that depend on the labels at one or two points, and the problem can be approached
with fast graph cut-based methods. Unfortunately, this scheme introduces an anisotropy~\cite{Boykov2003} 
and thus does not represent isotropic regularizers well. Using ternary or
higher-order terms, the metrication error can be reduced,
but graph-based methods then cannot be directly applied.

However, it can be shown that even in the graph-based representation the problem is NP-hard for
relatively simple $J$ \cite{Boykov2001}, so we cannot expect to easily derive fast solvers
for this problem. This is in part caused by the discrete nature of the
feasible set. In the following, we will relax this set. This
allows to solve the problem in a globally optimal way using convex
optimization methods. On the downside, we cannot expect the relaxation to be
exact for any problem instance, i.e. we might get non-discrete (or discrete but suboptimal) solutions.

\begin{figure}
\centering
	\includegraphics[width=.80\columnwidth]{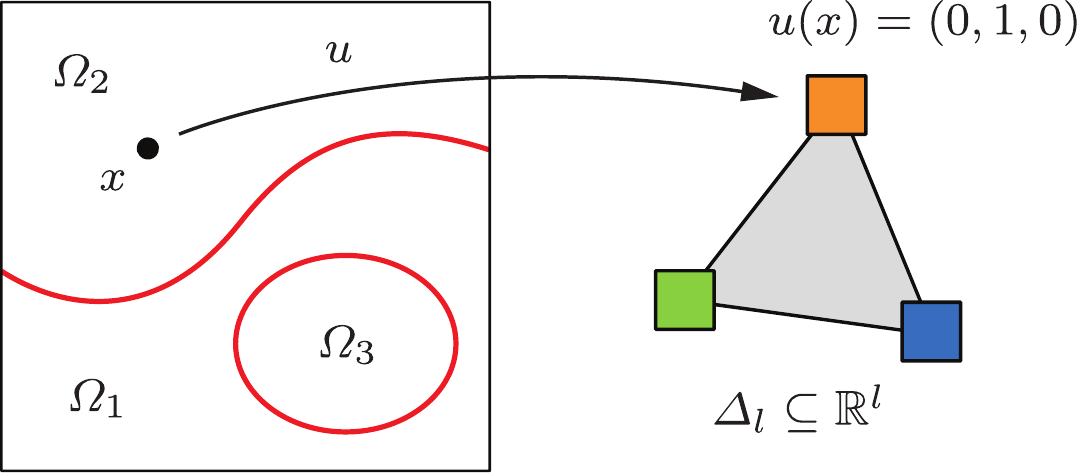}
  \caption{Convex relaxation of the multiclass labeling problem. The assignment of one
  unique label to each point in the image domain $\Omega$ is represented by a
  vector-valued function $u: \Omega \rightarrow \mathbbm{R}^l$. Ideally, $u$
  partitions the image into $l$ sets by assuming one of the unit vectors $\{e^1,\ldots,e^l\}$ everywhere.
  By relaxing this set to the standard (probability) simplex $\Delta_l$, the originally combinatorial problem
  can be treated in a convex framework.
  }
  \label{fig:relaxvis}
\end{figure}%
There are several choices for the relaxation method, of which in our opinion
the following is the most transparent (Fig.~\ref{fig:relaxvis}): Identify label $i$ with
the $i$-th unit vector $e^i \in \mathbbm{R}^l$, set $E \assign \{e^1, \ldots,
e^l \}$, and solve
\begin{equation}
  \inf_{u : \Omega \rightarrow E} f (u)\,, \; f (u)
  \assign \int_{\Omega} \langle u (x), s (x) \rangle d x + J (u)\,.
\end{equation}
The data term is now linear in $u$ and fully described by the vector
\begin{eqnarray}
  s(x) \assign \left(s_1(x),\ldots,s_l(x)\right)^{\top}
  \assign \left(s (x, 1),\ldots, s (x, l)\right)^{\top}.
\end{eqnarray}
Due to the linearization, the local costs $s$ may be arbitrarily
complicated, possibly derived from a probabilistic model, without affecting the
overall problem class. In this form, we \emph{relax} the label set by allowing $u$ to take ``intermediate''
values in the convex hull $\tmop{conv} E$ of the original label set. This is just the standard simplex $\Delta_l$,
\begin{eqnarray}
  \Delta_l & \assign & \tmop{conv} \{e^1, \ldots, e^l \} = \{a \in \mathbbm{R}^l |a \geqslant 0, \sum_{i = 1}^l a_i = 1\}\,.
\end{eqnarray}
The problem is then considered on the relaxed feasible set $\mathcal{C}$,
\begin{eqnarray}
  &  & \mathcal{C} \assign \{u \in \tmop{BV} (\Omega)^l | u (x) \in \Delta_l \text{ for a.e. } x \in \Omega\}\,.  \label{eq:defc}
\end{eqnarray}
The space of functions of bounded variation $\tmop{BV} (\Omega)^l \subset (L^1)^l$ guarantees a
minimal regularity of the discontinuities of $u$, see Sect.~\ref{sec:totalvariationbv}.
Assuming we can extend the regularizer $J$ to the whole relaxed set $\mathcal{C}$,
we get the relaxed problem
\begin{equation}
  \inf_{u \in \mathcal{C}} f (u) \,, \; f (u) \assign \int_{\Omega} \langle u (x), s (x) \rangle  dx + J(u)\,. \label{eq:problemrelaxed}
\end{equation}
If $J$ can be made convex, the overall problem is convex as well, and it may
likely be computationally tractable. In addition, $J$ should
ideally have a closed-form expression, or at least lead to a computationally tractable problem.

Whether these points are satisfied depends on the way a given regularizer is
\emph{extended} to the relaxed set. The prototypical example for such a regularizer is the \emph{total variation},
\begin{equation}
  \tmop{TV}(u) = \int_{\Omega} \| D u \|\,,
\end{equation}
where $\|\cdot\|$ denotes the Frobenius norm, in this case on $\mathbb{R}^{d \times l}$.
Note that $u$ may be discontinuous, so the gradient $D u$ has to be understood in a distributional sense (Sect.~\ref{sec:totalvariationbv}).
Much of the popularity of $\tmop{TV}$ stems from the fact that it allows to include \emph{boundary-length}
terms: The total variation of the indicator function $\chi_{\mathcal{S}}$ of a set $\mathcal{S}$,
\begin{equation}
  \tmop{Per}(\mathcal{S}) \assign \tmop{TV}(\chi_{\mathcal{S}})\,.
\label{eq:tvupers}
\end{equation}
called the \emph{perimeter} of $\mathcal{S}$, is just the classical length of the boundary $\partial \mathcal{S}$.

In this paper, we will in more generality consider ways to construct regularizers which penalize
interfaces between two adjacent regions with labels $i \neq j$ according to
the {\tmem{perimeter}} (i.e. length or area) of the interface weighted by an
{\tmem{interaction potential}} $\pot:\{1,\ldots,l\}^2 \rightarrow \mathbb{R}$ depending on the labels (in a slight abuse of notation the interaction potential is also denoted by $d$, since there is rarely any
ambiguity with respect to the ambient space dimension). The
simplest case is the \emph{Potts} model with the \emph{uniform} metric
$\pot (i, j) = 0$ iff $i = j$ and otherwise $\pot(i,j) = 1$. In this case,
the regularizer penalizes the total interface length, as seen above for the total variation.

As a prime motivation for our work, consider the two-class case $l = 2$ with $J=\tmop{TV}$. As here
the second component of $u$ is given by the first via $u_2 = 1-u_1$, we may pose the relaxed problem in the form
\begin{equation}
\min_{
  \begin{array}{c}
  {\scriptstyle u' \in \tmop{BV}(\Omega),}\\
  {\scriptstyle u'(x) \in [0,1] \text{ for a.e. } x \in \Omega}
  \end{array}
 } \int_{\Omega} u'(x) (s_1(x)-s_2(x)) d x + 2 \tmop{TV}(u')\,, \label{eq:continuouscut-binary}
\end{equation}
where $u'(x)$ is a scalar. This formulation is also referred to as \emph{continuous cut} in analogy to graph cut methods.
It can be shown \cite{Nikolova2006} that while there may be non-discrete solutions of the relaxed problem, a \emph{discrete} -- i.e. $u'(x) \in \{0,1\}$ -- global optimal solution can be recovered from {\tmem{any}} solution of the relaxed problem. We can thus reduce the \emph{combinatorial} problem to a \emph{convex} problem.
While there are reasons to believe that this procedure cannot be
extended to the multi-class case, we may still hope for ``nearly'' discrete solutions.


\subsection{Related Work}
The difficulty of the labeling problem varies greatly with its precise definition.
Formulations of the labeling problem can be categorized based on
\begin{enumerate}
\item whether they tackle the binary (two-class) or the much harder multiclass problem, and
\item whether they rely on a graph representation or are formulated in a spatially continuous framework.
\end{enumerate}
An early analysis of a variant of the \emph{binary} \emph{continuous} cut problem and 
the associated dual \emph{maximal flow} problem was done by Strang {\cite{Strang1983}}.
Chan et~al.~{\cite{Nikolova2006}} pointed out that by thresholding a
nonbinary result of the relaxed, convex problem at almost any threshold
one can generate binary solutions of the original, combinatorial problem
(this can be carried over to \emph{any} threshold and to slightly more general regularizers {\cite{Berkels2009,Zach2009}}. The proof heavily relies on the coarea formula \cite{Fleming1960}, which unfortunately
does not transfer to the multiclass case. The binary continuous case
is also closely related to the thoroughly analyzed Mumford-Shah \cite{Mumford1989} and Rudin-Osher-Fatemi (ROF)
\cite{Rudin1992} problems, where a quadratic data term is used.

For the \emph{graph-based} discretization, the binary case can be formulated as a
minimum-cut problem on a grid graph, which allows to solve the problem exactly
and efficiently for a large class of metrics using graph cuts
{\cite{Kolmogorov2005,Boykov2004}}. Graph cut algorithms have become a working horse in
many applications as they are very fast for medium sized problems. Unfortunately
they offer hardly any potential for parallelization.
As mentioned in the introduction, the graph representation invariably introduces a grid bias ~\cite{Boykov2003}
(Fig.~\ref{fig:grid}). While it is possible to reduce the resulting artifacts by using larger neighborhoods,
or by moving higher-order potentials through factor graphs, this
greatly increases graph size as well as processing time.

Prominent methods to handle the graph-based \emph{multiclass} case rely on finding a local minimum by solving a sequence of
binary graph cuts {\cite{Boykov2001}} (see {\cite{Komodakis2007}} for a recent
generalization). These methods have recently been extended to the continuous
formulation \cite{Trobin2008} with similar theoretical performance \cite{Olsson2009}.
Our results can be seen as a continuous analogon to {\cite{Ishikawa2003}},
where it was shown that potentials of the form $\pot(i,j) = |i-j|$ can be
exactly formulated as a cut on a multi-layered graph. An early analysis can be
found in {\cite{Kleinberg1999}}, where the authors also derive suboptimality
bounds of a linear programming relaxation for metric distances. All these
methods rely on the graph representation with pairwise potentials.

\begin{figure}[t]%
\center
\includegraphics[width=0.80\columnwidth]{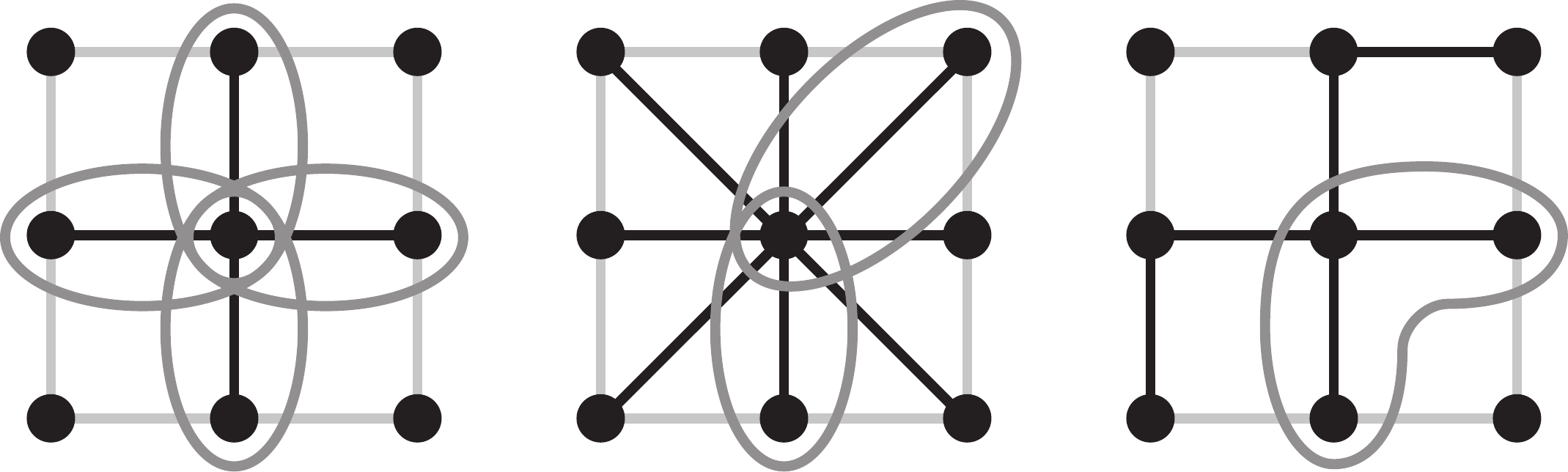}
\caption{Discretization schemes. \textbf{Left to right:} Graph-based with 4- and 8-neighborhood; higher order potentials. In graph-based approaches the regularizer is discretized using terms depending on the labels of at most two neighboring points. This leads to artifacts as isotropic metrics are not approximated well. Using higher-order terms the discrete functional more closely approximates the continuous functional (Fig.~\ref{fig:bias}).}%
\label{fig:grid}%
\end{figure}
\begin{figure}[t]%
\center
\includegraphics[width=0.26\columnwidth]{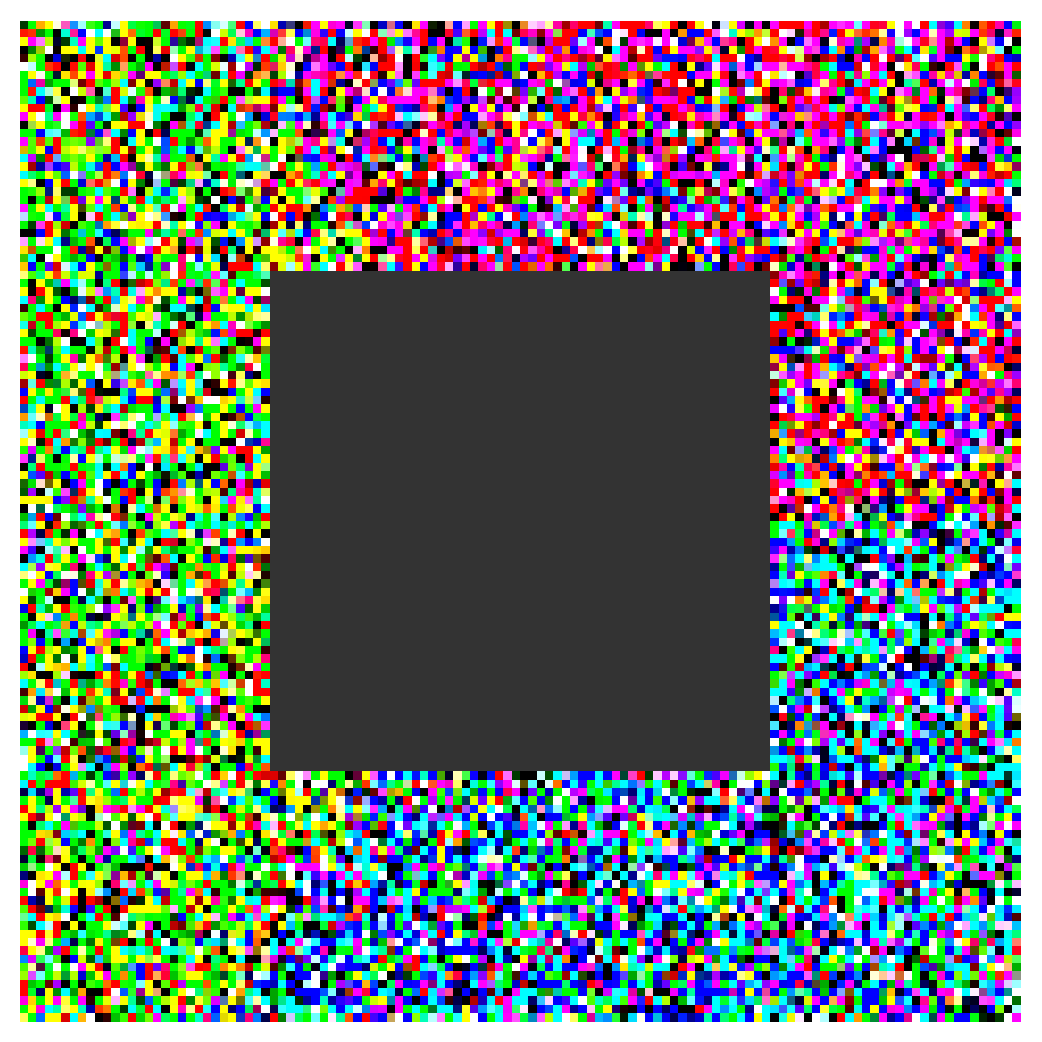}
\includegraphics[width=0.26\columnwidth]{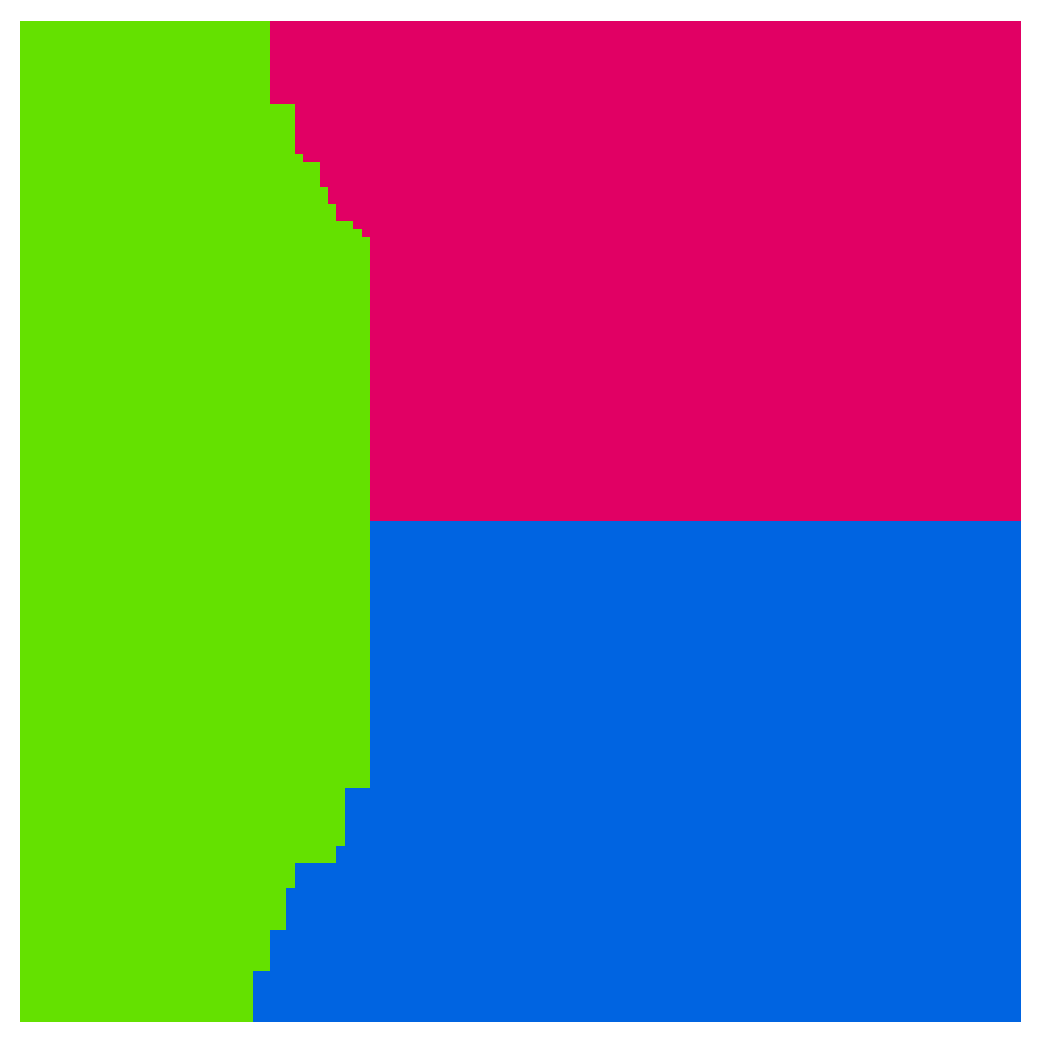}
\includegraphics[width=0.26\columnwidth]{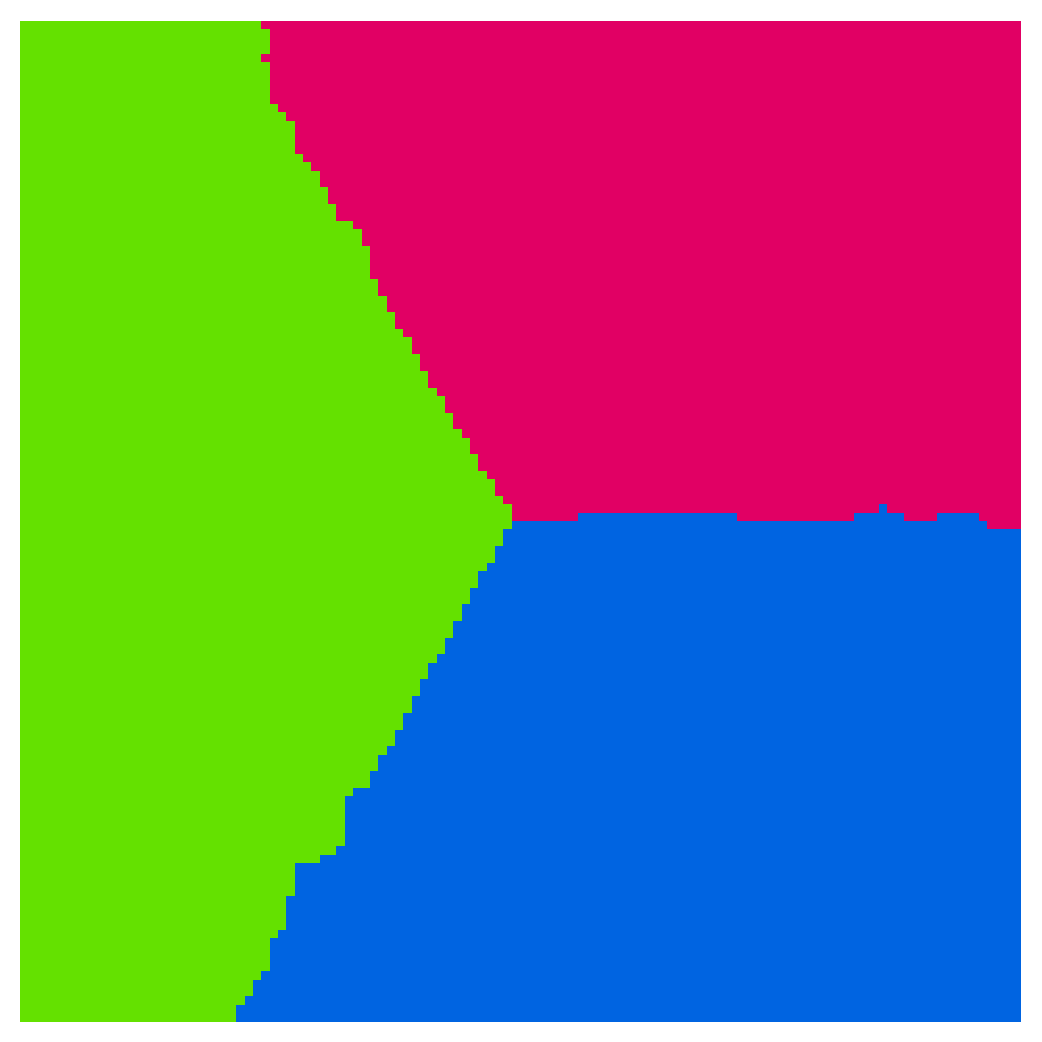}
\caption{Effect of the discretization on an inpainting problem. \textbf{Left to right:} Input image with unknown black region; graph-based method using pairwise potentials ($\alpha$-$\beta$-swap code from \cite{Szeliski2006}); proposed method using higher order potentials. Due to the introduced anisotropy, the graph-based method shows a bias towards axis-parallel edges.}
\label{fig:bias}%
\end{figure}

In this paper, we will focus on the \emph{continuous multiclass} setting with higher order potentials
in the discretization. Closely related to our approach is {\cite{Chambolle2008}}. In contrast to approaches that rely on a linear ordering of the labels {\cite{Ishikawa2003,Bae2009}}, the
authors represent labels in a higher-dimensional space. In a certain sense, \cite{Lie2006} can
be seen as a predecessor of this approach: in this work, the authors represent the label assignment
using a piecewise constant real-valued function, but parametrize this function using a set of
$l$ polynomial basis functions, which enables them to employ the Potts regularizer.

The approach of
{\cite{Chambolle2008}} allows to formulate interaction potentials of the form $\pot(i,j) = \gamma(|i - j|)$ with nondecreasing, positive, concave $\gamma$. The authors provide a thorough analysis of the continuous model and propose a relaxation based on the convex envelope, which gives almost discrete solutions in many cases. We will extend this approach
to the more general class of regularizers where $\pot$ is an arbitrary metric.
The same authors proposed a ``Fast Primal-Dual'' algorithm with proven convergence to solve the associated saddle point problem
{\cite{Pock2009}}. By lifting the objective to a higher dimensional space,
it turns out that the same method can be used to solve many problems also with nonlinear data term~{\cite{Pock2009a}}.

Our approach is a generalization of \cite{Zach2008,Lellmann2009}, where a similar linearization is used with the regularizer restricted to the Potts distance, and with less strong convergence results.
These methods have also been extended to segmentation on manifolds {\cite{Delaunoy2009}}.

Regarding optimization, several authors proposed smoothing of the primal or dual objective together
with gradient descent~\cite{Berkels2009,Bae2010}. In contrast, our approach does not require any a priori smoothing.
Using Nesterov's method \cite{Nesterov2004} for the labeling problem was proposed in \cite{Lellmann2009}. An earlier analysis of the method in the context of
$\ell_1$-norm and TV minimization can be found in {\cite{Weiss2007}}. In \cite{Becker2009} the method
is applied to a class of general $\ell_1$ regularized problems. In {\cite{Goldstein2009}} a predecessor of the
proposed Douglas-Rachford approach was presented in the Split Bregman framework {\cite{Setzer2009}} and restricted to the
two-class case. We provide an extension to the multi-class case, with proof of convergence
and a sound stopping criterion.


\subsection{Contribution}

The paper is organized as follows:
\begin{enumerate}
  \item We formulate natural requirements on the regularizer $J$ and show
  their implications on the choice of the interaction potential $\pot$ (Sect.~\ref{sec:axiomatic}).
  In particular, $\pot$ must necessarily be a metric under these requirements (Prop.~\ref{prop:metricd}).
  
  \item Given such a metric, we study two possible approaches to extend
  regularizers $J$ on the relaxed set (Sect.~\ref{sec:constructive}):
  \begin{itemize}
    \item The ``envelope'' approach, which is a generalization of the method
    recently suggested by Chambolle et al. (Sect.~\ref{sec:envelopemethod}).
    While there is no simple closed form expression, we show that it can be
    used to construct a true extension for {\tmem{any}} metric $\pot$ (Prop.~\ref{prop:supij}).
    
    \item The ``Euclidean distance'' approach (Sect.~\ref{sec:euclmetricmethod}), which
    yields exact extensions for Euclidean metrics $\pot$ only but has a closed form
    expression. We review some methods for the approximation of non-Euclidean metrics.
  \end{itemize}
  We provide a unified continuous framework and show existence of a minimizer.
  
  \item Both approaches lead to non-smooth convex problems, which can be
  studied in a general saddle point formulation (Sect.~\ref{sec:discrete}). Within
  this framework, we propose an improved binarization technique for nonstandard
  potentials to recover approximate solutions for the non-relaxed problem (Sect.~\ref{sec:binarization}).
  
  \item We provide and analyze two different methods that are capable of minimizing the
  specific class of saddle point problems (Sect.~\ref{sec:optimization}):
  \begin{itemize}
    \item A specialization of a method for nonsmooth optimization as suggested
    by Nesterov (Sect.~\ref{sec:nesterov}). The method is virtually
    parameter-free and provides explicit a priori and a posteriori optimality
    bounds.
    
    \item A Douglas-Rachford splitting approach (Sect.~\ref{sec:douglas-rachford}). We show that
    the approach allows to compute a sequence of dual iterates that provide an
    optimality bound and stopping criterion in form of the primal-dual gap.
  \end{itemize}
  Both methods are highly parallelizable and are shown to converge. For reference, we also
  summarize the primal-dual technique from \cite{Pock2009} (Sect.~\ref{sec:fpd}).
  
  \item Finally, we illustrate and compare the above methods under varying conditions and demonstrate the 
applicability of the proposed approaches on real-world problems (Sect.~\ref{sec:experiments}).
\end{enumerate}
In contrast to existing graph-based methods, we provide a continuous and
isotropic formulation, while in comparison with existing continuous
approaches, we provide a unified framework for arbitrary, non-uniform metrics $\pot$.
The Euclidean metric method and Nesterov optimization have been announced in less generality in \cite{Lellmann2009,Lellmann2009a}.


\section{Mathematical Preliminaries}\label{sec:preliminaries}

In the following sections we provide a reference of the notation used, and
a brief introduction to the concept of functions of bounded variation and
corresponding functionals. We aim to provide the reader with the basic ideas. For more detailed expositions we
refer to \cite{Ambrosio2000,Ziemer-89}.


\subsection{Notation}

In the following, superscripts $v^i$ denote a collection of vectors or matrix columns, while subscripts $v_k$ denote vector components, i.e. we denote,
for $A \in \mathbbm{R}^{d \times l}$,
\begin{eqnarray}
  A & = & (a^1 | \ldots |a^l) = (A_{i j}),\quad A_{i j} = \left(a^j\right)_i = a^j_i,\quad 1 \leqslant i \leqslant d, \, 1 \leqslant j \leqslant l.
\end{eqnarray}
An additional bracket $v^{(i)}$ indicates an element of a sequence $(v^{(i)})$.
We will frequently make use of the Kronecker product~\cite{Graham1981}
\begin{equation}
\otimes: \mathbbm{R}^{n_1 \times m_1} \times \mathbbm{R}^{n_2 \times m_2} \rightarrow \mathbbm{R}^{(n_1 n_2)\times(m_1 m_2)}
\end{equation}
in order to formulate all results for arbitrary dimensions.
The standard simplex in $\mathbbm{R}^l$ is denoted by
$\Delta_l \assign \{x \in \mathbbm{R}^l |x \geqslant 0, e^{\top} x = 1\}$, where $e \assign (1, \ldots, 1)^{\top} \in \mathbbm{R}^l$. $I_n$ is
the identity matrix in $\mathbbm{R}^n$ and $\| \cdot \|$ the usual Euclidean
norm for vectors resp. the Frobenius norm for matrices. Analogously, the standard
inner product $\langle \cdot, \cdot \rangle$ extends to pairs of matrices as the sum over
their elementwise product. $\mathcal{B}_r (x)$ denotes the ball of radius $r$ at $x$, and $S^{d-1}$ the set of $x \in \mathbbm{R}^d$ with $\|x\| = 1$. The characteristic function $\chi_{\mathcal{S}}(x)$ of a
set $\mathcal{S}$ is defined as $\chi_{\mathcal{S}} (x) = 1$ iff $x \in S$
and $\chi_{\mathcal{S}} (x) = 0$ otherwise. By $\delta_{\mathcal{S}} (x) = 0$ iff $x \in \mathcal{S}$ and 
$\delta_{\mathcal{S}} (x) = +\infty$ otherwise we denote the corresponding indicator function. For a
convex set $\mathcal{C}$, $\sigma_{\mathcal{C}} (u) \assign \sup_{v \in \mathcal{C}} \langle u, v \rangle$ is
the support function from convex analysis. $\mathcal{J}(u)$ denotes the
classical Jacobian of $u$.

$C^k_c (\Omega)$ is the space of $k$-times continuously differentiable functions on
$\Omega$ with compact support, and $C_0(\Omega)$ the completion of $C^0_c(\Omega)$ under the
supremum norm. As usual, $\mathcal{L}^d$ denotes the $\pot$-dimensional
Lebesgue measure, while $\mathcal{H}^k$ denotes the $k$-dimensional Hausdorff
measure. For some measure $\mu$ and set $M$, $\mu \restrict M$ denotes the
restriction of $\mu$ to $M$, i.e. $(\mu \restrict M)(A) \assign \mu(M \cap A)$.


\subsection{Total Variation and $\tmop{BV}$}\label{sec:totalvariationbv}
The total variation will be our main tool to construct the regularizer $J$.
For a differentiable scalar-valued function $u$, the total variation is simply
the integral over the norm of its gradient:
\begin{eqnarray}
  \tmop{TV} (u) & = & \int_{\Omega} \| \nabla u\|d x\,. \label{eq:tv-smooth-scalar}
\end{eqnarray}
As $u$ is the designated labeling function, which ideally should be
piecewise constant, the differentiability and continuity assumptions have to be dropped. In the
following we will shortly review the general definition of the total variation
and its properties.

We require the image domain $\Omega \subseteq \mathbbm{R}^d$ to be a bounded open domain with compact Lipschitz boundary,
that is the boundary can locally be represented as the graph of a Lipschitz-continuous function.
For simplicity, we will assume in the following that $\Omega = (0, 1)^d$.

We consider general vector-valued functions $u = (u_1,\ldots,u_l): \Omega \rightarrow \mathbbm{R}^l$ which are locally
absolutely integrable, i.e. $u \in L^1_{\tmop{loc}} (\Omega)^l$. As $\Omega$ is bounded
this is equivalent to being absolutely integrable, i.e. $u \in L^1 (\Omega)^l$.
For any such function $u$, its {\tmem{total variation}} $\tmop{TV} (u)$ is
defined in a dual way {\cite[(3.4)]{Ambrosio2000}} as
\begin{eqnarray}
  \tmop{TV} (u) & \assign & \sup_{
  v \in \mathcal{D}^{\tmop{TV}}
  }\;
  \sum_{j = 1}^l \int_{\Omega} u_j \tmop{div} v^j d x
   = \sup_{
  v \in \mathcal{D}^{\tmop{TV}}
  }\;
  \int_{\Omega} \langle u, \tmop{Div} v \rangle d x \,, \label{eq:tvdef-dual}\\
  \mathcal{D}^{\tmop{TV}} & \assign & \{ v \in C^{\infty}_c (\Omega)^{d \times l} | \|v(x)\| \leqslant 1 \; \forall x \in \Omega \}\,,\\
  \tmop{Div} v & \assign & \left(\tmop{div} v^1, \ldots, \tmop{div} v^l\right)^{\top}\,. \nonumber
\end{eqnarray}
This definition can be derived for continuously differentiable $u$
by extending (\ref{eq:tv-smooth-scalar}) to vector-valued $u$,
\begin{eqnarray}
  \tmop{TV} (u) & = & \int_{\Omega} \|\mathcal{J}(u)\| d x\,,
\end{eqnarray}
replacing the norm by its dual formulation and partial integration.
If $u$ has finite total variation, i.e.~$\tmop{TV} (u) < \infty$, $u$ is said to be of {\tmem{bounded variation}}.
The vector space of all such functions is denoted by $\tmop{BV}(\Omega)^l$:
\begin{eqnarray}
  \tmop{BV} (\Omega)^l & = & \left\{ u \in \left( L^1 (\Omega) \right)^l | \tmop{TV} (u) < \infty \right\}\,.
\end{eqnarray}
Equivalently, $u \in \tmop{BV} (\Omega)^l$ iff $u \in L^1(\Omega)^l$ and its distributional derivative corresponds to a finite Radon
measure; i.e. $u_j \in L^1 (\Omega)$ and there exist $\mathbbm{R}^d$-valued
measures $D u_j = (D_1 u_j, \ldots, D_d u_j)$ on the Borel subsets
$\mathcal{B}(\Omega)$ of $\Omega$ such that {\cite[p.118]{Ambrosio2000}}
\begin{eqnarray}
  & \sum_{j = 1}^l \int_{\Omega} u_j \tmop{div} v^j d x = 
   - \sum_{j = 1}^l \sum_{i = 1}^d \int_{\Omega} v_i^j d D_i u_j \,,
     \quad\forall v \in (C^{\infty}_c (\Omega))^{d \times l}\,. &
\end{eqnarray}
These measures form the distributional gradient $D u = (D u_1 | \ldots | D u_l)$, which is again a measure
that vanishes on any $\mathcal{H}^{(d-1)}$-negligible set. If $u \in \tmop{BV} (\Omega)$ then $|D u| (\Omega) = \tmop{TV} (u)$, where $|D u|$ is the total variation of the measure $D u$ in the measure-theoretic sense {\cite[3.6]{Ambrosio2000}}.
The total variation of characteristic functions has an intuitive geometrical
interpretation: For a Lebesgue-measurable subset $\mathcal{S} \subseteq \mathbbm{R}^d$, its
{\tmem{perimeter}} is defined as the total variation of its characteristic function,
\begin{eqnarray}
  \tmop{Per} (\mathcal{S}) & \assign & \tmop{TV} (\chi_\mathcal{S})\,.
\end{eqnarray}
Assuming the boundary $\partial \mathcal{S}$ is sufficiently regular,
$\tmop{Per}(\mathcal{S})$ is just the classical length ($d = 2$) or area ($d = 3$) of the boundary.


\subsection{Properties of $\tmop{TV}$ and Compactness}

We review the most important ingredients for proving existence of
minimizers for variational problems on involving $\tmop{BV}$ involving $\tmop{TV}$.

\noindent{\emph{Convexity.}}
As $\tmop{TV}$ is the pointwise supremum of a family
of linear functions, it is {\tmem{convex}} and {\tmem{positively
homogeneous}}, i.e. $\tmop{TV} (\alpha u) = \alpha \tmop{TV} (u)$ for $\alpha
> 0$.

\noindent{\emph{Lower Semicontinuity.}} A functional $J$ is said to be {\tmem{lower
semicontinuous}} with respect to some topology, if for any sequence $(u^{(k)})$ converging to $u$,
\begin{eqnarray}
  \lim \inf_{k \rightarrow \infty} J (u^{(k)}) & \geqslant & J (u)\,.
\end{eqnarray}
It can be shown that for fixed $\Omega$, the total variation $\tmop{TV}$ is
well-defined on $L^1_{\tmop{loc}} (\Omega)^l$ and lower semicontinuous in
$\tmop{BV} (\Omega)^l$ w.r.t. the $L^1_{\tmop{loc}} (\Omega)^l$ topology
{\cite[3.5,3.6]{Ambrosio2000}}; hence also in $L^1 (\Omega)^l$ due to the
boundedness of $\Omega$.

\noindent{\emph{Compactness.}} For $\tmop{BV}$, instead of the norm topology induced by
\begin{eqnarray}
  \|u\|_{\tmop{BV}} & \assign & \int_{\Omega} \|u\| d x + \tmop{TV} (u)\,,
\end{eqnarray}
which makes $\tmop{BV} (\Omega)^l$ a Banach space but is often too strong,
one frequently uses the weak* convergence: Define $u^{(k)} \rightarrow u$ \emph{weakly*} iff
\begin{enumerate}
  \item $u, u^{(k)} \in \tmop{BV} (\Omega)^l \; \forall k \in \mathbbm{N}$,
  
  \item $u^{(k)} \rightarrow u$ in $L^1 (\Omega)$ and
  
  \item $D u^{(k)} \rightarrow D u$ weakly* in measure, i.e.
  \begin{eqnarray}
    & & \forall v \in C_0 (\Omega) : \lim_{k \rightarrow \infty}
    \int_{\Omega} v \, d D u^{(k)} = \int_{\Omega} v \, d D u\,. 
  \end{eqnarray}
\end{enumerate}
For $u, u^{(k)} \in \tmop{BV} (\Omega)^l$ this is equivalent to $u^{(k)} \rightarrow
u$ in $L^1 (\Omega)^l$, and $(u^{(k)})$ being uniformly bounded in $\tmop{BV} (\Omega)^l$, i.e.
there exists a constant $C < \infty$ such that $\|u^{(k)} \|_{\tmop{BV}} \leqslant C\,\forall k \in \mathbbm{N}$
{\cite[3.13]{Ambrosio2000}}. For the weak* topology in $\tmop{BV}$,
a compactness result holds {\cite[3.23]{Ambrosio2000}}:
If $(u^{(k)}) \subset \tmop{BV} (\Omega)^l$ and $(u^{(k)})$ is uniformly bounded in $\tmop{BV}(\Omega)^l$,
then $(u^{(k)})$ contains a subsequence weakly*-con\-verg\-ing to some $u \in \tmop{BV} (\Omega)^l$.


\subsection{General Functionals on $\tmop{BV}$}

We will now review how general functionals depending on the distributional
gradient $D u$ can be defined. Recall that for any $u \in \tmop{BV} (\Omega)^l$ the distributional gradient $D u$ is a measure.
Moreover, it can be uniquely decomposed into three mutually singular measures
\begin{eqnarray}
  D u & = & D^a u + D^j u + D^c u\,,
\end{eqnarray}
that is: 
An {\tmem{absolutely continuous}} part $D^a$, \ the {\tmem{jump}} part $D^j$, and
the {\tmem{Cantor}} part $D^c$. Mutual singularity refers to the fact that
$\Omega$ can be partitioned into three subsets, such that each of the measures is concentrated
on exactly one of the sets.
We will give a short intuitive explanation, see {\cite[3.91]{Ambrosio2000}} for the full definitions.

The $D^a$ part is absolutely continuous with respect to the $\pot$-dimensional Lebesgue measure $\mathcal{L}^d$,
i.e. it vanishes on any $\mathcal{L}^d$-negligible set. It captures the
``smooth'' variations of $u$: in any neighborhood where $u$ has a (possibly weak) Jacobian $\mathcal{J}(u)$,
the jump and Cantor parts vanish and
\begin{equation}
  D u = D^a u = \mathcal{J}(u) \mathcal{L}^d\,.
\end{equation}

The jump part $D^j$ is concentrated on the set of points where locally $u$ jumps between two values
$u^-$ and $u^+$ along a $(d-1)$-dimensional hypersurface with normal $\nu_u \in S^{d-1}$ (unique up to a change of sign).
In fact, there exists a \emph{jump set} $J_u$ of discontinuities of $u$ and Borel functions $u^+, u^- : J_u \rightarrow \mathbbm{R}^l$ and $\nu_u : J_u \rightarrow S^{d - 1}$ such that {\cite[3.78, 3.90]{Ambrosio2000}}
\begin{equation}
    D^j u = D u \restrict J_u = \nu_u (u^+ - u^-)^{\top} \mathcal{H}^{d - 1} \restrict J_u\,,
\end{equation}
where $\mathcal{H}^{d - 1} \restrict J_u$ denotes the restriction of the $(d-1)$-di\-men\-sional Hausdorff measure
on the jump set $J_u$, i.e. $(\mathcal{H}^{d - 1} \restrict J_u)(A) = \mathcal{H}^{d - 1}(J_u \cap A)$ for measurable
sets $A$. The Cantor part $D^c$ captures anything that is left.

As an important consequence of the mutual singularity, the total variation decomposes
into $|D u| = |D^a u| + |D^j u| + |D^c u|$. Using this idea, one can define functionals depending on the distributional gradient $D u$ \cite[Prop.~2.34]{Ambrosio2000}. For $u \in \tmop{BV} (\Omega)^l$ and some convex, lower semi-continuous $\Psi : \mathbbm{R}^{d \times l} \rightarrow \mathbbm{R}$, define
\begin{eqnarray}
  J (u) & \assign & \int_{\Omega} \Psi (Du) \assign \int_{\Omega} \Psi (\mathcal{J}(u) (x)) d x + \ldots \nonumber\\
  & & \int_{J_u} \Psi_{\infty} \left( \nu_u (x) \left( u^+ (x) - u^- (x) \right)^{\top} \right) d\mathcal{H}^{d - 1} + \ldots  \nonumber\\
  & & \int_{\Omega} \Psi_{\infty} \left( \frac{D^c u}{|D^c u|} \right) d|D^c u|\,.\label{eq:generalju}
\end{eqnarray}
Here $\Psi_{\infty}$ is the recession function $\Psi_{\infty} (x) = \lim_{t
\rightarrow \infty} \frac{\Psi (tx)}{t}$ of $\Psi$,
and $D^c u / |D^c u|$ denotes the \emph{polar decomposition} of $D^c u$, which is
the density of $D^c u$ with respect to its total variation measure $|D^c u|$.
If $\Psi$ is positively homogeneous, $\Psi^{\infty} = \Psi$ holds, and
\begin{eqnarray}
  J(u) & = & \int_{\Omega} \Psi \left( \frac{Du}{|Du|} \right) d|Du|.\label{eq:poshomogju}
\end{eqnarray}
From \eqref{eq:generalju} it becomes clear that the meaning of $\Psi$ acting
on the {\tmem{Jacobian}} of $u$ is extended to the
jump set as acting on the {\tmem{difference}} of the left and right side
limits of $u$ at the discontinuity. This is a key point: by switching to
the measure formulation, one can handle noncontinuous functions as well.


\section{Necessary Properties of the Interaction Potential}\label{sec:axiomatic}

Before applying the methods above to the labeling problem, we start with some
basic considerations about the regularizer and the interaction potential $\pot$.
We begin by formalizing the requirements on the regularizer of the relaxed problem as mentioned in
the introduction. Let us assume we are given a general interaction potential
$\pot : \{1, \ldots, l\}^2 \rightarrow \mathbbm{R}$. Intuitively, $\pot (i, j)$
assigns a weight to switching between label $i$ and label $j$. We require
\begin{eqnarray}
  \pot (i, j) & > & 0, i \neq j\,,  \label{eq:dijinj}
\end{eqnarray}
but no other metric properties (i.e. symmetry or triangle inequality) for now.
Within this work, we postulate that the regularizer should satisfy
\begin{enumerate}
  \item[(P1)] \label{it:postconvhom}$J$ is convex and positively homogeneous
  on $\tmop{BV}(\Omega)^l$.
  
  \item[(P2)] \label{it:postconstant}$J (u) = 0$ for any constant $u$, i.e.
  there is no penalty for constant labelings.
  
  \item[(P3)] \label{it:postper}For any partition of $\Omega$ into
  two sets $S, S^\complement$ with $\tmop{Per} (S) < \infty$, and any
  $i, j \in \{1, \ldots, l\}$,
  \begin{eqnarray}
    \left. J (e^i \chi_S + e^j \chi_{S^\complement} \right) & = & \pot (i, j) \tmop{Per}(S)\,,\label{eq:goal}
  \end{eqnarray}
  i.e. a change from label $i$ to label $j$ gets penalized proportional to
  $\pot (i, j)$ as well as the perimeter of the interface. Note that this implies
  that $J$ is isotropic (i.e. rotationally invariant).
\end{enumerate}
We require convexity in (P1) in order to render global optimization
tractable. Indeed, if $J$ is convex, the overall objective function
(\ref{eq:problemrelaxed}) will be convex as well due to the linearization of
the data term. Positive homogeneity is included as it allows $J$ to be
represented as a support function (i.e. its convex conjugate is an indicator
function and $J = \sigma_{\mathcal{D}}$ for some closed convex set
$\mathcal{D}$), which will be exploited by our optimization methods.

Requirements (P3) and (P2) formalize the principle that the multilabeling
problem should reduce to the classical continuous cut (\ref{eq:continuouscut-binary}) in the two-class case.
This allows to include boundary length-based terms in the regularizer that can additionally be weighted
depending on the labels of the adjoining region (Fig.~\ref{fig:leafdifferentreg}).
\begin{figure}%
\centering
$\begin{array}{cc}%
\includegraphics[width=.30\columnwidth]{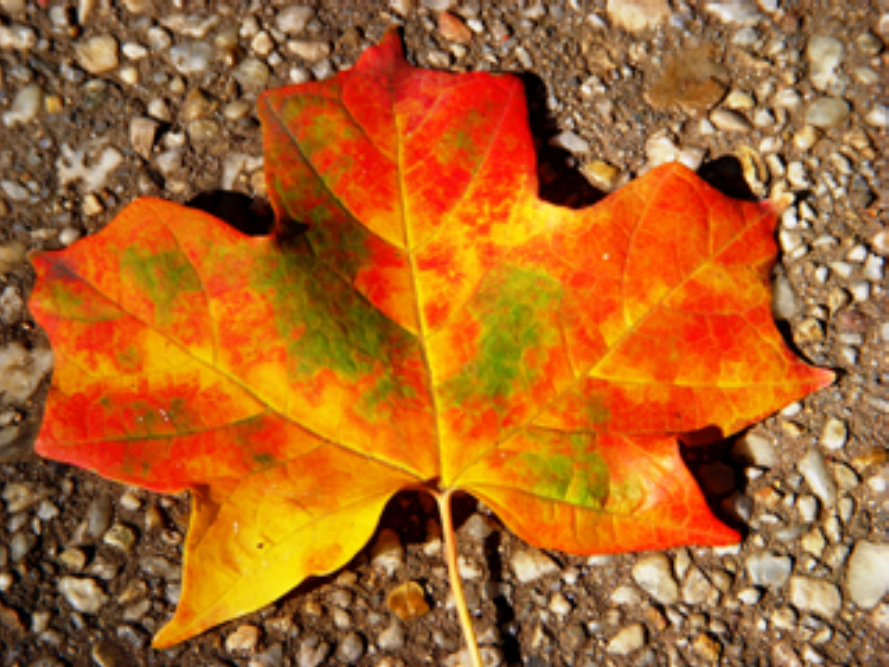} &
\includegraphics[width=.30\columnwidth]{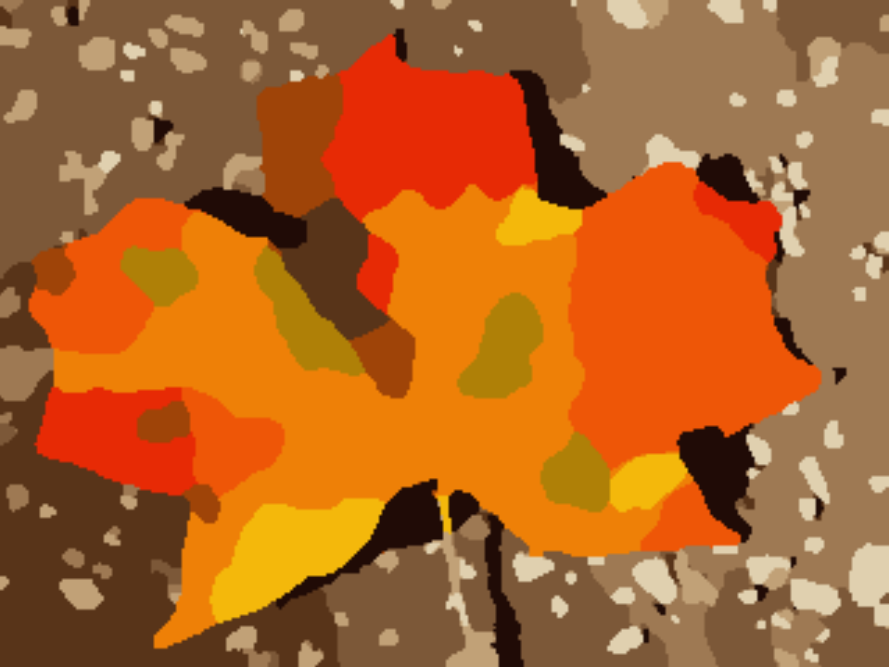} \\
\includegraphics[width=.30\columnwidth]{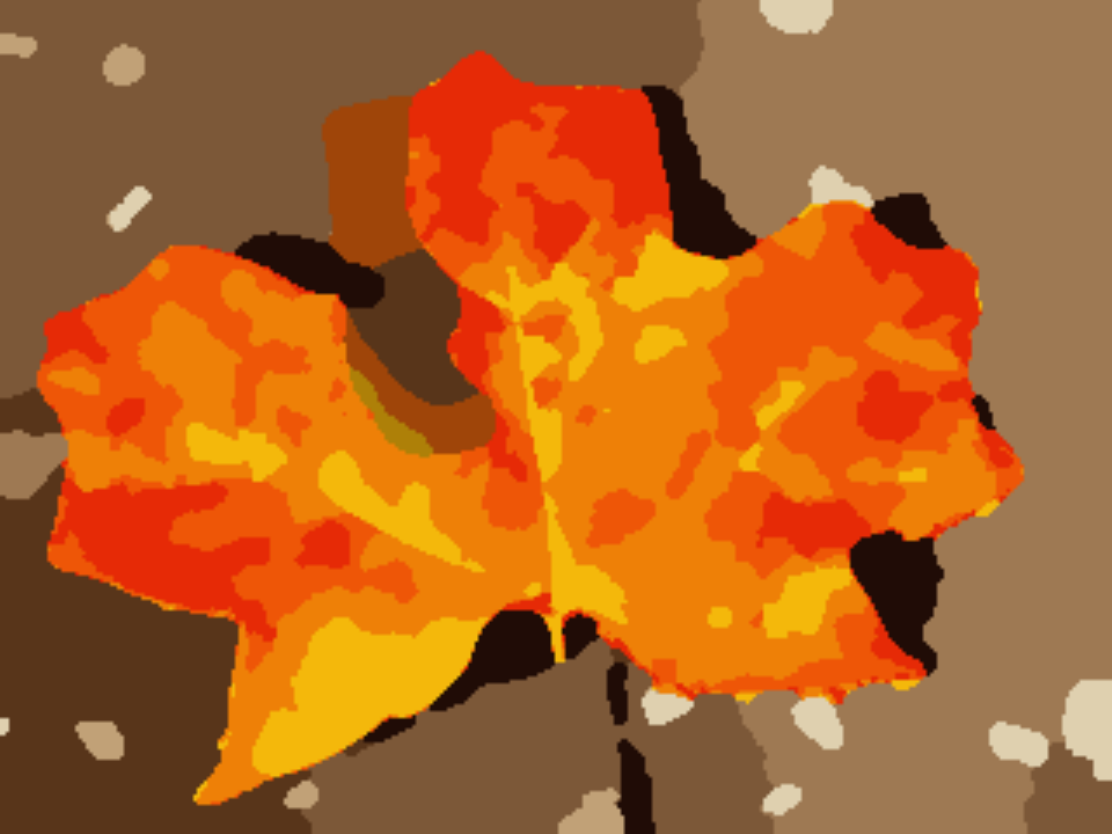} &
\includegraphics[width=.30\columnwidth]{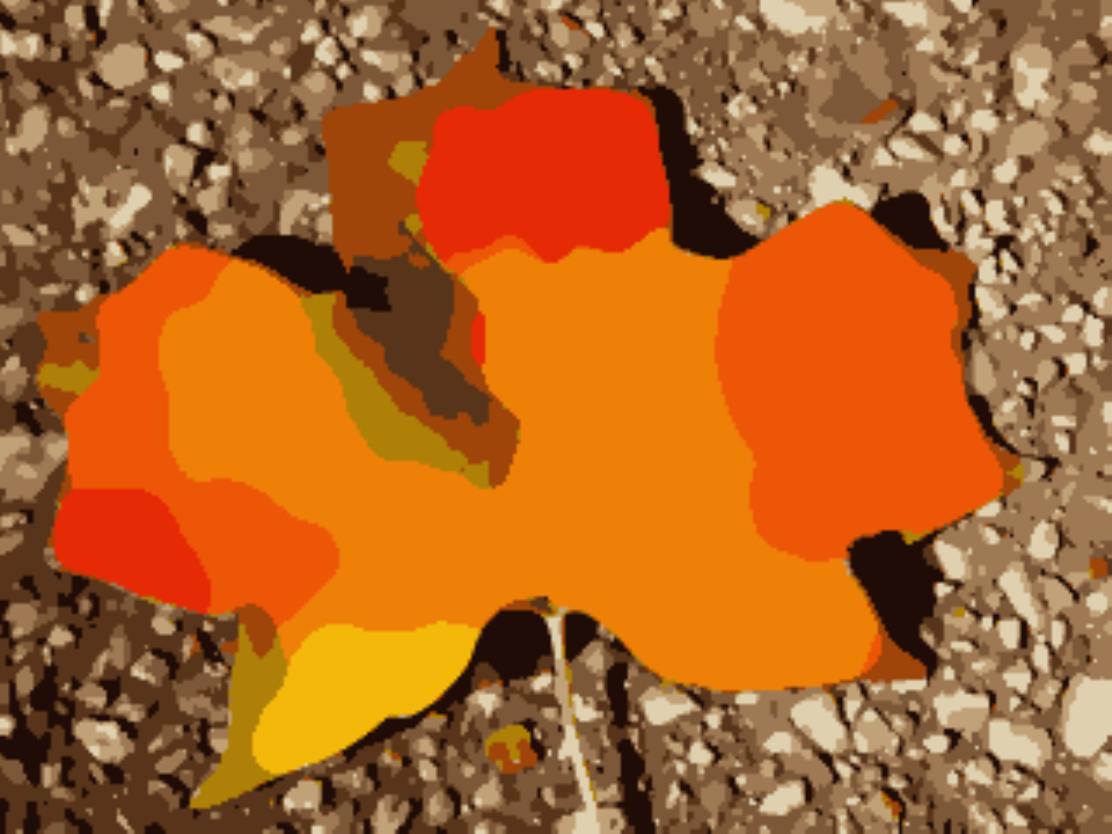}
\end{array}$%
\caption{Effect of choosing different interaction potentials. \textbf{Top row:} The
original image (left) is segmented into $12$ regions corresponding to prototypical colors vectors.
The Potts interaction potential penalizes the boundary length independently of the labels (right), which leads
to a uniformly smooth segmentation. \textbf{Bottom row:} By modifying the interaction potential,
the regularization strength is selectively adjusted to suppress
background (left) or foreground (right) structures while allowing
for fine details in the other regions.
}%
\label{fig:leafdifferentreg}
\end{figure}
Together, these requirements pose a natural restriction on $\pot$:

\begin{proposition}
  \label{prop:metricd}Let $(J, \pot)$ satisfy \tmtextit{(P1) -- (P3)}. Then $\pot$ must necessarily be a metric, i.e. for all $i, j, k \in \{1, \ldots, l\}$,
  \begin{enumerate}
    \item $\pot (i, i) = 0$,
    
    \item $\pot (i, j) = \pot (j, i), \; \forall i \neq j$,
    
    \item $\pot$ is subadditive: $\pot (i, k) \leqslant \pot (i, j) + \pot (j, k) .$
  \end{enumerate}
\end{proposition}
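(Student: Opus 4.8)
The plan is to read off all three properties from requirement (P3) evaluated on a single convenient configuration, combined with the algebraic structure forced by (P1)--(P2). Throughout I fix a reference set $S$ with $0 < \tmop{Per}(S) < \infty$; for instance $S = \{x \in \Omega \mid x_1 < 1/2\}$, whose relative perimeter in $\Omega = (0,1)^d$ equals $1$. Every claim is extracted from (P3) applied to two-label functions of the form $e^i \chi_S + e^j \chi_{S^\complement}$, whose cost is exactly $\pot(i,j)\tmop{Per}(S)$.

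For the first claim I set $j = i$ in (P3). Then $e^i \chi_S + e^i \chi_{S^\complement} = e^i$ is a \emph{constant} function, so its cost is $0$ by (P2); comparing with (P3) gives $\pot(i,i)\tmop{Per}(S) = 0$, and dividing by $\tmop{Per}(S) > 0$ yields $\pot(i,i) = 0$. For symmetry I would evaluate $J$ on the single function $u = e^i \chi_S + e^j \chi_{S^\complement}$ in two ways: reading it as the partition $\{S, S^\complement\}$ with labels $(i,j)$ gives $J(u) = \pot(i,j)\tmop{Per}(S)$, while reading it as the partition $\{S^\complement, S\}$ with labels $(j,i)$ gives $J(u) = \pot(j,i)\tmop{Per}(S^\complement)$. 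Since $\chi_{S^\complement} = 1 - \chi_S$ has the same total variation as $\chi_S$, we have $\tmop{Per}(S^\complement) = \tmop{Per}(S)$, and cancelling this positive factor gives $\pot(i,j) = \pot(j,i)$.

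The subadditivity (triangle inequality) is the main point. The key is first to extract \emph{subadditivity of $J$} from (P1): for convex, positively homogeneous $J$ one has $J(a + b) = 2J(\tfrac{1}{2}a + \tfrac{1}{2}b) \le J(a) + J(b)$. The decisive observation is then the pointwise identity
\[
  e^i \chi_S + e^k \chi_{S^\complement} = \bigl(e^i \chi_S + e^j \chi_{S^\complement}\bigr) + \bigl(e^j \chi_S + e^k \chi_{S^\complement}\bigr) - e^j \,,
\]
which is verified by checking it separately on $S$ (both sides equal $e^i$) and on $S^\complement$ (both sides equal $e^k$). Applying subadditivity twice and then (P2) to discard the constant term $-e^j$, I obtain
\[
  \pot(i,k)\tmop{Per}(S) = J\bigl(e^i \chi_S + e^k \chi_{S^\complement}\bigr) \le \pot(i,j)\tmop{Per}(S) + \pot(j,k)\tmop{Per}(S) + J(-e^j)\,,
\]
and since $J(-e^j) = 0$, dividing by $\tmop{Per}(S)$ gives $\pot(i,k) \le \pot(i,j) + \pot(j,k)$.

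The only real obstacle is spotting the decomposition identity above; once it is in hand, the triangle inequality reduces to subadditivity of $J$ plus the vanishing of $J$ on constants. A secondary point to keep clean is the perimeter invariance $\tmop{Per}(S) = \tmop{Per}(S^\complement)$ used for symmetry, which is immediate from $\chi_{S^\complement} = 1 - \chi_S$ and the fact that $\tmop{TV}$ is unchanged by adding a constant. I do not expect to need lower semicontinuity of $J$ or any limiting thin-strip construction, since the decomposition operates directly at a single interface.
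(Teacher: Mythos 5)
Your proof is correct and takes essentially the same approach as the paper: items 1 and 2 are argued identically, and for subadditivity your decomposition $e^i \chi_S + e^k \chi_{S^\complement} = \bigl(e^i \chi_S + e^j \chi_{S^\complement}\bigr) + \bigl(e^j \chi_S + e^k \chi_{S^\complement}\bigr) - e^j$ is exactly the identity underlying the paper's argument, which writes it in averaged form as $\tfrac{1}{2}\bigl(e^i \chi_S + e^k \chi_{S^\complement}\bigr) + \tfrac{1}{2} e^j = \tfrac{1}{2}\bigl(e^i \chi_S + e^j \chi_{S^\complement}\bigr) + \tfrac{1}{2}\bigl(e^j \chi_S + e^k \chi_{S^\complement}\bigr)$. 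The only cosmetic difference is that you apply subadditivity of $J$ directly to the three-term sum and discard $J(-e^j) = 0$ via (P2), whereas the paper first packages the same manipulation into the intermediate equality $J(u) = 2J(u/2 + c/2)$ for constant $c$ and then uses subadditivity once.
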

\begin{proof}
1. follows from (P2) and (P3) by choosing $i = j$ and $S$ with $\tmop{Per} (S) > 0$. Symmetry in 2. is
obtained from (P3) by replacing $S$ with $S^\complement$, as $\tmop{Per} (S) =
\tmop{Per} (S^\complement)$. To show 3., first note that $J(u) = 2 J(u/2+c/2)$ for any constant $c \in \mathbbm{R}^l$ and all $u\in\tmop{BV}(\Omega)^l$, since $J(u) = J(u+c-c) \leqslant 2 J((u+c)/2)+J(-c/2) = 2 J(u/2+c/2) \leqslant J(u)+J(c) = J(u)$.
Fix any set $S$ with perimeter
\begin{equation}
  c \assign \tmop{Per} (S) > 0\,.
\end{equation}
Then, using the above mentioned fact and the positive homogeneity of $J$,
\begin{eqnarray}
c \pot (i, k) & = & J \left( e^i \chi_S + e^k \chi_{S^\complement} \right)\\
& = & 2 J \left( \frac{1}{2} \left( e^i \chi_S + e^k \chi_{S^\complement}  \right) + \frac{1}{2} e^j \right)\\
& = & 2 J \left( \frac{1}{2} \left( e^i \chi_S + e^j \chi_{S^\complement} \right) +
 \frac{1}{2} \left( e^j \chi_S + e^k \chi_{S^\complement} \right) \right)\\
& \leqslant & J \left( e^i \chi_S + e^j \chi_{S^\complement} \right) + J \left( e^j \chi_S + e^k \chi_{S^\complement}\right)\\
& = & c \left( \pot \left( i, j \right) + \pot (j, k) \right)\,.
\end{eqnarray}
\end{proof}

Note that if the requirement (\ref{eq:dijinj}) is dropped, it is easy to show that if
  $\pot (i, j) = 0$ for some $i \neq j$, then $\pot (i, k) = \pot (j, k)$ for
  {\tmem{any}} $k$. In this case the classes $i$ and $j$ can be collapsed into a single class as
  far as the regularizer is concerned. The decision between label $i$ and $j$ is
  then completely local, i.e. depends only on the data term and can be postponed to
  a post-processing step by modifying the data term to
  \begin{eqnarray}
    s_i' (x) \assign s_j' (x) & \assign & \min \{s_i (x), s_j (x)\}\,.
  \end{eqnarray}
Thus (\ref{eq:dijinj}) is not a real limitation and can be always assured.
As a side note, it can be shown that, under some assumptions and in the space
of piecewise constant functions, the subadditivity of $\pot$ already follows if
$J$ is required to be lower semicontinuous {\cite[p.88]{Braides2002}}.

Proposition \ref{prop:metricd} implies that for non-metric $\pot$, we generally
cannot expect to find a regularizer satisfying (P1)--(P3). Note that here $J$
is not required to be of any particular form. In the following sections, we will
show that on the other hand, if $\pot$ is metric as in Proposition
\ref{prop:metricd}, such a regularizer can always be constructed.
This implies that the interaction potentials allowing for a regularizer that
satisfies (P1)--(P3) are exactly the metrics.


\section{Constructing Regularizers from the Interaction Potential}\label{sec:constructive}

We study next how to construct regularizers on $\tmop{BV} (\Omega)^l$
satisfying (P1)--(P3). As in (\ref{eq:generalju}) we set
\begin{equation}
J(u) \assign \int_{\Omega} \Psi(D u)\,.\label{eq:generaljurep}
\end{equation}
We additionally require $\Psi : \mathbbm{R}^{d \times l}
\rightarrow \mathbbm{R}_{\geqslant 0}$ to be a support function, i.e. for some closed, convex $\emptyset \neq \mathcal{D}_{\tmop{loc}} \subseteq \mathbbm{R}^{d\times l}$,
\begin{eqnarray}
  & & \Psi (z) = \sigma_{\mathcal{D}_{\tmop{loc}}}(z) = \sup_{v (x) \in \mathcal{D}_{\tmop{loc}}} \langle z, v (x) \rangle \,.
\end{eqnarray}
 As a support function, $\Psi$ coincides with its recession function $\Psi_{\infty}$, thus
\begin{eqnarray}
  J (u) & = & \int_{\Omega} \Psi (\mathcal{J}(u) (x)) d x + \ldots \nonumber\\
        &   & \int_{J_u} \Psi \left( \nu_u (x) \left( u^+ (x) - u^- (x) \right)^{\top} \right) d\mathcal{H}^{d - 1} + \ldots \nonumber\\
        &   & \int_{\Omega} \Psi \left( \frac{D^c u}{|D^c u|} \right) d|D^c u|\,. \label{eq:psidu}
\end{eqnarray}
Also, we have an equivalent dual formulation in analogy to (\ref{eq:tvdef-dual}),
\begin{eqnarray}
  J(u) = \sup \{ \int_{\Omega} & & \langle u\,, \tmop{Div} v \rangle d x | v \in C_c^{\infty}(\Omega)^{d \times l}\,,\;
  v(x) \in \mathcal{D}_{\tmop{loc}} \forall x \in \Omega \}\,. \label{eq:jdef-dual}
\end{eqnarray}
For simplicity we will also assume that $\mathcal{D}_{\tmop{loc}}$ is rotationally invariant
along the image dimensions, i.e. for any rotation matrix $R \in \mathbbm{R}^{d
\times d}$,
\begin{eqnarray}
  v = (v^1, \ldots, v^l) \in \mathcal{D}_{\tmop{loc}} & \Leftrightarrow & (Rv^1, \ldots, Rv^l) \in \mathcal{D}_{\tmop{loc}}\,.\label{eq:dlocrotinvar}
\end{eqnarray}
This is equivalent to $J$ being isotropic.

We will now show under which circumstances a minimizer exists in $\tmop{BV}(\Omega)^l$, and then
see how the approach can be used to construct regularizers for specific interaction potentials.


\subsection{Existence of Minimizers}

The complete problem considered here is of the form (cf.
(\ref{eq:problemrelaxed}) and (\ref{eq:psidu}))
\begin{equation}
  \inf_{u \in \mathcal{C}} f (u)\,, \quad f (u) \assign \int_{\Omega} \langle u, s \rangle d x + J (u) \label{eq:fullproblemdef}
\end{equation}
where $J (u) = \int_{\Omega} \Psi(D u)$ as in (\ref{eq:generaljurep}),
\begin{equation}
  \Psi (z) = \sup_{v (x) \in \mathcal{D}_{\tmop{loc}}} \langle z, v (x) \rangle
\end{equation}
for some closed convex $\mathcal{D}_{\tmop{loc}} \neq \emptyset$, and
\begin{eqnarray}
  &  & \mathcal{C} = \{u \in \tmop{BV} (\Omega)^l| u (x) \in \Delta_l \, \mathcal{L}^d \text{-a.e. } x \in \Omega\}\,.
\end{eqnarray}
Note that $f$ is convex, as it is the pointwise supremum of affine functions (\ref{eq:jdef-dual}).
Again for simplicity we set $\Omega = (0, 1)^d$. Then we have the following

\begin{proposition}
  \label{prop:flscness} Let $\mathcal{D}_{\tmop{loc}} \neq \emptyset$ be closed convex, $\Psi = \sigma_{\mathcal{D}_{\tmop{loc}}}$, $s \in L^1 (\Omega)^l$, and
  \begin{equation}
  	f(u) = \int_{\Omega} \langle u, s \rangle d x + \int_{\Omega} \Psi(D u)\,.
  \end{equation}  
  Additionally assume that $\mathcal{D}_{\tmop{loc}} \subseteq \mathcal{B}_{\rho_u}(0) \subseteq \mathbbm{R}^{d \times l}$ for
  some $0 < \rho_u$. Then $f$ is lower semicontinuous in $\tmop{BV} (\Omega)^l$ with respect to $L^1$ convergence.
\end{proposition}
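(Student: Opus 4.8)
The plan is to split $f$ into its two summands and treat them separately, using that lower semicontinuity survives a sum as soon as one summand is continuous. Write $f = f_{\mathrm{data}} + J$ with $f_{\mathrm{data}}(u) = \int_\Omega \langle u, s\rangle\, dx$ and $J(u) = \int_\Omega \Psi(Du)$. First I would record the role of the boundedness hypothesis: $\mathcal{D}_{\tmop{loc}} \subseteq \mathcal{B}_{\rho_u}(0)$ forces $\Psi = \sigma_{\mathcal{D}_{\tmop{loc}}}$ to be finite-valued with $\Psi(z) \leqslant \rho_u \|z\|$, so that $J(u) \leqslant \rho_u\, \tmop{TV}(u) < \infty$ for every $u \in \tmop{BV}(\Omega)^l$. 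Thus $f$ is a genuine real-valued functional on all of $\tmop{BV}(\Omega)^l$ and the statement is not vacuous. The goal is then to show that $f_{\mathrm{data}}$ is continuous and $J$ is lower semicontinuous with respect to $L^1$ convergence, and to conclude, for any sequence $u^{(k)} \to u$ in $L^1$, that $\liminf_k f(u^{(k)}) = \lim_k f_{\mathrm{data}}(u^{(k)}) + \liminf_k J(u^{(k)}) \geqslant f_{\mathrm{data}}(u) + J(u) = f(u)$, where convergence of the first term is exactly what lets me split the $\liminf$ and apply lower semicontinuity of $J$ to the second.

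The data term is the routine part: $f_{\mathrm{data}}$ is a linear functional that is continuous with respect to $L^1$ convergence (the functional induced by $s$ being $L^1$-continuous by Hölder's inequality), so $f_{\mathrm{data}}(u^{(k)}) \to f_{\mathrm{data}}(u)$. The point to stress is that this step uses no pointwise bound $u(x) \in \Delta_l$ and no uniform $\tmop{BV}$-norm bound on the iterates; it is valid on all of $\tmop{BV}(\Omega)^l$, not merely on the simplex-constrained set $\mathcal{C}$.

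The heart of the proof is the lower semicontinuity of $J$, and here I would argue from the dual representation \eqref{eq:jdef-dual} rather than from the integral form \eqref{eq:psidu}. That representation exhibits $J$ as the pointwise supremum, over the fixed admissible family $\{v \in C_c^\infty(\Omega)^{d \times l} : v(x) \in \mathcal{D}_{\tmop{loc}}\ \forall x \in \Omega\}$, of the linear functionals $L_v(u) = \int_\Omega \langle u, \tmop{Div}\, v\rangle\, dx$. For each fixed admissible $v$, the field $\tmop{Div}\, v$ lies in $C_c^\infty(\Omega)^l \subseteq L^\infty(\Omega)^l$, so $L_v$ is a continuous linear functional with respect to $L^1$ convergence. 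A pointwise supremum of a family of continuous (hence lower semicontinuous) functionals is lower semicontinuous, and since the family is independent of $u$, the supremum $J = \sup_v L_v$ is a well-defined $L^1$-lower semicontinuous functional on all of $\tmop{BV}(\Omega)^l$. Combined with the continuity of $f_{\mathrm{data}}$ this yields the claim.

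The main obstacle is not the supremum argument, which is immediate once the dual formula is available, but rather justifying that \eqref{eq:jdef-dual} genuinely equals the measure-theoretic functional $\int_\Omega \Psi(Du)$ of \eqref{eq:generalju}--\eqref{eq:psidu} for \emph{every} $u \in \tmop{BV}(\Omega)^l$, including those whose gradient measure has nontrivial jump and Cantor parts. This equivalence --- in direct analogy with the dual characterization \eqref{eq:tvdef-dual} of $\tmop{TV}$, and resting on the support-function/relaxation results cited after \eqref{eq:generalju} --- is exactly what lets lower semicontinuity be established globally, since it expresses $J$ through smooth, compactly supported test fields without ever referring to pointwise values of $u$. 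The boundedness of $\mathcal{D}_{\tmop{loc}}$ enters only to keep $\Psi$, and hence $J$, finite. As the excerpt already states \eqref{eq:jdef-dual} as the equivalent dual formulation of $J$, I would invoke it directly, whereupon the proof reduces to the two short steps above.
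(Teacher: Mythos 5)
Your handling of the regularizer is correct but follows a genuinely different route from the paper. The paper's proof treats $J$ via the quasiconvex relaxation theorem \cite[Thm.~5.47]{Ambrosio2000}, checking convexity (hence quasiconvexity) of $\Psi$ and the linear growth bound $0 \leqslant \Psi(z) \leqslant \rho_u \|z\|$ supplied by $\mathcal{D}_{\tmop{loc}} \subseteq \mathcal{B}_{\rho_u}(0)$; lower semicontinuity then comes out of $f$ being the relaxation of its restriction to $C^1$. You instead argue from the dual formulation \eqref{eq:jdef-dual}: $J$ is the pointwise supremum, over a fixed family of test fields, of the functionals $u \mapsto \int_\Omega \langle u, \tmop{Div} v \rangle\, dx$ with $\tmop{Div} v \in L^{\infty}$, each of which is $L^1$-continuous, so $J$ is $L^1$-lower semicontinuous with no growth or compactness input at all. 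This is shorter and more elementary --- it is the classical argument for $\tmop{TV}$ itself --- but, as you correctly flag, it shifts the entire burden onto the identity between \eqref{eq:jdef-dual} and the measure-theoretic definition \eqref{eq:psidu} on all of $\tmop{BV}(\Omega)^l$, which the paper merely asserts ``in analogy to'' \eqref{eq:tvdef-dual} and never proves. That identity is a Goffman--Serrin-type duality for convex, positively homogeneous functionals of measures; it is true and provable without circularity (it does not presuppose the semicontinuity you are after), but it requires $0 \in \mathcal{D}_{\tmop{loc}}$ (equivalently $\Psi \geqslant 0$, which the paper's proof also uses implicitly) and is of comparable depth to the relaxation theorem the paper cites. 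So your route is sound modulo a citation the paper itself does not supply, whereas the paper's route rests on an explicitly referenced theorem.

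The genuine flaw is in your data-term step. H\"older's inequality pairs $L^1$ convergence of $u^{(k)}$ with $s \in L^{\infty}$; here $s$ is only assumed to lie in $L^1(\Omega)^l$, and then $u \mapsto \int_\Omega \langle u, s \rangle\, dx$ is \emph{not} continuous under $L^1$ convergence on all of $\tmop{BV}(\Omega)^l$ --- your emphasized remark that no pointwise bound on $u$ is needed is exactly backwards. Concretely, take $d = l = 1$, $\Omega = (0,1)$, $s(x) = -x^{-1/2} \in L^1(\Omega)$ and $u^{(k)} = \sqrt{k}\, \chi_{(0,1/k)} \in \tmop{BV}(\Omega)$: then $\|u^{(k)}\|_{L^1} = k^{-1/2} \rightarrow 0$ while $\int_\Omega u^{(k)} s\, dx = -2$ for every $k$. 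Choosing $\mathcal{D}_{\tmop{loc}} = \{0\}$, which is admissible under the stated hypotheses, gives $J \equiv 0$ and $f(u^{(k)}) = -2 < 0 = f(0)$, so lower semicontinuity of $f$ genuinely fails without a further restriction. The repair is to assume $s \in L^{\infty}$, or to restrict to sequences uniformly bounded in $L^{\infty}$ --- in particular sequences in $\mathcal{C}$, where $u^{(k)}(x) \in \Delta_l$ gives $|\langle u^{(k)}(x), s(x) \rangle| \leqslant \|s(x)\|_1$ and continuity of the data term follows by dominated convergence along a.e.-convergent subsequences. In fairness, the paper's own proof also just asserts ``the data term is continuous,'' and the proposition is only ever invoked (in Prop.~\ref{prop:minexistence}) along sequences in $\mathcal{C}$, where this fix is automatic; but your proposal, unlike the paper, makes the unrestricted claim explicit and backs it with an invalid appeal to H\"older.
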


\begin{proof}
  As the data term is continuous, it suffices to show that the regularizer is lower semicontinuous. This is an
  application of {\cite[Thm.~5.47]{Ambrosio2000}}.
  In fact, the theorem shows that $f$ is the relaxation of $\tilde{f} :
  C^1 (\Omega)^l \rightarrow \mathbbm{R}$,
  \begin{equation}
    \tilde{f} (u) \assign \int_{\Omega} \langle u, s \rangle d x + \int_{\Omega} \Psi (\mathcal{J} u (x)) d x\,,
  \end{equation}
  on $\tmop{BV} (\Omega)^l$ w.r.t. $L^1_{\tmop{loc}}$ (thus here $L^1$)
  convergence and thus lower semicontinuous in $\tmop{BV} (\Omega)^l$. To
  apply the theorem, we have to show that $\Psi$ is quasiconvex in the sense of
  \cite[5.25]{Ambrosio2000}, which holds as it is convex by construction.
  The other precondition is (at most) linear growth of $\Psi$, which holds with $0 \leqslant \Psi (x) \leqslant \rho_u \|x\|.$
\end{proof}

\begin{proposition}\label{prop:minexistence}
  Let $f, \Psi, s$ as in Prop.~\ref{prop:flscness} and additionally assume that
  \begin{equation}
    \mathcal{B}_{\rho_l}(0) \cap \{(v^1,\ldots,v^l) | \sum_i v^i = 0)\} \subseteq \mathcal{D}_{\tmop{loc}} \subseteq \mathcal{B}_{\rho_u}(0)\label{eq:minexistence}
  \end{equation}
  for some $0 < \rho_l \leqslant \rho_u$. Then the problem
  \begin{equation}
    \min_{u \in \mathcal{C}} f (u)
  \end{equation}
  has at least one minimizer.
\end{proposition}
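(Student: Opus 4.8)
The plan is to apply the direct method of the calculus of variations, combining the lower semicontinuity already established in Proposition~\ref{prop:flscness} with a coercivity estimate that yields uniform $\tmop{BV}$-bounds on a minimizing sequence. First I would check that $m \assign \inf_{u \in \mathcal{C}} f(u)$ is finite. It is bounded below because $\langle u(x), s(x)\rangle \geqslant \min_i s_i(x)$ pointwise whenever $u(x) \in \Delta_l$, so the data term is at least $-\|s\|_{L^1} > -\infty$ (using $s \in L^1(\Omega)^l$), while $J \geqslant 0$ since $\Psi = \sigma_{\mathcal{D}_{\tmop{loc}}} \geqslant 0$ (note $0 \in \mathcal{D}_{\tmop{loc}}$ by \eqref{eq:minexistence}); and it is finite since any constant labeling $u \equiv e^1 \in \mathcal{C}$ gives $J(u) = 0$ and a finite data term.

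The crucial step --- and the one I expect to be the main obstacle --- is to convert the lower inclusion in \eqref{eq:minexistence} into a coercivity bound $J(u) \geqslant \rho_l \tmop{TV}(u)$ valid on $\mathcal{C}$. The mechanism is that feasibility forces $\sum_{i=1}^l u_i \equiv 1$, hence $\sum_{i=1}^l D u_i = 0$, so that the polar density $z \assign D u / |D u|$ lies $|Du|$-a.e.\ in the subspace $W \assign \{(v^1,\ldots,v^l) \mid \sum_i v^i = 0\}$. For such $z$ the inclusion $\mathcal{B}_{\rho_l}(0) \cap W \subseteq \mathcal{D}_{\tmop{loc}}$ gives $\Psi(z) = \sigma_{\mathcal{D}_{\tmop{loc}}}(z) \geqslant \sup_{v \in \mathcal{B}_{\rho_l}(0) \cap W} \langle z, v\rangle = \rho_l \|z\| = \rho_l$, because the unconstrained maximizer $\rho_l z/\|z\|$ of $\langle z, \cdot\rangle$ over $\mathcal{B}_{\rho_l}(0)$ already lies in $W$. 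Integrating against $|Du|$ and using the positively homogeneous representation \eqref{eq:poshomogju} of $J$ then yields $J(u) \geqslant \rho_l |Du|(\Omega) = \rho_l \tmop{TV}(u)$. The delicate points are to justify that the density constraint holds $|Du|$-almost everywhere (not merely on the absolutely continuous part) and that the support-function lower bound may legitimately be restricted to the mean-zero subspace $W$.

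With coercivity in hand the remainder is routine. Taking a minimizing sequence $(u^{(k)}) \subset \mathcal{C}$ with $f(u^{(k)}) \leqslant m + 1$, the $L^1$-norms are bounded by $|\Omega| = 1$ because $\Delta_l$ is bounded, and $\tmop{TV}(u^{(k)}) \leqslant \rho_l^{-1}(m + 1 + \|s\|_{L^1})$ by the estimate above, so $(u^{(k)})$ is uniformly bounded in $\tmop{BV}(\Omega)^l$. The compactness theorem \cite[3.23]{Ambrosio2000} then provides a subsequence converging weakly*, hence in $L^1(\Omega)^l$, to some $u \in \tmop{BV}(\Omega)^l$. Passing to a further subsequence converging pointwise a.e.\ and using that $\Delta_l$ is closed and convex shows $u(x) \in \Delta_l$ a.e., i.e.\ $u \in \mathcal{C}$. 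Finally, the upper inclusion $\mathcal{D}_{\tmop{loc}} \subseteq \mathcal{B}_{\rho_u}(0)$ lets me invoke Proposition~\ref{prop:flscness}, so $f$ is $L^1$-lower semicontinuous and $f(u) \leqslant \liminf_k f(u^{(k)}) = m$; since $u \in \mathcal{C}$ this forces $f(u) = m$, so $u$ is the desired minimizer.
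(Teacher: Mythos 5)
Your proof is correct and takes essentially the same route as the paper's: the inner inclusion in \eqref{eq:minexistence} combined with the constraint $\sum_i Du_i = 0$ (forced by $u \in \mathcal{C}$) gives $J(u) \geqslant \rho_l \tmop{TV}(u)$ via \eqref{eq:poshomogju}, the data term is bounded below since $s \in L^1(\Omega)^l$ and $\Delta_l$ is bounded, and the direct method (uniform $\tmop{BV}$ bound on a minimizing sequence, compactness from \cite[Thm.~3.23]{Ambrosio2000}, lower semicontinuity from Prop.~\ref{prop:flscness}, and $L^1$-closedness of $\mathcal{C}$) concludes. The only differences are expository: you spell out why the support-function bound may be restricted to the mean-zero subspace and why $\mathcal{C}$ is closed under $L^1$ convergence, details the paper leaves implicit.
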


\begin{proof}
  From the inner and outer bounds it holds that $\rho_l \|z\| \leqslant \Psi (z) \leqslant \rho_u \|z\|$ for any $z = (z^1|\ldots|z^l) \in \mathbb{R}^{d \times l}$ with $z^1 + \ldots + z^l = 0$. Moreover, the constraint
  $u \in \mathcal{C}$ implies that $D u = (D u_1 | \ldots | D u_l)$ satisfies $D u_1 + \ldots + D u_l = 0$.
  Combining this with the positive homogeneity of $\Psi$, it follows from \eqref{eq:poshomogju} that
  \begin{eqnarray}
    0 \leqslant \rho_l \tmop{TV} (u) \leqslant J (u) & \leqslant & \rho_u \tmop{TV} (u)\,.\label{eq:regboundbelow}
  \end{eqnarray}
  From
  \begin{equation}
    \int_{\Omega} \langle u, s \rangle d x \geqslant
        - \int_{\Omega} \|u(x)\|_{\infty} \|s (x)\|_1 d x\,,\label{eq:datatermboundbelow}
  \end{equation}
  the fact that $s \in L^1 (\Omega)^l$, and boundedness of $\Omega$ and $\Delta_l$, it follows that the data term is bounded from below on $\mathcal{C}$.
  
  We now show coercivity of $f$ with respect to the $\tmop{BV}$ norm: Let $(u^{(k)}) \subset \mathcal{C}$ with $\|u^{(k)} \|_1 + \tmop{TV} (u^{(k)}) \rightarrow \infty$. As the data term is bounded from below,
and using the fact that $J (u^{(k)}) \geqslant \rho_l \tmop{TV} (u^{(k)})$, it follows that $f (u^{(k)}) \rightarrow + \infty$. Thus $f$ is coercive.
  
  Equations \eqref{eq:regboundbelow} and \eqref{eq:datatermboundbelow} also show that $f$ is bounded from below. Thus we can choose a minimizing sequence $(u^{(k)})$. Due to the coercivity, the sequence $\|u^{(k)} \|_1 + \tmop{TV} (u^{(k)})$ must then be bounded from above. From this and {\cite[Thm.~3.23]{Ambrosio2000}} we conclude that there is a subsequence of $(u^{(k)})$ weakly*- (and thus
  $L^1$-) converging to some $u \in \tmop{BV} (\Omega)^l$. With the lower
  semicontinuity from Prop.~\ref{prop:flscness} and closedness of $\mathcal{C}$ with respect to
  $L^1$ convergence, existence of a minimizer follows.
\end{proof}


\subsection{Relation to the Interaction Potential}

To relate such $J$ to the labeling problem in view of (P3), we have the following

\begin{proposition}\label{prop:redbinary}
  Let $\Psi = \sigma_{\mathcal{D}_{\tmop{loc}}} \tmop{and}$ $J (u) =
  \int_{\Omega} \Psi (D u)$ as in Prop.~\ref{prop:flscness}. For some $u' \in \tmop{BV} (\Omega)$ and vectors $a, b \in \Delta_l$, let
  $u (x) = (1 - u' (x)) a + u' (x) b$. Then for any vector $y \in \mathbbm{R}^d$ with $\|y\|= 1$,
  \begin{eqnarray}
    J (u) & = & \Psi \left( y (b - a)^{\top} \right) \tmop{TV}(u')
            =   \left( \sup_{v \in \mathcal{D}_{\tmop{loc}}} \|v \left( b - a \right) \| \right) \tmop{TV} (u')\,. 
    \label{eq:jupsirep}
  \end{eqnarray}
  In particular, if $\Psi \left( y (e^i - e^j)^{\top} \right) = \pot(i,j)$ for
  some $y$ with $\|y\| = 1$, then $J$ fulfills (P3).
\end{proposition}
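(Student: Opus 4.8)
The plan is to reduce everything to a direct computation of the vector measure $Du$ and then invoke the positively homogeneous integral representation \eqref{eq:poshomogju} together with the rotational invariance \eqref{eq:dlocrotinvar} of $\mathcal{D}_{\tmop{loc}}$. Writing $u(x) = a + u'(x)(b-a)$ with constant $a,b$, the distributional gradient is the ``rank-one'' measure $Du = (Du')\,(b-a)^{\top}$, i.e.\ its $j$-th column is $(b_j - a_j)\,Du'$. (If $b = a$ then $u$ is constant and both sides vanish, so assume $b \neq a$.) First I would record that the total variation measure satisfies $|Du| = \|b-a\|\,|Du'|$, since the Frobenius norm of the density $\eta(b-a)^{\top}$ equals $\|\eta\|\,\|b-a\| = \|b-a\|$, where $\eta := Du'/|Du'| \in S^{d-1}$ is the polar of $Du'$; consequently the polar of $Du$ is $\eta(b-a)^{\top}/\|b-a\|$.

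Next I would apply \eqref{eq:poshomogju}, valid because $\Psi = \sigma_{\mathcal{D}_{\tmop{loc}}}$ is positively homogeneous, and use $1$-homogeneity of $\Psi$ to cancel the factor $\|b-a\|$:
\begin{equation*}
 J(u) = \int_\Omega \Psi\!\left(\tfrac{\eta(b-a)^{\top}}{\|b-a\|}\right)\|b-a\|\,d|Du'| = \int_\Omega \Psi\!\left(\eta(x)(b-a)^{\top}\right) d|Du'|.
\end{equation*}
The crux is then to show the integrand does not depend on the (spatially varying) direction $\eta(x)$. From \eqref{eq:dlocrotinvar} one checks that $\Psi$ is invariant under column-wise rotation: for $R \in SO(d)$, substituting $w^j = R^{\top}v^j$ in the supremum defining $\Psi$ gives $\Psi(Rz) = \Psi(z)$, where $Rz := (Rz^1,\ldots,Rz^l)$. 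Since the columns of $y(b-a)^{\top}$ are the scalar multiples $(b_j - a_j)y$ of a single unit vector $y$, and $SO(d)$ acts transitively on $S^{d-1}$ for $d \geq 2$, I can pick $R$ with $Ry = \eta(x)$ to obtain $\Psi(\eta(x)(b-a)^{\top}) = \Psi(y(b-a)^{\top})$ for a.e.\ $x$, independent of $y$. Pulling this constant out of the integral and using $|Du'|(\Omega) = \tmop{TV}(u')$ yields the first equality.

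For the second equality I would rewrite the inner product column-wise: $\langle y(b-a)^{\top}, v\rangle = \sum_j (b_j-a_j)\langle y, v^j\rangle = \langle y, v(b-a)\rangle$, so that $\Psi(y(b-a)^{\top}) = \sup_{v \in \mathcal{D}_{\tmop{loc}}}\langle y, v(b-a)\rangle$. Cauchy--Schwarz gives $\langle y, v(b-a)\rangle \leq \|v(b-a)\|$, hence $\Psi(y(b-a)^{\top}) \leq \sup_v\|v(b-a)\|$; for the reverse, given $v$ I rotate it column-wise by an $R \in SO(d)$ with $R\,v(b-a) = \|v(b-a)\|\,y$, which keeps $\tilde v := (Rv^1,\ldots,Rv^l) \in \mathcal{D}_{\tmop{loc}}$ and achieves $\langle y, \tilde v(b-a)\rangle = \|v(b-a)\|$. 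Finally, the ``in particular'' claim follows by specializing $a = e^j$, $b = e^i$, $u' = \chi_S$, so that $u = e^i\chi_S + e^j\chi_{S^\complement}$, $b - a = e^i - e^j$, and $\tmop{TV}(\chi_S) = \tmop{Per}(S)$; the hypothesis $\Psi(y(e^i-e^j)^{\top}) = \pot(i,j)$ then gives exactly \eqref{eq:goal}.

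The main obstacle is the step establishing that the integrand $\Psi(\eta(x)(b-a)^{\top})$ is a.e.\ constant in the varying polar direction $\eta(x)$: everything hinges on transferring the global isotropy assumption \eqref{eq:dlocrotinvar} into a pointwise statement via transitivity of $SO(d)$ on the sphere. This argument needs $d \geq 2$ (the relevant imaging case); for $d = 1$ proper rotations are trivial and one would additionally need $\mathcal{D}_{\tmop{loc}}$ to be symmetric, $-\mathcal{D}_{\tmop{loc}} = \mathcal{D}_{\tmop{loc}}$, for the value to be independent of the sign of $\eta$. The remaining measure-theoretic steps (the rank-one polar decomposition and $|Du| = \|b-a\|\,|Du'|$) are routine once the density is identified.
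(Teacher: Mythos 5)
Your proof is correct and follows essentially the same route as the paper: identify $Du = (Du')(b-a)^{\top}$, invoke the homogeneous representation \eqref{eq:poshomogju} with $|(Du')(b-a)^{\top}| = \|b-a\|\,|Du'|$, and use the rotational invariance \eqref{eq:dlocrotinvar} together with transitivity of rotations on $S^{d-1}$ to replace the varying polar direction by the fixed $y$; your second equality likewise matches the paper's dual-norm-plus-rotation argument, merely packaged as Cauchy--Schwarz plus an explicit maximizing rotation instead of the paper's swap of suprema. Your added caveats (the degenerate case $b=a$, the $d\geqslant 2$ requirement, and the need for symmetry of $\mathcal{D}_{\tmop{loc}}$ when $d=1$) are minor refinements that the paper leaves implicit.
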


\begin{proof}
To show the first equality, \eqref{eq:poshomogju} implies
\begin{eqnarray}
  J (u) & = & \int_{\Omega} \Psi \left( \frac{D u}{|D u|} \right) |D u| \\
  & = & \int_{\Omega} \Psi \left( \frac{D (a + u' (b - a))}{|D (a + u' (b -
  a)) |} \right) |D (a + u' (b - a)) | \\
  & = & \int_{\Omega} \Psi \left( \frac{(D u') (b - a)^{\top}}{| (D u') (b -
  a)^{\top} |} \right) | (D u') (b - a)^{\top} | 
\end{eqnarray}
We make use of the property $| (D u') (b - a)^{\top} | = |D u' |\|b - a\|$, which is
a direct consequence of the definition of the total variation measure
and the fact that $\|w (b - a)^{\top} \|=\|w\|\|b - a\|$ for any
vector $w \in \mathbbm{R}^d$ (note that $a,b \in \mathbbm{R}^l$ are also vectors). Therefore
\begin{eqnarray}
  J (u) & = & \int_{\Omega} \Psi \left( \frac{(D u') (b - a)^{\top}}{|D u'
  |\|b - a\|} \right) |D u' |\|b - a\|, 
\end{eqnarray}
which by positive homogeneity of $\Psi$ implies
\begin{eqnarray}
  J (u) & = & \int_{\Omega} \Psi \left( \frac{D u'}{|D u' |} (b - a)^{\top}
  \right) |D u' |. 
\end{eqnarray}
Since the density function $D u' / |D u' |$ assumes values in $S^{d - 1}$,
there exists, for a.e. $x \in \Omega$, a rotation matrix mapping 
$(D u' / |D u' |) (x)$ to $y$. Together with the rotational invariance
of $\mathcal{D}_{\tmop{loc}}$ from \eqref{eq:dlocrotinvar} this implies
\begin{eqnarray}
  J (u) & = & \int_{\Omega} \Psi \left( y (b - a)^{\top} \right) |D u' | =
  \Psi \left( y (b - a)^{\top} \right) \tmop{TV} (u')\,,
\end{eqnarray}
which proves the first equality in \eqref{eq:jupsirep}. The second equality can be seen as follows:
  \begin{eqnarray}
    r & \assign & \sup_{v \in \mathcal{D}_{\tmop{loc}}} \|v \left( b - a \right) \|\\
    & = & \sup_{v \in \mathcal{D}_{\tmop{loc}}} \sup_{z \in \mathbbm{R}^d,
    \|z\| \leqslant 1} \langle z, v (b - a) \rangle\\
    & = & \sup_{z \in \mathbbm{R}^d, \|z\| \leqslant 1} \sup_{v \in
    \mathcal{D}_{\tmop{loc}}} \langle z, v (b - a) \rangle\,.
  \end{eqnarray}
  Denote by $R_z$ a rotation matrix mapping $z$ to $y$, i.e. $R_z z = y$, then
  \begin{eqnarray}
    r & = & \sup_{z \in \mathbbm{R}^d, \|z\| \leqslant 1}
            \sup_{v \in \mathcal{D}_{\tmop{loc}}} \langle R_z z, R_z v (b - a) \rangle\\
      & = & \sup_{z \in \mathbbm{R}^d, \|z\| \leqslant 1}
            \sup_{v' \in R_z \mathcal{D}_{\tmop{loc}}} \langle y, v' (b - a) \rangle\,.
  \end{eqnarray}
  The rotational invariance of $\mathcal{D}_{\tmop{loc}}$ provides $R_z \mathcal{D}_{\tmop{loc}} = \mathcal{D}_{\tmop{loc}}$, therefore
  \begin{eqnarray}
    r & = & \sup_{z \in \mathbbm{R}^d, \|z\| \leqslant 1}
            \sup_{v \in \mathcal{D}_{\tmop{loc}}} \langle y, v (b - a) \rangle\\
      & = & \sup_{v \in \mathcal{D}_{\tmop{loc}}} \langle y, v (b - a) \rangle
       =  \Psi (y (b - a)^{\top})\,.
  \end{eqnarray}
\end{proof}

As a consequence, if the relaxed multiclass formulation is restricted to two
classes by parametrizing $u = (1 - u') a + u' b$ for $u'(x) \in [0, 1]$, it
essentially reduces to the scalar continuous cut problem: Solving
\begin{equation}
  \min_{
  \begin{array}{c}
  {\scriptstyle u' \in \tmop{BV} (\Omega),}\\
  {\scriptstyle u'(x) \in [0,1], \mathcal{L}^d \text{-a.e. } x \in \Omega}
  \end{array}
  }
  \int_{\Omega} \langle (1 - u') a + u' b, s \rangle d x + J (u)
\end{equation}
is equivalent to solving
\begin{equation}
  \min_{
  \begin{array}{c}
  {\scriptstyle u' \in \tmop{BV} (\Omega),}\\
  {\scriptstyle u'(x) \in [0,1], \mathcal{L}^d \text{-a.e. } x \in \Omega}
  \end{array}
  }
  \int_{\Omega} u' (b - a) d x + \Psi (y (b - a)^{\top}) \tmop{TV} (u')\,
\end{equation}
which is just the classical binary continuous cut approach with data $(b - a)$
and regularizer weight $\Psi (y (b - a)^{\top})$, where $y \in \mathbbm{R}^d$ is some arbitrary
unit vector. For the multiclass case, assume that
\begin{eqnarray}
  u = u_P & = & e^1 \chi_{P^1} + \ldots + e^l \chi_{P^l} \label{eq:uppart}
\end{eqnarray}
for some partition $P^1 \cup \ldots \cup P^l = \Omega$ with $\tmop{Per}(P^i) < \infty, i=1,\ldots,l$.
Then the absolutely continuous and Cantor parts vanish \cite[Thm.~3.59, Thm.~3.84, Rem.~4.22]{Ambrosio2000}, and
only the jump part remains:
\begin{eqnarray}
  J (u_P) & = & \int_{S_{u_P}} \Psi \left( \nu_{u_P} \left( u_{P +} -
                u_{P -} \right)^{\top} \right) d\mathcal{H}^{d - 1}\,,
\end{eqnarray}
where $S_{u_P} = \bigcup_{i = 1, \ldots, l} \partial P^i$ is the union of the interfaces
between regions.
Define $i (x), j (x)$ such that $u_{P +} (x) = e^{i
(x)}$ and $u_{P -} (x) = e^{j (x)}$. Then
\begin{eqnarray}
  J \left( u_P \right) & = & \int_{S_{u_P}} \Psi \left( \nu_{u_P} \left(
  e^{i (x)} - e^{j (x)} \right)^{\top} \right) d\mathcal{H}^{d - 1}\,.
\end{eqnarray}
By rotational invariance,
\begin{eqnarray}
  J \left( u_P \right) & = & \int_{S_{u_P}} \Psi \left( y  \left( e^{i (x)} - e^{j (x)} \right)^{\top} \right)
  d\mathcal{H}^{d - 1}\,.  \label{eq:jup1}
\end{eqnarray}
for some $y$ with $\|y\| = 1$. Thus the regularizer locally penalizes jumps between labels $i$ and $j$ along an interface with the interface length, multiplied by the factor $\Psi ( y  ( e^{i} - e^{j} )^{\top})$ depending on the labels of the adjoining regions.

The question is how to choose the set $\mathcal{D}_{\tmop{loc}}$ such that $\Psi (y (e^i - e^j)^{\top}) = \pot(i,j)$
for a given interaction potential $\pot$. We will consider two approaches which differ
with respect to expressiveness and simplicity of use: In the \emph{local envelope} approach (Sect.~\ref{sec:envelopemethod}), $\mathcal{D}_{\tmop{loc}}$ is
chosen as large as possible. In turn, $J$ is as large as possible with the integral formulation \eqref{eq:generaljurep}. This prevents
introducing artificial minima generated by the relaxation, and potentially keeps minimizers of the relaxed
problem close to minimizers of the discrete problem. However, $\mathcal{D}_{\tmop{loc}}$ is only implicitly defined,
which complicates optimization. In contrast, in the \emph{embedding} approach (Sect.~\ref{sec:euclmetricmethod}), $\mathcal{D}_{\tmop{loc}}$ is
simpler at the cost of being able to represent only a subset of all metric potentials $\pot$. For an illustration
of the two approaches, see Fig.~\ref{fig:supportingsets}.

\begin{figure}
\centering
	{
	\psfrag{a}[br][br]{$(-1,1,0)$}
	\psfrag{b}[bl][bl]{$(1,-1,0)$}
	\psfrag{c}[br][br]{$(-1,0,1)$}
	\psfrag{d}[bl][bl]{$(1,0,-1)$}
	\psfrag{e}[t][t]{$(0,-1,1)$}
	\psfrag{f}[b][b]{$(0,1,-1)$}
	\includegraphics[width=5.0cm]{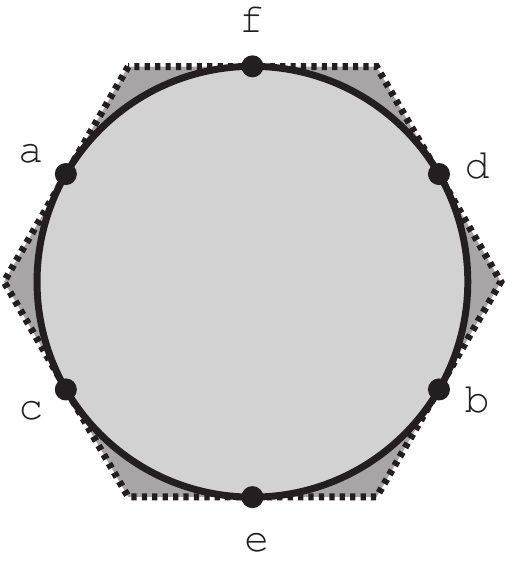}
	} 
  \caption{\label{fig:supportingsets}Illustration of the set
  $\mathcal{D}_{\tmop{loc}}$ used to build the regularizer for the uniform
  distance, $\pot (i, j) = 0$ iff $i = j$ and $\pot(i,j)=1$ otherwise, for $l = 3$ and $d = 1$,
  i.e.~in a one-dimensional space. Shown is a cut through the $z^1+z^2+z^3=0$ plane; the labels correspond
  to the points $e^i-e^j$ with $i \neq j$. The local envelope method leads to a larger set
  $\mathcal{D}_{\tmop{loc}}$ (dashed) than the Euclidean metric method (solid).
  This improves the tightness of the relaxation, but requires more expensive projection steps.}
\end{figure}


\subsection{Local Envelope Method}\label{sec:envelopemethod}

Chambolle et al. {\cite{Chambolle2008}} proposed an interesting approach for
potentials $\pot$ of the form $\pot (i, j) = \gamma (|i - j|)$ for a positive concave function
$\gamma$. The approach is derived by specifying the value of $J$ on discrete
$u$ only and then constructing an approximation of the convex envelope by
pulling the convexification into the integral.
This potentially generates tight extensions and thus one may hope that the
convexification process does not generate too many artificial non-discrete
solutions.

We propose to extend this approach to \emph{arbitrary metric} $\pot$ by setting (Fig.~\ref{fig:supportingsets})
\begin{eqnarray}
  \mathcal{D}_{\tmop{loc}}^{\pot} & \assign & \bigcap_{i \neq j} \{  v = \left( v^1, \ldots, v^l \right) \in \mathbbm{R}^{d \times l} | 
  \|v^i - v^j \| \leqslant \pot (i, j), \sum_k v^k = 0 \} \label{eq:dlocd}
\end{eqnarray}
for some given interface potential $\pot (i, j)$. By definition
$\mathcal{D}_{\tmop{loc}}^{\pot}$ is rotationally invariant, and by the considerations
in Sect.~\ref{sec:axiomatic} we assume $\pot$ to be a metric.
Then the inner and outer bound assumptions required for existence of a minimizer in Prop.~\ref{prop:minexistence} 
are satisfied. Moreover, $\pot$ can be reconstructed from $\mathcal{D}_{\tmop{loc}}^{\pot}$:

\begin{proposition}
  \label{prop:supij}Let $d : \{1, \ldots, l\}^2 \rightarrow \mathbbm{R}_{\geqslant 0}$ be a metric.
  Then for any $i, j$,
  \begin{eqnarray}
    \sup_{v \in \mathcal{D}_{\tmop{loc}}^{\pot}} \left( \left( v^i \right)_1 -
    \left( v^j \right)_1 \right) & = & \pot (i, j)\,. 
  \end{eqnarray}
\end{proposition}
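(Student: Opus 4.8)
The plan is to prove the identity by sandwiching the supremum between matching upper and lower bounds. For the upper bound, I would use that for any feasible $v \in \mathcal{D}_{\tmop{loc}}^{\pot}$ the difference of first components is dominated by the Euclidean norm of the full vector difference, so that
\begin{equation}
\left( v^i \right)_1 - \left( v^j \right)_1 \;\leqslant\; \| v^i - v^j \| \;\leqslant\; \pot(i,j)\,,
\end{equation}
where the last inequality is exactly the defining constraint of $\mathcal{D}_{\tmop{loc}}^{\pot}$ in \eqref{eq:dlocd}. Taking the supremum over $v$ immediately gives $\sup_{v} ( (v^i)_1 - (v^j)_1 ) \leqslant \pot(i,j)$, which is the easy half.

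For the reverse inequality I would exhibit an explicit feasible point attaining the value, the key observation being that the problem reduces to a purely scalar one-dimensional construction. Concretely, I put all the ``mass'' on the first image coordinate: fix $w \assign (1,0,\ldots,0)^{\top} \in \mathbbm{R}^d$ and set $v^k \assign t_k w$ for scalars $t_k \in \mathbbm{R}$ still to be chosen. Then $(v^k)_1 = t_k$ and $\|v^k - v^m\| = |t_k - t_m|$, so the distance constraints become $|t_k - t_m| \leqslant \pot(k,m)$, the zero-sum constraint becomes $\sum_k t_k = 0$, and the objective becomes $t_i - t_j$. The natural guess is $t_k \assign \pot(k,j)$, for which $t_i - t_j = \pot(i,j) - \pot(j,j) = \pot(i,j)$, using $\pot(j,j)=0$ from Prop.~\ref{prop:metricd}.

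It then remains to verify feasibility of this choice, and here the metric structure of $\pot$ does all the work. The distance constraints are precisely the reverse triangle inequality: subadditivity of $\pot$ gives $\pot(k,j) \leqslant \pot(k,m) + \pot(m,j)$ and, exchanging the roles of $k$ and $m$, $|\pot(k,j) - \pot(m,j)| \leqslant \pot(k,m)$, i.e.\ $|t_k - t_m| \leqslant \pot(k,m)$. To enforce $\sum_k t_k = 0$ without disturbing the objective, I would shift all scalars by the common constant $c \assign \tfrac{1}{l}\sum_m \pot(m,j)$, replacing $t_k$ by $t_k - c$; since the objective $t_i - t_j$ and all distance constraints depend on the $t_k$ only through their differences, this shift leaves both feasibility and the value $\pot(i,j)$ intact while achieving $\sum_k (t_k - c) = 0$. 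This yields a genuine element of $\mathcal{D}_{\tmop{loc}}^{\pot}$ realizing $\pot(i,j)$, so the supremum equals $\pot(i,j)$ (and is in fact attained).

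I expect no serious obstacle. The only point requiring a moment's care is reconciling the zero-sum constraint with the maximizing choice $t_k = \pot(k,j)$, which the constant shift resolves cleanly precisely because every relevant quantity is translation-invariant in the $t_k$. The conceptual crux is the reduction to the scalar problem via the ansatz $v^k = t_k w$, after which subadditivity of the metric supplies feasibility and the vanishing diagonal $\pot(j,j)=0$ supplies the exact value.
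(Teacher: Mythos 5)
Your proof is correct. It parallels the paper's proof in its overall structure: the upper bound is obtained identically, from $(v^i)_1 - (v^j)_1 \leqslant \|v^i - v^j\| \leqslant \pot(i,j)$, and both arguments reduce the lower bound to a scalar problem by placing all mass on one image coordinate and absorbing the zero-sum constraint through a constant shift (legitimate because objective and constraints depend only on differences of the scalars). The decisive step is where you diverge: the paper passes to the difference-constrained linear program $\sup\{p_i - p_j \;|\; p_{i'} - p_{j'} \leqslant \pot(i',j')\ \forall i',j'\}$ and invokes a network-flow result of Murota, combined with the observation that for a metric the shortest path from $i$ to $j$ has length exactly $\pot(i,j)$; you instead exhibit the explicit optimal potential $t_k = \pot(k,j)$ and verify feasibility directly via the reverse triangle inequality. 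Your route is self-contained and elementary, and it shows in addition that the supremum is attained. The paper's route buys something else: the shortest-path characterization makes transparent what happens when $\pot$ is \emph{not} a metric --- the supremum then equals the metric-closure (shortest-path) value, which can be strictly smaller than $\pot(i,j)$ --- and this is exactly what underlies the paper's subsequent remark that for non-metric $\pot$ the envelope construction only yields a lower bound. One cosmetic point: the vanishing diagonal $\pot(j,j)=0$ is part of the hypothesis that $\pot$ is a metric, not a consequence of Prop.~\ref{prop:metricd}, which runs in the opposite direction (deriving the metric properties from (P1)--(P3)).
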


\begin{proof}
  ``$\leqslant$'' follows from the definition \eqref{eq:dlocd}. ``$\geqslant$'' can be shown
  using a network flow argument: We have
  \begin{eqnarray}
    & & \sup_{v \in \mathcal{D}_{\tmop{loc}}^{\pot}} \left( \left( v^i \right)_1 - \left(v^j \right)_1 \right) \\
    & \geqslant & \sup \{ p_i - p_j \; | \; p \in \mathbbm{R}^l, \sum_k p_k = 0,
    \forall i', j' : p_{i'} - p_{j'} \leqslant \pot(i', j') \}\label{eq:flowfirstset}\\
    & \overset{(\ast)}{=} & \sup \{ p_i - p_j \; | \; p \in \mathbbm{R}^l,
    \forall i', j' : p_{i'} - p_{j'} \leqslant \pot(i', j') \}\label{eq:flowsecondset}\\
    & \overset{(\ast\ast)}{=} & \pot (i, j)\,.
  \end{eqnarray}
  Equality $(\ast)$ holds since each $p$ in the set in \eqref{eq:flowsecondset} can be
  associated with $\tilde{p} \assign p - \frac{1}{l} \sum_k p_k$, which is
  contained in the set in \eqref{eq:flowfirstset} and satisfies
  and $p_i - p_j = \tilde{p}_i - \tilde{p}_j$.
  The last equality $(\ast\ast)$ follows from {\cite[5.1]{Murota2003}}
  with the notation $\gamma = \pot$ (and $\bar{\gamma} = \pot$, since $\pot$ is a metric and
  therefore the triangle inequality implies that the length of the shortest path from $i$ to
  $j$ is always $\pot (i, j)$).
\end{proof}

The final result of this section is the following:
\begin{proposition}\label{prop:jdsatisfies}
  Let $d : \mathbbm{R}^{l \times l} \rightarrow \mathbbm{R}_{\geqslant 0}$ be a metric.
  Define $\mathcal{D}_{\tmop{loc}} \assign \mathcal{D}_{\tmop{loc}}^{\pot}$ as in~(\ref{eq:dlocd}), $\Psi_{\pot} \assign \sigma_{\mathcal{D}_{\tmop{loc}}^{\pot}}$ and
  \begin{eqnarray}
    J_{\pot} \assign \int_{\Omega} \Psi_{\pot}(D u)\label{eq:jddef}
  \end{eqnarray}  
  as in (\ref{eq:psidu}). Then $J_{\pot}$ satisfies (P1)--(P3).
\end{proposition}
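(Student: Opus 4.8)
The plan is to verify (P1)--(P3) in turn. Properties (P1) and (P2) are immediate from the fact that $\Psi_{\pot}$ is a support function, so essentially all of the content lies in (P3), which I would reduce to the already-established Prop.~\ref{prop:redbinary} and Prop.~\ref{prop:supij}.

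First I would dispatch (P1). Since $\Psi_{\pot} = \sigma_{\mathcal{D}_{\tmop{loc}}^{\pot}}$ is the support function of a nonempty closed convex set, it is convex, lower semicontinuous and positively homogeneous on $\mathbbm{R}^{d \times l}$. Convexity of $J_{\pot}$ then follows directly from the dual representation \eqref{eq:jdef-dual}, which exhibits $J_{\pot}$ as a pointwise supremum of functionals that are affine (indeed linear) in $u$. Positive homogeneity follows from $D(\alpha u) = \alpha Du$ for $\alpha > 0$ together with $\Psi_{\pot}(\alpha z) = \alpha \Psi_{\pot}(z)$, so that $J_{\pot}(\alpha u) = \int_{\Omega}\Psi_{\pot}(\alpha Du) = \alpha J_{\pot}(u)$. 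For (P2), a constant $u$ has vanishing distributional gradient $Du = 0$; since $\mathcal{D}_{\tmop{loc}}^{\pot}\neq\emptyset$ we have $\Psi_{\pot}(0)=\sigma_{\mathcal{D}_{\tmop{loc}}^{\pot}}(0)=0$, and as all three parts of the decomposition of $Du$ vanish, every integrand in \eqref{eq:psidu} is zero, giving $J_{\pot}(u)=0$.

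The heart of the argument is (P3). Given a partition $S, S^\complement$ with $\tmop{Per}(S)<\infty$ and labels $i,j$, I would apply Prop.~\ref{prop:redbinary} with $a = e^j$, $b = e^i$ and $u' = \chi_S$, so that $(1-u')a + u'b = e^i\chi_S + e^j\chi_{S^\complement}$ is exactly the configuration in \eqref{eq:goal} and $\tmop{TV}(u') = \tmop{Per}(S)$. Proposition~\ref{prop:redbinary} then gives, for any unit vector $y$,
\[
  J_{\pot}\left(e^i\chi_S + e^j\chi_{S^\complement}\right) = \Psi_{\pot}\left(y(e^i-e^j)^{\top}\right)\tmop{Per}(S) = \left(\sup_{v\in\mathcal{D}_{\tmop{loc}}^{\pot}}\|v(e^i-e^j)\|\right)\tmop{Per}(S).
\]
It remains to show the identity $\Psi_{\pot}(y(e^i-e^j)^{\top}) = \sup_v\|v^i-v^j\| = \pot(i,j)$, using $v(e^i-e^j) = v^i - v^j$. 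The bound ``$\leqslant$'' is immediate from the defining constraint $\|v^i-v^j\|\leqslant\pot(i,j)$ in \eqref{eq:dlocd}. For ``$\geqslant$'' I would invoke Prop.~\ref{prop:supij}: since $(v^i)_1 - (v^j)_1 \leqslant \|v^i-v^j\|$, taking suprema yields $\pot(i,j) = \sup_v((v^i)_1-(v^j)_1) \leqslant \sup_v\|v^i-v^j\|$, which pins the supremum to $\pot(i,j)$. Substituting back proves $J_{\pot}(e^i\chi_S + e^j\chi_{S^\complement}) = \pot(i,j)\tmop{Per}(S)$, i.e.~(P3).

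The only genuinely nontrivial step is the identity $\sup_v\|v^i-v^j\| = \pot(i,j)$, and even this has already been isolated: the upper bound is built into the definition of $\mathcal{D}_{\tmop{loc}}^{\pot}$, while the matching lower bound is exactly the content of Prop.~\ref{prop:supij} (whose own proof rests on the shortest-path/network-flow characterization and the metric property of $\pot$). Thus the present proposition largely amounts to assembling Prop.~\ref{prop:redbinary} and Prop.~\ref{prop:supij}, the one extra observation being that bounding a single coordinate by the Euclidean norm lets the scalar supremum of Prop.~\ref{prop:supij} control the vector-norm supremum appearing here.
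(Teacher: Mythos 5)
Your proof is correct and takes essentially the same route as the paper: (P1)/(P2) are dispatched as immediate, and (P3) is obtained by assembling Prop.~\ref{prop:redbinary} with Prop.~\ref{prop:supij}. The only cosmetic difference is that the paper chooses $y = e^1$, so that $\Psi_{\pot}\left(y(e^i-e^j)^{\top}\right) = \sup_{v \in \mathcal{D}_{\tmop{loc}}^{\pot}}\left((v^i)_1-(v^j)_1\right)$ coincides literally with the quantity in Prop.~\ref{prop:supij}, whereas you work with the norm form $\sup_{v}\|v^i-v^j\|$ and squeeze it between the defining constraint $\pot(i,j)$ from above and the scalar supremum of Prop.~\ref{prop:supij} from below --- the same content, reached by a trivially different final step.
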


\begin{proof}
  (P1) and (P2) are clear from the definition of $J_{\pot}$. (P3) follows directly from Prop.~\ref{prop:supij} and Prop.~\ref{prop:redbinary} with $y = e^1$.
\end{proof}

Defining $\mathcal{D}_{\tmop{loc}}^{\pot}$ as in (\ref{eq:dlocd}) provides us with a way to extend the desired
regularizer for {\tmem{any}} metric $\pot$ to non-discrete $u \in \mathcal{C}$ via
(\ref{eq:psidu}). The price to pay is that there is no simple closed
expression for $\Psi$ and thus for $J_{\pot}$, which potentially complicates optimization.
Note that in order to define $\mathcal{D}_{\tmop{loc}}^{\pot}$, $\pot$ does not have to be a metric. However Prop.~\ref{prop:supij}
then does not hold in general, so $J$ is not a true extension of the desired regularizer, although it still
provides a lower bound.

\subsection{Euclidean Metric Method} \label{sec:euclmetricmethod}

In this section, we consider a regularizer which is less powerful but more
efficient to evaluate. The classical total variation for vector-valued $u$
as defined in (\ref{eq:tvdef-dual}) is
\begin{equation}
  \tmop{TV}(u) = \int_{\Omega} \| D u \|\,.
\end{equation}
This classical definition has also been used in color denoising and is
also referred to as $\tmop{MTV}$ {\cite{Sapiro1996,Duval2008}}. We propose to
extend this definition by choosing an {\tmem{embedding matrix}} $A \in
\mathbbm{R}^{k \times l}$ for some $k \leqslant l$, and defining
\begin{eqnarray}
  J_A (u) & \assign & \tmop{TV} (Au)\,. \label{eq:tva}
\end{eqnarray}
This corresponds to substituting the Frobenius matrix norm on the distributional gradient
with a linearly weighted variant. In the framework of (\ref{eq:psidu}), it amounts to setting
$\mathcal{D}_{\tmop{loc}} = \mathcal{D}_{\tmop{loc}}^A$ (cf. Fig.~\ref{fig:supportingsets}) with
\begin{eqnarray}
  \mathcal{D}_{\tmop{loc}}^A & \assign & \{v' A | v' \in
  \mathbbm{R}^{d \times k}, \|v' \| \leqslant 1\} = \mathcal{B}_1 (0) A \label{eq:dlocadef}
\end{eqnarray}
Clearly, $0 \in \mathcal{D}_{\tmop{loc}}^A$ and
\begin{eqnarray}
  \Psi (z) & = & \sigma_{\mathcal{D}_{\tmop{loc}}^A}(z) = \sup_{v' \in \mathcal{B}_1 (0) A} \langle
  z, v' \rangle = \sup_{v \in \mathcal{B}_1 (0)} \langle z, v A \rangle\\
  & = & \sup_{v \in \mathcal{B}_1 (0)} \langle z A^{\top}, v \rangle
  =\|z A^{\top}\|\,. \label{eq:psiza}
\end{eqnarray}
In particular, we formally have
\begin{eqnarray}
  \Psi (D u) & = & \|(D u) A^{\top}\| = \|D(A u)\|\,,
\end{eqnarray}
as $u \mapsto D u$ is linear in
$u$. To clarify the definition, we may rewrite this to
\begin{eqnarray}
  \tmop{TV}_A (u) & = & \int_{\Omega} \sqrt{\|D_1 u\|_A^2 + \ldots +\|D_d u\|_A^2} \,,
\end{eqnarray}
where $\|v\|_A \assign (v^{\top} A^{\top} Av)^{1 / 2}$. In this sense, the approach can
be understood as replacing the Euclidean norm by a linearly weighted variant.

It remains to show for which interaction potentials $\pot$ assumption (P3) can be satisfied. The next
proposition shows that this is possible for the class of \emph{Euclidean} metrics.

\begin{proposition}
	Let $\pot$ be an Euclidean metric, i.e. there exist $k \in \mathbbm{N}$ as well as $a^1, \ldots, a^l \in \mathbbm{R}^k$ such that $\pot (i, j) =\|a^i - a^j \|$. Then for $A = (a^1|\ldots|a^l)$, $J_A \assign \tmop{TV}_A$ satisfies (P1)--(P3).
\end{proposition}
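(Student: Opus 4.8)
The plan is to treat (P1) and (P2) as essentially formal consequences of the support-function structure and the linearity of $u \mapsto Au$, and to concentrate all the real work on (P3), which I would reduce to a single pointwise computation of $\Psi$ via Prop.~\ref{prop:redbinary}. Concretely, since $A = (a^1|\ldots|a^l)$ has the $a^i$ as columns, the key fact I would exploit is $A(e^i - e^j) = a^i - a^j$, together with the closed-form $\Psi(z) = \|zA^{\top}\|$ from \eqref{eq:psiza}.

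First I would dispose of (P1) and (P2). The set $\mathcal{D}_{\tmop{loc}}^A = \mathcal{B}_1(0)A$ in \eqref{eq:dlocadef} is the image of the compact, convex unit ball under the continuous linear map $v' \mapsto v'A$, hence nonempty, convex and compact (in particular closed); thus $\Psi = \sigma_{\mathcal{D}_{\tmop{loc}}^A}$ is a genuine support function and $J_A = \int_\Omega \Psi(Du)$ fits the framework of Prop.~\ref{prop:redbinary}. Being a support function, $\Psi$ is convex and positively homogeneous, and composing with the linear map $u \mapsto Au$ preserves both, so $J_A(u) = \tmop{TV}(Au)$ is convex and positively homogeneous, giving (P1). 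For constant $u$, $Au$ is constant, so $\tmop{TV}(Au) = 0$, giving (P2).

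The substantive step is (P3). By Prop.~\ref{prop:redbinary} it suffices to exhibit a unit vector $y \in \mathbbm{R}^d$ with $\Psi\bigl(y(e^i - e^j)^{\top}\bigr) = \pot(i,j)$. Using \eqref{eq:psiza} with $z = y(e^i - e^j)^{\top} \in \mathbbm{R}^{d \times l}$, I would compute $zA^{\top} = y\bigl((e^i - e^j)^{\top}A^{\top}\bigr) = y\,(A(e^i-e^j))^{\top} = y\,(a^i - a^j)^{\top}$, which is a rank-one (outer product) matrix in $\mathbbm{R}^{d \times k}$. The Frobenius norm of such an outer product factorizes as $\|y\,(a^i-a^j)^{\top}\| = \|y\|\,\|a^i - a^j\|$, so for any unit $y$ this equals $\|a^i - a^j\| = \pot(i,j)$ by the Euclidean-metric hypothesis. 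Applying Prop.~\ref{prop:redbinary} then yields (P3), and in fact the explicit identity $J_A(e^i\chi_S + e^j\chi_{S^\complement}) = \pot(i,j)\,\tmop{Per}(S)$.

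The only point requiring care, which I would flag as the main (though minor) obstacle, is the bookkeeping of transposes and matrix dimensions in the identity $(e^i - e^j)^{\top}A^{\top} = (a^i - a^j)^{\top}$, and the verification that the Frobenius norm of an outer product splits multiplicatively; everything else is routine. It is worth noting that the result is independent of the chosen unit $y$, consistent with the rotational invariance of $\mathcal{D}_{\tmop{loc}}^A$ and hence the isotropy demanded in (P3). I would also remark in passing that the inner/outer ball conditions \eqref{eq:minexistence} needed for existence of minimizers are easily seen to hold for $\mathcal{D}_{\tmop{loc}}^A$, though they are not required for (P1)--(P3) themselves.
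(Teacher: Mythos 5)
Your proof is correct and follows essentially the same route as the paper: both reduce (P3) to the pointwise computation $\Psi\bigl(y(e^i-e^j)^{\top}\bigr) = \|y(a^i-a^j)^{\top}\| = \|a^i-a^j\|$ via Prop.~\ref{prop:redbinary} and \eqref{eq:psiza}, with the only difference being that the paper dismisses (P1)--(P2) as clearly satisfied while you spell them out. One caveat on your closing side remark: the inner-ball condition in \eqref{eq:minexistence} does \emph{not} hold in general for $\mathcal{D}_{\tmop{loc}}^A$ --- the paper points out it fails whenever $\ker A \not\subseteq \{te \mid t \in \mathbbm{R}\}$ (e.g.\ for the linear metric $A = (c, 2c, \ldots, lc)$), in which case $A$ must be regularized for Prop.~\ref{prop:minexistence} to apply; this does not affect your argument for (P1)--(P3) themselves.
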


\begin{proof}
 (P1) and (P2) are clearly satisfied. In order to show (P3) we
 apply Prop.~\ref{prop:redbinary} and assume $\|y\|=1$ to obtain with (\ref{eq:psiza})
 \begin{eqnarray}
 	 \Psi(y (e^i - e^j)^{\top}) & = & \| y (e^i - e^j)^{\top} A^{\top} \| \\
 	 & = & \| y (a^i - a^j)^{\top} \| = \|a^i - a^j\|\,. \nonumber
 \end{eqnarray}
\end{proof}

The class of Euclidean metrics comprises some important special cases:
\begin{itemize}
  \item The {\tmem{uniform}}, {\tmem{discrete}} or {\tmem{Potts}} metric as
  also considered in {\cite{Zach2008,Lellmann2009}} and as a special case in
  {\cite{Kleinberg1999,Komodakis2007}}. Here $\pot (i, j) = 0$ iff $i = j$ and $ \pot (i, j) = 1$ in any other case, which corresponds to $A = (1 / \sqrt{2}) I$.
  
  \item The {\tmem{linear}} (label) metric, $\pot (i, j) = c|i - j|$, with $A =
  (c, 2 c, \ldots, lc)$. This regularizer is suitable to problems where the
  labels can be naturally ordered, e.g. depth from stereo or grayscale image
  denoising.
  
  \item More generally, if label $i$ corresponds to a prototypical vector
  $z^i$ in $k$-dimensional feature space, and the Euclidean norm is an
  appropriate metric on the features, it is natural to set $\pot (i, j) =\|z^i - z^j \|$,
  which is Euclidean by construction. This corresponds to a
  regularization in feature space, rather than in ``label space''.
\end{itemize}
Note that the boundedness assumption involving $\rho_{l}$ required for the existence proof (Prop.~\ref{prop:minexistence}) is only fulfilled if the kernel of $A$ is sufficiently small,
i.e. $\ker A \subseteq \{t e | t \in \mathbbm{R}\}$, with $e=(1,\ldots,1)^{\top} \in  \mathbbm{R}^{l}$: Otherwise,
$\mathcal{D}_{\tmop{loc}}^A = \left(\mathcal{B}_{1}(0) A\right) \cap \{(v^1,\ldots,v^l)|\sum_i v^i=0\}$
is contained in a subspace of at most dimension $(l - 2) d$, and \eqref{eq:minexistence} cannot be satisfied
for any $\rho_l > 0$.
Thus if $\pot$ is a degenerate Euclidean metric which can be represented by an embedding into a lower-dimensional space, as is the case with the linear metric, it has to be regularized for the existence result in Prop.~\ref{prop:minexistence} to hold. This can for example be achieved by choosing an orthogonal basis $(b^1,\ldots,b^j)$ of $\ker A$, where $j = \tmop{dim} \ker A$, and substituting $A$ with $A' \assign (A^{\top},\varepsilon b^1,\ldots, \varepsilon b^j)^{\top}$, enlarging $k$ as required. However these observations are mostly of theoretical
interest, as the existence of minimizers for the discretized problem follows already
from compactness of the (finite-dimensional) discretized constraint set.

Non-Euclidean $\pot$, such as the \tmtextit{truncated linear
metric}, $\pot (i, j) = \min \{2, |i - j|\}$, cannot be represented exactly by
$\tmop{TV}_A$. In the following we will demonstrate how to construct approximations for these cases.

Assume that $\pot$ is an arbitrary metric with squared matrix representation
$D\in \mathbbm{R}^{l \times l}$, i.e.~$D_{ij} = \pot (i, j)^2$. Then it is known
{\cite{Borg2005}} that $\pot$ is Euclidean iff for $C \assign I -
\frac{1}{l} ee^{\top}$, the matrix $T \assign - \frac{1}{2} CDC$ is positive
semidefinite. In this case, $\pot$ is called \emph{Euclidean distance matrix}, and 
$A$ can be found by factorizing $T = A^{\top} A$.
If $T$ is not positive semidefinite, setting the nonnegative eigenvalues in
$T$ to zero yields an Euclidean approximation. This method is known as
{\tmem{classical scaling}} {\cite{Borg2005}} and does not necessarily give
good absolute error bounds.
\begin{figure}[tb]
	\centering
	\includegraphics[width=.80\columnwidth]{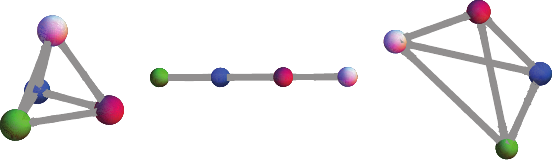}
  \caption{\label{fig:emb3d}Euclidean embeddings into $\mathbbm{R}^3$ for several interaction potentials with four classes. \textbf{Left to right:} Potts; linear metric; non-Euclidean truncated linear metric.
  The vertices correspond to the columns $a^1,\ldots,a^l$ of the embedding matrix $A$.
  For the truncated linear metric an optimal approximate embedding was
  computed as outlined in Sect.~\ref{sec:euclmetricmethod} with the
  matrix norm $\|X\|_M \assign \max_{i, j} |X_{i j} |$.}
\end{figure}%

More generally, for some non-metric, nonnegative $\pot$, we can formulate the problem of finding the
``closest'' Euclidean distance matrix $E$ as the minimization problem of a
matrix norm $\|E - D\|_M$ over all $E \in \mathcal{E}_l$, the set of $l \times
l$ Euclidean distance matrices. Fortunately, there is a linear bijection $B :
\mathcal{P}_{l - 1} \rightarrow \mathcal{E}_l$ between $\mathcal{E}_l$ and the
space of positive semidefinite $(l - 1) \times (l - 1)$ matrices
$\mathcal{P}_{l - 1}$ {\cite{Gower1985,Johnson1995}}. This allows us to
rewrite our problem as a {\tmem{semidefinite program}} {\cite[(p.534--541)]{Wolkowicz2000}}
\begin{equation}
  \inf_{S \in \mathcal{P}_{l - 1}} \|B (S) - D\|_M\,.\label{eq:sdpapprox}
\end{equation}
Problem~\eqref{eq:sdpapprox} can be solved using available SDP solvers. Then $E
= B (S) \in \mathcal{E}_l$, and $A$ can be extracted by factorizing $-
\frac{1}{2} CEC$. Since both $E$ and $D$ are explicitly known,
$\varepsilon_E \assign \max_{i, j} |(E_{ij})^{1/2} - (D_{ij})^{1/2} |$
can be explicitly computed and provides an a posteriori bound on the maximum distance error.
Fig.~\ref{fig:emb3d} shows a visualization of
some embeddings for a four-class problem. In many cases, in particular when
the number of classes is large, the Euclidean embedding provides a good approximation
for non-Euclidean metrics (Fig.~\ref{fig:penguin-approxcutlinear}).
\begin{figure}[tb]
\centering
\includegraphics[width=.40\columnwidth]{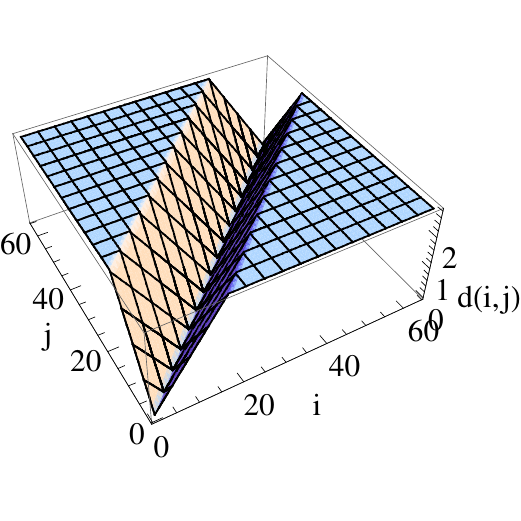}
\includegraphics[width=.40\columnwidth]{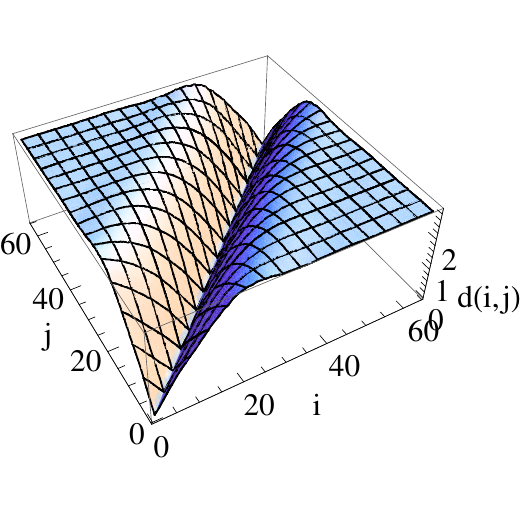}\\
\includegraphics[width=.70\columnwidth]{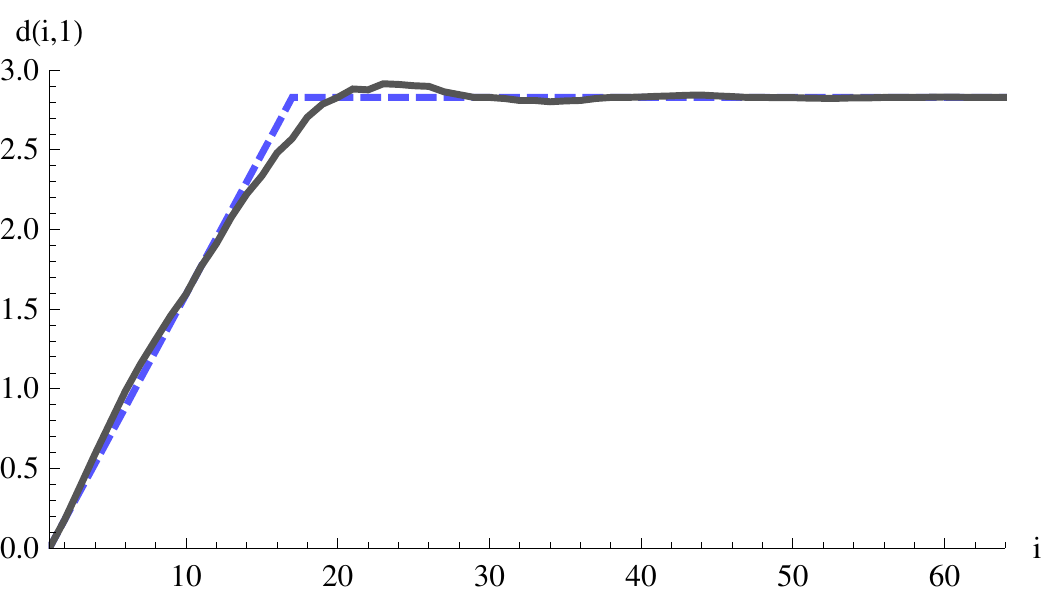}
\caption{Euclidean approximation of the Non-Euclidean truncated linear metric with interaction potential $\pot(i,j) = \sqrt{2}/8 \min\{|i-j|,16\}$.
\textbf{Left to Right:} Original potential for $64$ classes; potential after Euclidean approximation. \textbf{Bottom:} cross section of the original (dashed) and approximated (solid) metric at $i=1$. The approximation was computed using semidefinite programming as outlined in Sect.~\ref{sec:euclmetricmethod}. It represents the closest Euclidean metric with respect to the matrix norm $\|X-Y\|_M \assign \sum_{i,j} |X_{i j} - Y_{i j}|$. The maximal error with respect to the original potential is $\varepsilon_E = 0.2720$.
}%
\label{fig:penguin-approxcutlinear}%
\end{figure}

Based on the embedding matrices computed in this way, the Euclidean distance
approach can be used to solve approximations of the labeling problem with
arbitrary metric interaction potentials, with the
advantage of having a closed expression for the regularizer.


\section{Discretized Problem}\label{sec:discrete}


\subsection{Saddle Point Formulation}

We now turn to solving the discretization of the relaxed problem (\ref{eq:fullproblemdef}). In order
to formulate generic algorithms, we study the bilinear saddle point problem,
\begin{eqnarray}
  & & \min_{u \in \mathcal{C}} \max_{v \in \mathcal{D}} g \left( u, v \right)\,, 
  \quad g (u, v) \assign \langle u, s \rangle + \langle Lu, v \rangle - \langle b, v \rangle\,. \label{eq:discretemodelx}
\end{eqnarray}
As will be shown in Sect.~\ref{sec:discrchap}--\ref{sec:discr-eucl}, this covers both
$J_{\pot}$ \eqref{eq:jddef} and $J_A$ \eqref{eq:tva}
as well as many other -- even non-uniform and non-isotropic -- regularizers.

In a slight abuse of notation, we will denote by $u, s \in \mathbbm{R}^n$ also
the discretizations of $u$ and $s$ on a uniform grid. Furthermore we require a linear operator $L \in \mathbbm{R}^{m \times n}$, a vector $b \in \mathbbm{R}^m$ for some $m, n \in \mathbbm{N}$,
and bounded closed convex sets $\mathcal{C} \subseteq \mathbbm{R}^n,
\mathcal{D} \subseteq \mathbbm{R}^m$. Intuitively, $L$ discretizes the gradient operator
and $\mathcal{D}$ corresponds to $\mathcal{D}_{\tmop{loc}}$, i.e. specifies $\Psi$ in a
dual formulation.
The primal and dual objectives are
\begin{equation}
  f (u) \assign \max_{v \in \mathcal{D}} g \left( u, v \right) \quad \text{and} \quad
  f_D (v) \assign \min_{u \in \mathcal{C}} g (u, v)\,, \label{eq:primaldualobj}
\end{equation}
respectively. The dual problem then consists of maximizing $f_D(v)$ over $\mathcal{D}$.  As $\mathcal{C}$ and $\mathcal{D}$ are bounded, it follows from {\cite[Cor.~37.6.2]{Rockafellar1970}} that a saddle point $(u^{\ast},
v^{\ast})$ of $g$ exists. With {\cite[Lemma 36.2]{Rockafellar1970}}, this implies strong duality, i.e.
\begin{equation}
\min_{u \in \mathcal{C}} f (u)
= f (u^{\ast})
= g (u^{\ast}, v^{\ast})
= f_D (v^{\ast})
= \max_{v \in \mathcal{D}} f_D (v)\,.
\end{equation}
In our case, $\mathcal{C}, \mathcal{D}$ exhibit a specific product structure, which
allows to compute $f_D$ as well as the orthogonal projections
$\Pi_{\mathcal{C}}$ and $\Pi_{\mathcal{D}}$ efficiently, a fact that will
prove important in the algorithmic part. The evaluation of the primal objective $f$ on
the other hand can be more difficult, depending on the definition of $\mathcal{D}_{\tmop{loc}}$ resp.
$\mathcal{D}$, but is not required by the optimizer.
Note that in the discrete framework, we may easily substitute non-homogeneous,
spatially varying or even nonlocal regularizers by choosing $L$ and $b$
appropriately.


\subsection{Discretization}\label{sec:discrchap}

We discretize $\Omega$ by a regular grid, $\Omega =\{1, \ldots, n_1 \} \times
\cdots \times \{1, \ldots, n_d \} \subseteq \mathbbm{R}^d$ for $d \in \mathbbm{N}$, consisting of $n \assign | \Omega |$ pixels
$(x^1,\ldots,x^n)$ in lexicographical order.
We represent $u$ by its values $u = (u^1|\ldots|u^l) \in \mathbbm{R}^{n \times l}$ at these points,
and denote by $u^{x^j} \in \mathbbm{R}^l, j=1,\ldots,n,$ the $j$-th row of $u$, corresponding
to $u(x^j)$. The multidimensional image space $\mathbbm{R}^{n \times l}$ is equipped with the Euclidean inner product $\langle
\cdot, \cdot \rangle$ over the vectorized elements. We identify $u \in \mathbbm{R}^{n \times l}$
with the vector in $\mathbbm{R}^{nl}$ obtained by concatenating the columns.

Let $\mathrm{\tmop{grad}} : = ( \mathrm{\tmop{grad}}_1^{\top}|\ldots|
\mathrm{\tmop{grad}}_d^{\top})^{\top}$ be the $d$-dimensional forward
differences gradient operator for Neumann boundary conditions. Then
$\mathrm{\tmop{div}} : = - \mathrm{\tmop{grad}}^{\top}$ is the backward
differences divergence operator for Dirichlet boundary conditions. These
operators extend to $\mathbbm{R}^{n \times l}$ via
$\mathrm{\mathrm{\tmop{Grad}}} : = (I_l \otimes \mathrm{\tmop{grad}})$,
$\mathrm{\tmop{Div}} : = (I_l \otimes \mathrm{\tmop{div}})$.
We define, for some $k \geqslant 1$ as will be specified below, the convex sets
\begin{eqnarray}
  &  & \mathcal{C} \assign \{u \in \mathbbm{R}^{n \times l} |u^{x^j} \in
  \Delta_l, j = 1, \ldots, n\}\,, \\
  &  & \mathcal{D} \assign \prod_{x \in \Omega} \mathcal{D}_{loc} \subseteq
  \mathbbm{R}^{n \times d \times k}\,. 
\end{eqnarray}
The set $\mathcal{D}_{\tmop{loc}}$ and the operator $L$ depend on the chosen
regularizer. Note that for $L \assign
\tmop{Grad}$, $k = l$ and
\begin{equation}
  \mathcal{D}_{loc} \assign \mathcal{D}_{\tmop{loc}}^I \assign \left\{ p =
  (p^1|\ldots|p^l) \in \mathbbm{R}^{d \times l} |\|p\| \leqslant 1
  \right\}\,,
\end{equation}
the primal objective in the saddle point formulation discretizes
the classical vector-valued total variation,
\begin{equation}
  \tmop{TV} (u) \assign \sigma_{\tmop{Div} \mathcal{D}_{}} (u) =
  \sigma_{\mathcal{D}} (\tmop{Grad} u) = \sum_{j = 1}^n \|G_{x^j} u\|\,,  \label{eq:tvdef-basic}
\end{equation}
where each of the $G_{x^j}$ is an $(ld) \times (nl)$ matrix composed of rows of $(
\mathrm{\tmop{Grad}})$ such that $G_{x^j} u$ constitutes the vectorized
discrete Jacobian of $u$ at the point $x^j$.

Projections on the set $\mathcal{C}$ are highly separable and can be computed
exactly in a finite number of steps {\cite[Alg.~4]{Michelot1986}}, see Alg.~\ref{alg:projsimplex}
for a summary of the pointwise operation.
\begin{algorithm}[tb]
\caption{Projection of $y\in\mathbbm{R}^l$ onto the standard simplex $\Delta_l$}\label{alg:projsimplex}
\begin{algorithmic}[1]
\STATE $Z^{(0)} \leftarrow \emptyset$, $y^{(0)} \leftarrow y \in \mathbbm{R}^l$, $k \leftarrow 0$.
\REPEAT
		\STATE $\tilde{y}^{(k+1)}_i \leftarrow \left\{	
\begin{array}{ll}
	0, & i \in Z^{(k)}\,,\\
	y^{(k)}_i - \frac{e^{\top} y^{(k)}}{l-|Z^{(k)}|}, & \text{otherwise}\,,
\end{array}\right.\, (i = 1,\ldots,l)\,.$
    \STATE $Z^{(k+1)} \leftarrow \{i \in \{1,\ldots,l\} | i \in Z^{(k)} \text{ or } \tilde{y}^{(k+1)}_i < 0\}$.
    \STATE $y^{(k+1)}_i \leftarrow \max \{\tilde{y}^{(k+1)}_i,0\}\quad(i = 1,\ldots,l)$.
    \STATE $k \leftarrow k+1$.
\UNTIL{$\tilde{y}^{(k+1)} \geqslant 0$}.
\end{algorithmic}
\end{algorithm}
The dual objective is
\begin{equation}
  f_D(v) = - \langle b,v \rangle + \min_{u \in \mathcal{C}} \langle u, L^{\top} v + s \rangle\,.  
\end{equation}
Since the minimum decouples spatially, and $\min_{y \in \Delta_l} \langle y, z \rangle = \tmop{vecmin}(z) \assign \min_{i} z_i$ for all $z\in\mathbb{R}^l$, the dual objective can always be evaluated by summing, over all points $x \in \Omega$, the minima over the components of $L^{\top} v + s$ corresponding to each point, denoted by $(L^{\top} v + s)_{x^j}$,
\begin{equation}
  f_D(v) = - \langle b,v \rangle + \sum_{j=1}^n \tmop{vecmin}\left((L^{\top} v + s)_{x^j}\right)\,.
\end{equation}
This fact is helpful if evaluating the primal objective is costly because the dual
set $\mathcal{D}_{\tmop{loc}}$ has a complicated structure.


\subsection{Specialization for the Local Envelope Method}\label{sec:discr-envelope}
For a metric interaction potential $\pot: \{1,\ldots,l\}^2\rightarrow \mathbbm{R}_{\geqslant 0}$, we set $k \assign l$ and
\begin{eqnarray}
  L & \assign & \tmop{Grad}\,, \nonumber\\
  \mathcal{D}_{\tmop{loc}} & \assign & \mathcal{D}_{\tmop{loc}}^{\pot} = \bigcap_{i \neq j} \{ v = \left( v^1|\ldots|v^l \right) \in \mathbbm{R}^{d \times l} |
  \|v^i - v^j \| \leqslant \pot (i, j) \}\label{eq:defdlocd}
\end{eqnarray}
as in (\ref{eq:dlocd}). We arrive at the saddle point form (\ref{eq:discretemodelx}) with
$\mathcal{C}$, $\mathcal{D}$, and $L$ defined as above, $m = nl$ and $b = 0$.
However due to the generality of the regularizer (cf. Prop.~\ref{prop:jdsatisfies}), the primal objective $f$ cannot be easily computed anymore (for the special case of three labels there is a derivation in {\cite{Chambolle2008}}).
Projections on $\mathcal{D}$ can be computed for all $x \in \Omega$ in
parallel, while projections on the individual sets $\mathcal{D}_{\tmop{loc}}^{\pot}$ can
be computed by the Dykstra algorithm, cf.~{\cite{Chambolle2008}} and Sect.~\ref{sec:projdualcons}.


\subsection{Specialization for the Euclidean Metric Method}\label{sec:discr-eucl}
For $A \in \mathbbm{R}^{k \times l}$ as in (\ref{eq:tva}), the Euclidean regularizer
can be obtained by setting
\begin{eqnarray}
  L & \assign & (\tmop{Grad}) (A \otimes I_n)\,, \nonumber\\
  \mathcal{D}_{\tmop{loc}} & \assign & \mathcal{D}_{\tmop{loc}}^I = \left\{ v
  = (v^1|\ldots|v^k) \in \mathbbm{R}^{d \times k} |\|v\| \leqslant 1 \right\}\,. \label{eq:defdloci}
\end{eqnarray}
Departing from the definition (\ref{eq:dlocadef}) of $\mathcal{D}_{\tmop{loc}}^A$,
$A$ is merged into $L$, as the optimization method will rely
on projections on $\mathcal{D}_{\tmop{loc}}$. Including $A$ into
$\mathcal{D}_{\tmop{loc}}$, i.e. by setting
\begin{equation}
 \mathcal{D}_{\tmop{loc}} \assign (A\otimes I_n)^{\top} \mathcal{D}_{\tmop{loc}}^I
\end{equation}
and $L \assign \tmop{Grad}$, would prevent computing the projection in closed form.
Projecting on the unit ball $\mathcal{D}_{\tmop{loc}}^I$ on the other hand is
trivial. The discretized regularizer can be explicitly evaluated, since
\begin{eqnarray}
  \Psi (z) & = & \|z A^{\top}\|\,.
\end{eqnarray}
Comparison to (\ref{eq:tvdef-basic}) yields
\begin{eqnarray}
  J_A (u) & = & \tmop{TV} ((A \otimes I_n) u)\,. \label{eq:tvadiscr}
\end{eqnarray}
We finally arrive at the form (\ref{eq:discretemodelx}) with $\mathcal{C}$,
$\mathcal{D}$, and $L$ defined as above, $m = nk$ and $b = 0$. Projections on
$\mathcal{D}$ are highly separable and thus can be computed
easily. The primal objective can be evaluated in closed form using (\ref{eq:tvadiscr}).


\subsection{Optimality}\label{sec:optimality}

If $f_D$ and $f$ can be explicitly computed, any $v \in \mathcal{D}$ provides an optimality bound on the primal objective,
with respect to the optimal value $f(u^{\ast})$, via the numerical \emph{primal-dual gap} $f(u) - f_D(v)$,
\begin{equation}
  0 \leqslant f (u) - f (u^{\ast}) \leqslant f (u) - f_D (v)\,.\label{eq:gap}
\end{equation}
Assuming $f$ and $f_D$ can be evaluated, the gap is a convenient stopping criterion.
To improve the scale invariance, it is often practical to stop on the \emph{relative gap}
\begin{equation}
	\frac{f(u) - f_D(v)}{f_D(v)} \label{eq:relativegap}
\end{equation}
instead, which gives a similar bound.
However convergence in the objective alone does not necessarily imply convergence in $u$, as the
minimizer of the original problem (\ref{eq:discretemodelx}) is generally not unique. This
stands in contrast to the well-known ROF-type problems \cite{Rudin1992}, where the
uniqueness is induced by a quadratic data term.

For some applications, after solving the relaxed problem a discrete solution
-- or ``hard'' labeling -- still needs to be recovered, i.e. the relaxed solution
needs to be {\tmem{binarized}}. For the continuous two-class case with the classical
$\tmop{TV}$ regularizer, {\cite{Nikolova2006}} showed that an exact solution
can be obtained by thresholding at almost any threshold. However, their
results do not seem to transfer to the multi-class case.

Another difficulty lies in the discretization: In order to apply the
thresholding theorem, a crucial ``coarea''-like property must hold for the
{\tmem{discretized}} problem {\cite{Chambolle2009}}, which holds for the graph-based pairwise
$\ell_1$-, but not the higher order $\ell_2$-discretization of the TV. Thus, even in the
two-class case, simple thresholding may lead to a suboptimal discrete solution.

Currently we are not aware of an a priori bound on the error introduced by the
binarization step in the general case. In practice, any dual feasible point
together with (\ref{eq:gap}) yields an a posteriori optimality bound: Let
$u^{(N)}, v^{(N)}$ be a pair of primal resp. dual feasible iterates,
$\bar{u}^{(N)}$ the result of the binarization step applied to $u^{(N)}$, and
$\bar{u}^{\ast}$ the optimal discrete solution. Then $\bar{u}^{(N)}$ is primal
feasible, and its suboptimality with respect to the optimal discrete solution is bounded from above by
\begin{eqnarray}
  & f ( \bar{u}^{(N)}) - f ( \bar{u}^{\ast}) \leqslant f ( \bar{u}^{(N)}) - f (u^{\ast})
  \leqslant f ( \bar{u}^{(N)}) - f_D (v^{(N)})\,. &
\end{eqnarray}
Computation of $f_D$, and in many cases also $f$, is efficient as outlined in Sect.~\ref{sec:discrchap}.


\subsection{Improving the Binarization}\label{sec:binarization}

There seems to be no obvious best choice for the binarization step. The
simplest choice is the \emph{first-max} approach: the label $\ell (x)$ is set to the index of the first
maximal component of the relaxed solution $\bar{u}^{(N)} (x)$. However, this
might lead to undesired side effects: Consider the segmentation of a grayscale
image with the three labels $1, 2, 3$ corresponding to the gray level intensity,
together with the linear distance $\pot (i, j) = |i - j|$, which is exactly
representable using the Euclidean distance approach with $A = (1\,2\,3)$.
Assume there is a region where $\bar{u}^{(N)} = (1 / 3 + \delta(x), 1 / 3, 1 / 3 - \delta(x))$ for some small noise
$\delta(x) \in \mathbbm{R}$. The most ``natural'' choice given the interpretation
as grayscale values is the constant labeling $\ell (x) = 2$. The first-max
approach gives $\ell (x) \in \{1, 3\}$, depending on the sign of $\delta(x)$,
which leads to a noisy final segmentation.

On closer inspection, the first-max approach -- which works well for the Potts
distance -- corresponds to choosing
\begin{eqnarray}
  \ell (x) & = & \arg \min_{\ell \in \{1, \ldots, l\}} \|u (x) - e^{\ell} \|\,, 
\end{eqnarray}
with the smallest $\ell$ chosen in case of ambiguity. We propose to extend this
to non-uniform distances by setting
\begin{eqnarray}
  \ell (x) & = & \arg \min_{\ell \in \{1, \ldots, l\}} \bar{\Psi} \left( u (x)
  - e^{\ell} \right)\,,  \label{eq:nonuniformbinarization}\\
  &  & \bar{\Psi} : \mathbbm{R}^l \rightarrow \mathbbm{R}, \bar{\Psi} (z)
  \assign \Psi \left( e^1 z^{\top} \right)\,. \nonumber
\end{eqnarray}
That is, we select the label corresponding to the nearest unit vector
{\tmem{with respect to $\bar{\Psi}$}} (note that instead of $e^1$ we could choose
any normalized vector as $\Psi$ is assumed to be rotationally invariant). In fact, for the linear distance
example above we have $\bar{\Psi} (z) = | - z_1 + z_3 |$. Thus
\begin{eqnarray}
  \bar{\Psi} (u (x) - e^1) & = & |1 - 2 \delta(x) |\,, \\
  \bar{\Psi} (u (x) - e^2) & = & |2 \delta(x) |\,, \nonumber\\
  \bar{\Psi} (u (x) - e^3) & = & |1 + 2 \delta(x) |\,, \nonumber
\end{eqnarray}
and for small $\delta$ we will get the stable and semantically correct choice $\ell(x) = 2$.
This method proved to work well in practice, and considerably reduced the
suboptimality introduced by the binarization step (Sect.~\ref{sec:tightness}). In case there is no closed
form expression of $\Psi$, it can be numerically approximated as outlined in Sect.~\ref{sec:objeval}.


\section{Optimization}\label{sec:optimization}

When optimizing the saddle point problem (\ref{eq:discretemodelx}), one must take care
of its large-scale nature and the inherent nonsmoothness of the objective.
While interior-point solvers are known to be very fast for small to medium sized
problems, they are not particularly suited well for massively parallel computation, such
as on the upcoming GPU platforms, due to the expensive inner Newton iterations.

We will instead focus on \emph{first order} methods involving only basic operations that
can be easily parallelized due to their local nature, such as evaluations of $L$ and $L^{\top}$
and projections on $\mathcal{C}$ and $\mathcal{D}$.
The first two methods are stated here for comparison,
the third one is new. It additionally requires
evaluating $(I+L^{\top} L)^{-1}$, which is potentially non-local,
but in many cases can still be computed fast and explicitely using the Discrete Cosine Transform as shown below.

The following approaches rely on computing projections on the full sets $\mathcal{D}$ resp.~$\mathcal{D}_{\tmop{loc}}$, which requires an iterative procedure for non-trivial
constraint sets such as obtained when using the local envelope method (Sect.~\ref{sec:projdualcons}).
We would like to mention that by introducing further auxiliary variables and a suitable splitting approach,
these inner iterations can also be avoided \cite{Lellmann2010}. However the number of additional variables grows
quadratically in the number of labels, therefore the approach is only feasible for
relatively small numbers of labels and memory requirements are usually one to several orders of magnitude
higher than for the approaches discussed next.


\subsection{Fast Primal-Dual Method}\label{sec:fpd}

One of the most straightforward approaches for optimizing (\ref{eq:discretemodelx}) is to fix
small primal and dual step sizes $\tau_P$ resp. $\tau_D$, and alternatingly apply projected gradient descent resp. ascent on the
primal resp. dual variables. This \emph{Arrow-Hurwicz} approach \cite{Arrow1964} was proposed in a 
PDE framework for solving the two-class labeling problem in \cite{Appleton2006} and recently used in \cite{Chambolle2008}.
An application to denoising problems can be found in \cite{Zhu2008}. However it seems nontrivial to derive sufficient conditions for convergence. Therefore in \cite{Pock2009} the authors propose the \emph{Fast Primal-Dual} (FPD) method, a variant of Popov's saddle point method \cite{Popov1980} with provable convergence. The algorithm is summarized in Alg.~\ref{alg:fpd}.

\begin{algorithm}[tb]
\caption{FPD Multi-Class Labeling} \label{alg:fpd}
\begin{algorithmic}[1]
\STATE Choose $\bar{u}^{(0)} \in \mathbbm{R}^{n \times l}, v^{(0)} \in \mathbbm{R}^{n \times d \times l}$.
\STATE Choose $\tau_P > 0, \tau_D > 0, N \in \mathbbm{N}$.
\FOR{$k = 0, \ldots, N-1$}
    \STATE $v^{(k+1)} \leftarrow \Pi_{\mathcal{D}} \left( v^{(k)} + \tau_D \left( L \bar{u}^{(k)} - b \right) \right)$.
    \STATE $u^{(k+1)} \leftarrow \Pi_{\mathcal{C}} \left( u^{(k)} - \tau_P \left( L^{\top} v^{(k+1)} + s \right) \right)$.
    \STATE $\bar{u}^{(k+1)} \leftarrow 2 u^{(k+1)} - u^{(k)}$.
\ENDFOR
\end{algorithmic}
\end{algorithm}

Due to the explicit steps involved, there is an upper bound condition on the step size to assure convergence, which can be shown to be $\tau_P \tau_D < 1 / \|L\|^2$ \cite{Pock2009}. The operator norm can be bounded according to
\begin{equation}
  \|L\| \leqslant \|\tmop{Grad}\| \leqslant 2 \sqrt{d}
\end{equation}
for the envelope method, and
\begin{equation}
  \|L\| \leqslant \| \tmop{Grad} \|\|A\| \leqslant 2 \sqrt{d} \|A\|
\end{equation}
for the Euclidean metric method. As both the primal and dual iterates are always feasible due to the projections, a stopping criterion based on the primal-dual gap as outlined in Sect.~\ref{sec:optimality} can be employed.


\subsection{Nesterov Method}\label{sec:nesterov}

We will provide a short summary of the application of Nesterov's multi-step method {\cite{Nesterov2004}}
to the saddle point problem (\ref{eq:discretemodelx}) as proposed in \cite{Lellmann2009a}. In contrast to
the FPD method, it treats the nonsmoothness by first applying a smoothing step and
then using a smooth constrained optimization method. The amount of smoothing is balanced in such
a way that the overall number of iterations to produce a solution with a specific accuracy is minimized. 

The algorithm has a theoretical worst-case complexity of $O(1 / \varepsilon)$ for finding
an $\varepsilon$-optimal solution, i.e. $f(u^{(N)})-f(u^{\ast})\leqslant \varepsilon$.
It has been shown to give accurate results for denoising {\cite{Aujol2008}} and general $\ell_1$-norm based problems {\cite{Weiss2007,Becker2009}}.
Besides the desired accuracy, no other parameters have to be provided.
The complete algorithm for our saddle point formulation is shown in Alg.~\ref{alg:nesterovmain}.

\begin{algorithm}[tb]
\caption{Nesterov Multi-Class Labeling} \label{alg:nesterovmain}
\begin{algorithmic}[1]
\STATE Let $c_1 \in \mathcal{C}$, $c_2 \in \mathcal{D}$ and $r_1, r_2 \in
  \mathbbm{R}$ such that $\mathcal{C} \subseteq \mathcal{B}_{r_1} (c_1)$ and
  $\mathcal{D} \subseteq \mathcal{B}_{r_2} (c_2)$; $C \geqslant \|L\|$.
\STATE Choose $x^{(0)} \in \mathcal{C}$ and $N \in \mathbbm{N}$.
\STATE Let $\mu \leftarrow \frac{2\|L\|}{N + 1}  \frac{r_1}{r_2 }$.
\STATE Set $G^{(- 1)} \leftarrow 0, w^{(- 1)} \leftarrow 0$.

\FOR{$k = 0, \ldots, N$}
    \STATE $V \leftarrow \Pi_{\mathcal{D}} \left( c_2 + \frac{1}{\mu}  \left(Lx^{(k)} - b \right) \right)$.  
    \STATE $w^{(k)} \leftarrow w^{(k - 1)} + (k + 1) V$. \label{step:nesaverage}
    \STATE $v^{(k)} \leftarrow \frac{2}{(k+1)(k+2)} w^{(k)}$.
    \STATE $G \leftarrow s + L^{\top} V$.
    \STATE $G^{(k)} \leftarrow G^{(k - 1)} + \frac{k + 1}{2} G$.
    \STATE $u^{(k)} \leftarrow \Pi_{\mathcal{C}} \left( x^{(k)} - \frac{\mu}{\|L\|^2} G \right)$.
    \STATE $z^{(k)} \leftarrow \Pi_{\mathcal{C}} \left( c_1 - \frac{\mu}{\|L\|^2} G^{(k)} \right)$.
    \STATE $x^{(k + 1)} \leftarrow \frac{2}{k + 3} z^{(k)} + \left( 1 - \frac{2}{k + 3} \right) u^{(k)}$.
\ENDFOR
\end{algorithmic}
\end{algorithm} 

The only expensive operations are the projections $\Pi_{\mathcal{C}}$ and
$\Pi_{\mathcal{D}}$, which are efficiently computable as shown above. The
algorithm converges 
in any case and provides explicit suboptimality bounds:

\begin{proposition}\label{prop:nesterovconv}
  In Alg.~\ref{alg:nesterovmain}, the iterates $u^{(k)}$,$v^{(k)}$ are primal resp. dual feasible, i.e. $u^{(k)} \in \mathcal{C}$, $v^{(k)} \in \mathcal{D}$. Moreover, for any solution $u^{\ast}$ of the relaxed problem (\ref{eq:discretemodelx}), the relation
  \begin{equation}
    f (u^{(N)}) - f (u^{\ast}) \leqslant f (u^{(N)}) - f_D (v^{(N)}) \leqslant
    \frac{2 r_1 r_2 C}{(N + 1)} \label{eq:gapnesterov}
  \end{equation}
  holds for the final iterates $u^{(N)}$,$v^{(N)}$.
\end{proposition}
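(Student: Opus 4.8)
The plan is to recognize Algorithm~\ref{alg:nesterovmain} as Nesterov's optimal first-order method \cite{Nesterov2004} applied to a \emph{smoothed} surrogate of the primal objective, and then to exploit the bilinear structure of $g$ to upgrade the usual primal-only suboptimality estimate into the full primal-dual gap bound \eqref{eq:gapnesterov}. First I would introduce the smoothed objective
\begin{equation*}
  f_{\mu}(u) \assign \langle u, s \rangle + \max_{v \in \mathcal{D}} \left\{ \langle Lu - b, v \rangle - \mu\, d_2(v) \right\}, \quad d_2(v) \assign \tfrac{1}{2}\|v - c_2\|^2,
\end{equation*}
and observe that its unique inner maximizer at $u = x^{(k)}$ is exactly the vector $V^{(k)} = \Pi_{\mathcal{D}}(c_2 + \tfrac{1}{\mu}(Lx^{(k)} - b))$ computed in step~6, that $\nabla f_{\mu}(x^{(k)}) = s + L^{\top} V^{(k)}$ is the $G$ of step~9, and that $\nabla f_{\mu}$ is Lipschitz with constant $L_{\mu} = \|L\|^2/\mu$. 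With these identifications the updates defining $u^{(k)}$, $z^{(k)}$, $x^{(k+1)}$ become precisely the gradient step with step length $1/L_{\mu} = \mu/\|L\|^2$, the aggregated ``mirror'' step using $G^{(k)} = \sum_{i=0}^{k} \tfrac{i+1}{2}\nabla f_{\mu}(x^{(i)})$, and the accelerated convex combination of Nesterov's scheme for minimizing the smooth convex function $f_{\mu}$ over $\mathcal{C}$.

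Feasibility is the routine part. The points $u^{(k)}$ and $z^{(k)}$ are images of $\Pi_{\mathcal{C}}$, hence lie in $\mathcal{C}$, and $x^{(k+1)}$ is a convex combination of $z^{(k)}$ and $u^{(k)}$; an induction then gives $x^{(k)}, u^{(k)}, z^{(k)} \in \mathcal{C}$ for all $k$. On the dual side each $V^{(k)}$ is an image of $\Pi_{\mathcal{D}}$ and hence lies in $\mathcal{D}$, while by steps~7--8 the iterate $v^{(N)} = \tfrac{2}{(N+1)(N+2)}\sum_{k=0}^{N}(k+1)V^{(k)}$ is a weighted average with nonnegative weights summing to one; convexity of $\mathcal{D}$ yields $v^{(N)} \in \mathcal{D}$.

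For the quantitative bound I would invoke the estimate-sequence inequality underlying the accelerated method, namely
\begin{equation*}
  \tfrac{(N+1)(N+2)}{4}\, f_{\mu}(u^{(N)}) \leqslant \min_{u \in \mathcal{C}} \Big\{ \sum_{k=0}^{N} \tfrac{k+1}{2}\,\ell_k(u) + \tfrac{L_{\mu}}{2}\|u - c_1\|^2 \Big\},
\end{equation*}
where $\ell_k(u) = f_{\mu}(x^{(k)}) + \langle \nabla f_{\mu}(x^{(k)}), u - x^{(k)} \rangle$ is the linearization of $f_{\mu}$ at $x^{(k)}$; this is the inductive core of Nesterov's method and is where the $z^{(k)}$, $x^{(k+1)}$ recursions and the step length $\mu/\|L\|^2$ are consumed. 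The decisive algebraic step, specific to our bilinear setting, is that each linearization collapses exactly to $\ell_k(u) = g(u, V^{(k)}) - \mu\, d_2(V^{(k)})$, because $f_{\mu}(x^{(k)})$ and $\nabla f_{\mu}(x^{(k)})$ are built from the same maximizer $V^{(k)}$ and $g$ is affine in $u$. Summing and using linearity of $g$ in $v$ turns the aggregate linear model into $\tfrac{(N+1)(N+2)}{4}\,g(u, v^{(N)})$ minus a nonnegative, $u$-independent smoothing constant; dropping that constant and evaluating the remaining minimum at the $\mathcal{C}$-minimizer $u_0$ of $g(\cdot, v^{(N)})$, for which $g(u_0, v^{(N)}) = f_D(v^{(N)})$ and $\|u_0 - c_1\|^2 \leqslant r_1^2$, gives
\begin{equation*}
  f_{\mu}(u^{(N)}) \leqslant f_D(v^{(N)}) + \frac{2\|L\|^2 r_1^2}{\mu (N+1)^2}.
\end{equation*}

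To conclude I would account for the smoothing bias via $f(u^{(N)}) \leqslant f_{\mu}(u^{(N)}) + \mu D_2$ with $D_2 = \max_{v \in \mathcal{D}} d_2(v) \leqslant \tfrac{1}{2}r_2^2$, which yields $f(u^{(N)}) - f_D(v^{(N)}) \leqslant \tfrac{2\|L\|^2 r_1^2}{\mu(N+1)^2} + \tfrac{\mu r_2^2}{2}$; the choice $\mu = \tfrac{2\|L\|}{N+1}\tfrac{r_1}{r_2}$ from step~3 balances the two terms and produces $\tfrac{2\|L\| r_1 r_2}{N+1} \leqslant \tfrac{2 C r_1 r_2}{N+1}$ since $C \geqslant \|L\|$. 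The left inequality in \eqref{eq:gapnesterov} is just weak duality, $f_D(v^{(N)}) \leqslant f(u^{\ast})$. The main obstacle is not any single estimate but assembling the accelerated-method induction and, above all, verifying the collapse of the aggregated linearization to $g(u, v^{(N)})$: this is exactly what converts Nesterov's primal $O(1/N)$ guarantee into an \emph{a posteriori} primal-dual gap certified by the explicit dual iterate $v^{(N)}$.
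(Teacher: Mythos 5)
Your proof is correct and follows essentially the same route as the paper: the paper's entire proof is a one-line invocation of Nesterov's Theorem 3 with exactly the identifications you set up ($\hat{f}(u) = \langle u, s\rangle$, $A = L$, $\hat{\phi}(v) = \langle b, v\rangle$, $d_1(u) = \tfrac{1}{2}\|u-c_1\|^2$, $d_2(v) = \tfrac{1}{2}\|v-c_2\|^2$, $D_1 = \tfrac{1}{2}r_1^2$, $D_2 = \tfrac{1}{2}r_2^2$, $\sigma_1 = \sigma_2 = 1$, $M = 0$). Everything you wrote out -- the smoothed objective whose maximizer is the projection in step~6, the collapse of the aggregated linearizations into $g(u, v^{(N)})$ via bilinearity, and the balancing choice of $\mu$ -- is precisely the internal content of that cited theorem, so your argument is a self-contained unpacking of the paper's citation rather than a different method.
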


\begin{proof}
  Apply {\cite[Thm.~3]{Nesterov2004}} with the notation $\hat{f} (u) = \langle u, s
  \rangle$, $A = L$, $\hat{\phi} (v) = \langle b, v \rangle$, $d_1 (u) \assign
  \frac{1}{2} \|u - c_1 \|^2$, $d_2 (v) \assign \frac{1}{2} \|v - c_2 \|^2$,
  $D_1 = \frac{1}{2} r_1^2$, $D_2 = \frac{1}{2} r_2^2$, $\sigma_1 = \sigma_2 =
  1$, $M = 0$.
\end{proof}

\begin{corollary}
  For given $\varepsilon > 0$, applying Alg.~\ref{alg:nesterovmain} with
  \begin{eqnarray}
    N & = & \left\lceil 2 r_1 r_2 C \varepsilon^{- 1} - 1 \right\rceil 
  \end{eqnarray}
  yields an $\varepsilon$-optimal solution of (\ref{eq:discretemodelx}), i.e.
  $f (u^{(N)}) - f (u^{\ast}) \leqslant \varepsilon$.
\end{corollary}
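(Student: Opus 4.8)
The plan is to read the statement off directly from the a~priori bound already established in Proposition~\ref{prop:nesterovconv}. That proposition guarantees, for the iterates produced after $N$ steps of Alg.~\ref{alg:nesterovmain}, the chain
\[
f(u^{(N)}) - f(u^{\ast}) \leqslant f(u^{(N)}) - f_D(v^{(N)}) \leqslant \frac{2 r_1 r_2 C}{N+1},
\]
exactly \eqref{eq:gapnesterov}. Hence the whole task reduces to choosing the iteration count $N$ large enough that the right-hand side does not exceed $\varepsilon$, and the corollary is nothing more than the inversion of this monotone error estimate.

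First I would impose $\frac{2 r_1 r_2 C}{N+1} \leqslant \varepsilon$ and solve for $N$. Since $r_1, r_2 > 0$, $C \geqslant \|L\| \geqslant 0$, and $N+1 > 0$, this inequality is equivalent to $N + 1 \geqslant 2 r_1 r_2 C \varepsilon^{-1}$, i.e. $N \geqslant 2 r_1 r_2 C \varepsilon^{-1} - 1$. Because the loop bound $N$ must be a nonnegative integer, the smallest admissible choice is $N = \lceil 2 r_1 r_2 C \varepsilon^{-1} - 1 \rceil$, which is precisely the stated value. For this $N$ the denominator $N+1$ is at least $2 r_1 r_2 C \varepsilon^{-1}$, so the upper bound in \eqref{eq:gapnesterov} is at most $\varepsilon$; combining this with the leftmost inequality of the chain yields $f(u^{(N)}) - f(u^{\ast}) \leqslant \varepsilon$, the asserted $\varepsilon$-optimality.

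I expect essentially no obstacle here, as the argument is purely an algebraic inversion of a bound that is decreasing in $N$; the monotonicity guarantees that the chosen $N$ (and any larger one) suffices, and that replacing $\|L\|$ by an upper estimate $C$ only enlarges $N$ while preserving the guarantee. The only points meriting a word of care are the rounding via $\lceil \cdot \rceil$, which must respect the integrality of the iteration count, and the degenerate regime $2 r_1 r_2 C \leqslant \varepsilon$, where the ceiling can return a value $\leqslant 0$; in that case one simply takes $N = 0$, for which the bound $\frac{2 r_1 r_2 C}{N+1} = 2 r_1 r_2 C \leqslant \varepsilon$ already holds, so the conclusion is still valid.
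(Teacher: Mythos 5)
Your proposal is correct and coincides with the paper's own (implicit) argument: the corollary is stated as an immediate consequence of Prop.~\ref{prop:nesterovconv}, obtained exactly as you do by inverting the monotone bound $2 r_1 r_2 C/(N+1) \leqslant \varepsilon$ and rounding up to an integer. Your additional remarks on the degenerate case $2 r_1 r_2 C \leqslant \varepsilon$ are a harmless refinement beyond what the paper records.
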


For the discretization outlined in Sect.~\ref{sec:discrchap}, we choose $c_1 =
\frac{1}{l} e$ and $r_1 = \sqrt{n (l - 1) / l}$, which leads to the following
complexity bounds to $u^{(N)}$ with respect to the suboptimality~$\varepsilon$.
\begin{itemize}
  \item {\tmem{Envelope method (\ref{eq:defdlocd}).}} From the previous remarks,
  $C \assign 2 \sqrt{d} \geqslant \| L \|$.
  Moreover $c_2 = 0$ and by Prop.~\ref{prop:bounddlocd} (see Appendix), we have $\mathcal{D}_{\tmop{loc}}^{\pot}
  \subseteq \mathcal{B}_{\alpha_{\pot}} (0)$ with
  \begin{eqnarray}
    \alpha_{\pot} & \assign & \min_i \left(\sum_j \pot (i, j)^2\right)^{1/2}\,,
  \end{eqnarray}
  and thus $r_2 = \alpha_{\pot} \sqrt{n}$.
  The total complexity in terms of the number of iterations is
  \begin{eqnarray}
    & & O(\varepsilon^{- 1} n \sqrt{d} \alpha_{\pot})\,.
  \end{eqnarray}
  \item {\tmem{Euclidean metric method (\ref{eq:defdloci}).}} Here we may set $C = 2 \sqrt{d} \|A\|$,
  $c_2 = 0$ and $r_2 = \sqrt{n}$ for a total complexity of
  \begin{eqnarray}
    & & O(\varepsilon^{- 1} n \sqrt{d} \|A\|)\,.
  \end{eqnarray}
\end{itemize}
We arrive at a parameter-free algorithm, with the exception of the desired
suboptimality bound. From the sequence $(u^{(k)},v^{(k)})$ we may again compute the current
primal-dual gap at each iteration.
As a unique feature, the number of required iterations can be determined
a priori and independently of the variables in the data term, which could
prove useful in real-time applications.


\subsection{Douglas-Rachford Method}\label{sec:douglas-rachford}

We demonstrate how to apply the Douglas-Rachford splitting
approach \cite{Douglas1956} to our problem. By introducing auxiliary variables,
we can again reduce the inner steps to projections on the sets $\mathcal{C}$ and $\mathcal{D}$.
This is in contrast to a more straightforward splitting approach such as \cite{Lellmann2009},
where the inner steps require to solve ROF-type problems that include a quadratic data term.

Minimizing a proper, convex, lower-semicontinuous (\tmtextit{lsc})
function $f : X \rightarrow \mathbbm{R} \cup \{\pm\infty\}$
over a finite dimensional vector space $X \assign \mathbbm{R}^n$ can be regarded as finding a zero of its
-- necessarily maximal monotone {\cite{Rockafellar2004}} -- subdifferential
operator $T \assign \partial f : X \rightarrow 2^{X}$. In the operator
splitting framework, $\partial f$ is assumed to be decomposable into the sum
of two ``simple'' operators, $T = A + B$, of which forward and backward steps
can practically be computed. Here, we consider the (tight)
{\tmem{Douglas-Rachford-Splitting}} iteration {\cite{Douglas1956,Lions1979}}
with the fixpoint iteration
\begin{eqnarray}
  \bar{u}^{(k + 1)} & = & (J_{\tau A} (2 J_{\tau B} - I) + (I - J_{\tau B})) (\bar{u}^{(k)})\,, \label{eq:dr-fixpoint}
\end{eqnarray}
where $J_{\tau T} \assign (I + \tau T)^{- 1}$ is the \tmtextit{resolvent} of
$T$. Under the very general constraint that $A$ and $B$ are maximal monotone
and $A + B$ has at least one zero, the sequence $(\bar{u}^{(k)})$ is uniquely defined and will converge to a
point $\bar{u}$, with the additional property that $u \assign J_{\tau B} (\bar{u})$ is a
zero of $T$ {\cite[Thm.~3.15, Prop.~3.20, Prop.~3.19]{Eckstein1989}}.
In particular if $f = f_1 + f_2$ for proper, convex, lsc $f_i$ such that
the relative interiors of their domains have a nonempty intersection,
\begin{equation}
\tmop{ri}(\tmop{dom} f_1) \cap \tmop{ri} (\tmop{dom} f_2) \neq \emptyset\,,
\end{equation}
it follows {\cite[Cor.~10.9]{Rockafellar2004}} that $\partial f = \partial f_1 + \partial f_2$, and
$A \assign \partial f_1$, $B \assign \partial f_2$ are maximal monotone. As
$x \in J_{\tau \partial f_i} (y) \Leftrightarrow x = \arg\min \{ (2 \tau)^{- 1} \|x -
y\|^2_2 + f_i (x) \}$, the computation of the resolvents reduces to proximal
point optimization problems involving only the $f_i$.
However, for straightforward splittings of \eqref{eq:discretemodelx}, such as
\begin{eqnarray}
  & & \min_{u} \underbrace{\max_{v \in \mathcal{D}} g \left( u, v \right)}_{f_1(u)} + \underbrace{\delta_{\mathcal{C}}(u)}_{f_2(u)}\,,
\end{eqnarray}
evaluating the resolvents requires to solve ROF-type problems \cite{Lellmann2009},
which is a strongly non-local operation and requires an iterative procedure,
introducing additional parameters and convergence issues.
Instead, we follow the procedure in \cite{Eckstein1992,Setzer2009a} of adding
auxiliary variables before splitting the objective in order to simplify the individual steps of the algorithm.
We introduce $w = L u$ and split according to
\begin{eqnarray}
  & & \min_{u \in \mathcal{C}} \max_{v \in \mathcal{D}} \langle u, s \rangle + \langle Lu, v \rangle - \langle b, v \rangle\\
  & = & \min_u \langle u, s \rangle + \sigma_{\mathcal{D}} (L u - b) + \delta_{\mathcal{C}}(u)\\
  & = & \min_{u, w} h(u,w) \assign \underbrace{\delta_{L u = w} (u, w)}_{h_1(u,w)} +
        \underbrace{\langle u, s \rangle + \delta_{\mathcal{C}} (u) +
        \sigma_{\mathcal{D}} (w - b)}_{h_2(u,w)}.\label{eq:dr-howtosplit}
\end{eqnarray}
We apply the tight Douglas-Rachford iteration to this formulation: Denote
\begin{eqnarray}
  (u^{(k)},w^{(k)}) & \assign & J_{\tau \partial B} (\bar{u}^{(k)},\bar{w}^{(k)})\,,\\
  (u'^{(k)},w'^{(k)}) & \assign & J_{\tau A} (2 J_{\tau B} - I) (\bar{u}^{(k)},\bar{w}^{(k)})\nonumber\\
  & = & J_{\tau A} (2 u^{(k)} - \bar{u}^{(k)}, 2 w^{(k)}-\bar{w}^{(k)})\,.
\end{eqnarray}
Then $(\bar{u}^{(k+1)},\bar{w}^{(k+1)}) = (\bar{u}^{(k)}+u'^{(k)}-u^{(k)},\bar{w}^{(k)}+w'^{(k)}-w^{(k)})$,
according to \eqref{eq:dr-fixpoint}.
Evaluating the resolvent $J_{\tau B}$ is equivalent to a proximal step on $h_2$; moreover 
due to the introduction of the auxiliary variables it decouples,
  \begin{eqnarray}
    u'^{(k)} & = & \arg\min_{u'} \{ \frac{1}{2} \|u' - (2 u^{(k)} - \bar{u}^{(k)}) +
    \tau s\|_2^2 + \delta_{\mathcal{C}} (u')\} \nonumber\\    
    & = & \Pi_{\mathcal{C}} \left( (2 u^{(k)} - \bar{u}^{(k)}) - \tau s \right), \\
    w'^{(k)} & = & \arg\min_{w'} \{ \frac{1}{2 \tau} \|w' - (2 w^{(k)} -
    \bar{w}^{(k)})\|_2^2 + \sigma_{\mathcal{D}} (w' - b)\} \nonumber\\
    & = & (2 w^{(k)} - \bar{w}^{(k)}) - \tau \Pi_{\mathcal{D}} \left(
    \frac{1}{\tau} (2 w^{(k)} - \bar{w}^{(k)} - b) \right) .  \label{eq:wkprimegeneral}
  \end{eqnarray} 
In a similar manner, $J_{\tau A}$ resp.~the proximal step on $h_1$ amounts to the least-squares problem
  \begin{equation}
    (u^{(k)}, w^{(k)}) = \arg \min_{u, w} \{ \frac{1}{2 \tau}  \left( \|u
    - \bar{u}^{(k)} \|^2_2 +\|w - \bar{w}^{(k)} \|^2 \right) + \delta_{L u = w} (u, w)\}.
  \end{equation}
Substituting the constraint $w^{(k)} = L u^{(k)}$ yields
  \begin{eqnarray}
    u^{(k)} & = & \arg \min_u \{\|u - \bar{u}^{(k)} \|^2_2 +\|L u - \bar{w}^{(k)} \|^2 \}\nonumber\\
    & = & (I + L^{\top} L)^{- 1}  \left( \bar{u}^{(k)} + L^{\top}  \bar{w}^{(k)} \right) .  \label{eq:ukiata}
  \end{eqnarray}
Finally, Alg.~\ref{alg:dr-primal} is obtained by substituting
$w''^{(k)} \assign\Pi_{\mathcal{D}} \left( \frac{1}{\tau} (2 w^{(k)} - \bar{w}^{(k)} - b) \right)$.
\begin{algorithm}[tb]
\caption{Douglas-Rachford Multi-Class Labeling} \label{alg:dr-primal}
\begin{algorithmic}[1]
\STATE Choose $\bar{u}^{(0)} \in \mathbbm{R}^{n \times l}, \bar{w}^{(0)} \in \mathbbm{R}^{n\times d \times l}$ (or set $\bar{w}^{(0)} = L \bar{u}^{(0)}$).
\STATE Choose $\tau > 0$.
\STATE $k \leftarrow 0$.
\WHILE{(not converged)}
  \STATE $u^{(k)} \leftarrow \Pi_{\mathcal{C}} \left( \bar{u}^{(k)} - \tau s \right)$.
  \STATE $w''^{(k)} \leftarrow \Pi_{\mathcal{D}} \left(\frac{1}{\tau} (\bar{w}^{(k)} - b) \right)$.

  \STATE $u'^{(k)} \leftarrow (I + L^{\top} L)^{- 1}  \left( (2 u^{(k)} - \bar{u}^{(k)}) + L^{\top} (\bar{w}^{(k)} - 2 \tau w''^{(k)}) \right)$.\label{step:ukiltl}
  \STATE $w'^{(k)} \leftarrow L u'^{(k)}$. 
  
  \STATE $\bar{u}^{(k + 1)} \leftarrow \bar{u}^{(k)} + u'^{(k)} - u^{(k)}$.
  \STATE $\bar{w}^{(k + 1)} \leftarrow w'^{(k)} + \tau w''^{(k)}$.
	\STATE $k \leftarrow k + 1$.
\ENDWHILE
\end{algorithmic}
\end{algorithm} 
  
Solving the linear equation system \eqref{eq:ukiata} can often be greatly accelerated by exploiting the fact that under the forward difference discretization with Neumann boundary conditions, $\tmop{grad}^{\top} \tmop{grad}$ diagonalizes
under the discrete cosine transform (DCT-2) \cite{Strang1999,Lellmann2009}:
\begin{equation}
  \tmop{grad}^{\top} \tmop{grad} = B^{-1} \tmop{diag} (c) B
\end{equation}
where $B$ is the orthogonal transformation matrix of the DCT and $c$ is the
vector of eigenvalues of the discrete Laplacian. In both approaches presented above,
$L$ is of the form $L = A \otimes \tmop{grad}$ for some (possibly identity) matrix
$A \in \mathbbm{R}^{k \times l}, k \leqslant l$. First, compute the decomposition
$A^{\top} A = V^{- 1} \tmop{diag} (a) V$ with
$a \in \mathbbm{R}^l$ and an orthogonal matrix $V \in \mathbbm{R}^{l \times
l}$, $V^{- 1} = V^{\top}$. Then
\begin{eqnarray}
  (I + L^{\top} L)^{- 1} & = & \left( V^{\top} \otimes I_n \right) \left( I_l \otimes B^{- 1} \right)\cdot\nonumber\\
  & & \left( I + \tmop{diag} (a) \otimes \tmop{diag} (c) \right)^{- 1}
  \left( I_l \otimes B \right)  \left( V \otimes I_n \right)\label{eq:dctainversion}
\end{eqnarray}
(see Appendix for the proof).
Thus step~\ref{eq:ukiata} can be achieved fast and accurately through matrix multiplications with $V$, discrete cosine transforms, and one $O (n l)$ product for inverting the inner diagonal matrix.

In addition to convergence of the primal iterates $(u^{(k)})$, it can be shown that the sequence $(w''^{(k)})$ from Alg.~\ref{alg:dr-primal} actually converges to a solution of the dual problem:
\begin{proposition}\label{prop:dr-convergence}
  Let $\mathcal{C}$, $\mathcal{D}$ be bounded, closed and convex sets with $\tmop{ri}
  (\mathcal{C}) \neq \emptyset$ and $\tmop{ri} (\mathcal{D}) \neq \emptyset$.
  Then Alg.~\ref{alg:dr-primal} generates a
  sequence of primal/dual feasible pairs $(u^{(k)}, w''^{(k)}) \in \mathcal{C} \times
  \mathcal{D}$ such that, for any saddle point $(u^{\ast},v^{\ast})$ of the relaxed problem (\ref{eq:discretemodelx}), 
  \begin{eqnarray}
    f (u^{(k)}) & \overset{k \rightarrow \infty}{\rightarrow} & f (u^{\ast}) = f_D (v^{\ast})\,,\\
    f_D (w''^{(k)}) & \overset{k \rightarrow \infty}{\rightarrow} & f_D (v^{\ast}) = f (u^{\ast})\,.
  \end{eqnarray}
  Moreover,
  \begin{eqnarray}
    f (u^{(k)}) - f (u^{\ast}) & \leqslant & f (u^{(k)}) - f_D (w''^{(k)})\label{eq:suboptboundinprop}
  \end{eqnarray}
  provides an upper bound for the suboptimality of the current iterate.
\end{proposition}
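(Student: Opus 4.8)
The plan is to treat the three assertions separately: primal/dual feasibility is immediate, convergence of the primal value follows from the abstract Douglas-Rachford theory together with continuity, and the genuinely new content is identifying the limit of the recovered sequence $w''^{(k)}$ as a dual solution. Feasibility costs nothing: by construction $u^{(k)} = \Pi_{\mathcal{C}}(\bar{u}^{(k)} - \tau s) \in \mathcal{C}$ and $w''^{(k)} = \Pi_{\mathcal{D}}(\tfrac{1}{\tau}(\bar{w}^{(k)} - b)) \in \mathcal{D}$ for every $k$, so the pair lies in $\mathcal{C} \times \mathcal{D}$ throughout.

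Next I would verify the hypotheses of the cited convergence result \cite[Thm.~3.15, Prop.~3.19, Prop.~3.20]{Eckstein1989} for the splitting $h = h_1 + h_2$ of \eqref{eq:dr-howtosplit}. Here $h_1 = \delta_{Lu=w}$ is the indicator of a linear subspace and $h_2$ collects a linear term, $\delta_{\mathcal{C}}$, and $\sigma_{\mathcal{D}}(\cdot - b)$; all are proper, convex and lsc, and $\sigma_{\mathcal{D}}$ is finite (hence continuous) everywhere because $\mathcal{D}$ is bounded. Thus $\operatorname{dom} h_1 = \{(u,Lu)\}$ and $\operatorname{dom} h_2 = \mathcal{C} \times \mathbbm{R}^m$, and choosing $u \in \operatorname{ri}(\mathcal{C})$ (nonempty by assumption) places $(u,Lu)$ in $\operatorname{ri}(\operatorname{dom} h_1) \cap \operatorname{ri}(\operatorname{dom} h_2)$. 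This is the constraint qualification ensuring $\partial h = \partial h_1 + \partial h_2$ with both summands maximal monotone, while existence of a zero of $\partial h$ follows from the existence of a saddle point established earlier. The theorem then yields convergence of $(\bar{u}^{(k)}, \bar{w}^{(k)})$ to some $(\bar{u}, \bar{w})$; since every algorithmic quantity is a composition of the projections $\Pi_{\mathcal{C}}, \Pi_{\mathcal{D}}$, the fixed linear solve $(I + L^{\top}L)^{-1}$, and $L$ applied to these iterates, all quantities converge, and continuity of $f$ on $\mathcal{C}$ gives $f(u^{(k)}) \to f(u^{\ast})$.

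The crucial step is showing $v^{\ast} \assign \lim_k w''^{(k)}$ is dual optimal. I would pass to the limit in the updates $\bar{u}^{(k+1)} = \bar{u}^{(k)} + u'^{(k)} - u^{(k)}$ and $\bar{w}^{(k+1)} = w'^{(k)} + \tau w''^{(k)}$, so that at the fixed point $u' = u$ and $\bar{w} = Lu + \tau w''$. Feeding $u' = u$ into the linear solve \eqref{eq:ukiata} and substituting $\bar{w} = Lu + \tau w''$ produces, after the cancellation of the $L^{\top}L$ terms, the compact relation $\bar{u} = u - \tau L^{\top} w''$. Inserting this into the projection characterizations of lines~5 and~6 (using $p = \Pi_{\mathcal{C}}(z) \Leftrightarrow z - p \in N_{\mathcal{C}}(p)$ and the positive homogeneity of the normal cone) turns the two fixed-point identities into $0 \in s + L^{\top} w'' + N_{\mathcal{C}}(u)$ and $Lu - b \in N_{\mathcal{D}}(w'')$. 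These are precisely the primal and dual optimality conditions of \eqref{eq:discretemodelx}, so $(u, w'')$ is a saddle point, $w'' = v^{\ast}$ is dual optimal, and $f_D(w''^{(k)}) \to f_D(v^{\ast})$ by continuity of $f_D$.

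Finally, the bound \eqref{eq:suboptboundinprop} is immediate from weak duality: dual feasibility $w''^{(k)} \in \mathcal{D}$ gives $f_D(w''^{(k)}) \leqslant \max_{v \in \mathcal{D}} f_D(v) = f(u^{\ast})$, which rearranges to $f(u^{(k)}) - f(u^{\ast}) \leqslant f(u^{(k)}) - f_D(w''^{(k)})$. The main obstacle is the third paragraph: one must carry the auxiliary variables through the linear solve and check that the cancellations really deliver $\bar{u} = u - \tau L^{\top} w''$, since it is this identity that converts the two projection fixed points into the saddle-point inclusions. Once that relation is in hand, recognizing the projection fixed points as the KKT conditions is routine, and the remaining continuity and weak-duality arguments are standard.
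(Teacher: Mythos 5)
Your proof is correct, but it takes a genuinely different route from the paper at the decisive step. The paper proves dual convergence by a \emph{primal--dual symmetry} argument: it writes down the Douglas--Rachford algorithm applied to the dual saddle-point formulation (obtained via the substitutions $v \leftrightarrow u$, $\mathcal{C} \leftrightarrow \mathcal{D}$, $b \leftrightarrow s$, $L \leftrightarrow -L^{\top}$, step size $1/\tau$), and then verifies by induction -- using the Woodbury identity $(I+LL^{\top})^{-1} = I - L(I+L^{\top}L)^{-1}L^{\top}$ -- that the dual algorithm's iterates are exact scalings of the primal ones, so that $w''^{(k)}$ coincides at \emph{every} iteration $k$ with the feasible iterate $v^{(k)}$ of a convergent dual method; this is also why the paper needs $\tmop{ri}(\mathcal{D}) \neq \emptyset$, to get the constraint qualification for the dual splitting. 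You instead invoke Douglas--Rachford convergence only for the primal splitting, pass to the limit in the update equations, and convert the fixed-point identities into the KKT conditions: I checked your central computation, and it is right -- at the fixed point $u'=u$ and $\bar{w} = Lu + \tau w''$, substituting into step~\ref{step:ukiltl} of Alg.~\ref{alg:dr-primal} makes the $L^{\top}L$ terms cancel and yields $\bar{u} = u - \tau L^{\top}w''$, whence the projection characterizations give $-(s + L^{\top}w'') \in N_{\mathcal{C}}(u)$ and $Lu - b \in N_{\mathcal{D}}(w'')$, i.e.\ $(u,w'')$ is a saddle point of \eqref{eq:discretemodelx}. (One cosmetic slip: the equation you cite, \eqref{eq:ukiata}, is the resolvent applied to $(\bar{u},\bar{w})$ directly; the relation $\bar{u} = u - \tau L^{\top}w''$ comes from the reflected input in step~\ref{step:ukiltl}, which is what you actually compute with.) The trade-off: your argument is more elementary and self-contained -- no duplicated dual algorithm, no Woodbury identity, and only the primal constraint qualification $\tmop{ri}(\mathcal{C}) \neq \emptyset$ is used -- and it identifies the limit pair as a saddle point directly. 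The paper's argument is heavier but establishes a stronger structural fact of independent interest: the exact equivalence (up to scaling) of the primal and dual Douglas--Rachford iterations, which interprets $w''^{(k)}$ as the iterate of a genuine dual algorithm for every finite $k$, not only asymptotically. Both proofs finish identically, with strong duality and weak duality giving \eqref{eq:suboptboundinprop}.
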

\begin{proof}
See Appendix.
\end{proof}

Thus the Douglas-Rachford approach also allows to use the primal-dual gap
\begin{equation}
f(u^{(k)})-f_D(w''^{(k)})
\end{equation}
as a stopping criterion.

Very recently, a generalization of the FPD method \cite{Pock2009} has been proposed \cite{Chambolle2010}. The
authors show that under certain circumstances, their method is equivalent to Douglas-Rachford splitting.
As a result, it is possible to show that Alg.~\ref{alg:dr-primal}
can alternatively be interpreted as an application of the primal-dual method from \cite{Chambolle2010}
to the problem formulation \eqref{eq:dr-howtosplit}.


\subsection{Projection on the Dual Constraint Set}\label{sec:projdualcons}

For the Euclidean metric approach, projection on the unit ball $\mathcal{D}_{\tmop{loc}}^{I}$ and
thus on $\mathcal{D}$ is trivial:
\begin{eqnarray}
  \Pi_{\mathcal{D}_{\tmop{loc}}^I} (v) & = & \left\{
    \begin{array}{ll}
      v, & \|v\| \leqslant 1\,,\\
      \frac{v}{\|v\|}\,, & \text{otherwise}\,.
  \end{array} \right.
\end{eqnarray}
Projection on $\mathcal{D}_{\tmop{loc}}^{\pot}$ for the general metric case is more
involved. We represent $\mathcal{D}_{\tmop{loc}}^{\pot}$ as the intersection of convex sets,
\begin{eqnarray}
  & & \mathcal{D}_{\tmop{loc}}^{\pot} = \mathcal{R} \cap \mathcal{S}\,,\;\mathcal{R} \assign \{ v \in \mathbbm{R}^{d \times l} | \sum_i v^i = 0 \}\,, \\
  & & \mathcal{S} \assign \bigcap_{i < j} \mathcal{S}^{i, j}\,, \; \mathcal{S}^{i, j} \assign \{v \in \mathbbm{R}^{d \times l} |\|v^i - v^j \| \leqslant \pot (i, j)\}\,.\nonumber
\end{eqnarray}
Since $\Pi_{\mathcal{R}}$ is amounts to a translation along $e=(1,\ldots,1)$, and $\mathcal{S}$ is
translation invariant in the direction of $e$, the projection can be decomposed:
\begin{eqnarray}
  \Pi_{\mathcal{D}_{\tmop{loc}}^{\pot}}(v) = \Pi_{\mathcal{R}}( \Pi_{\mathcal{S}} (v) )\,.
\end{eqnarray}
We then follow the idea of {\cite{Chambolle2008}} to use Dykstra's method {\cite{Boyle1986}} for iteratively computing an approximation of $\Pi_{\mathcal{S}}$ using only projections on the individual sets $\mathcal{S}^{i,j}$.
However, any recent multiple-splitting method could be used \cite{Combettes2008,Goldfarb2009}.
In our case,
$\Pi_{\mathcal{S}^{i, j}}$ can be stated in closed form:
\begin{eqnarray}
  \Pi_{\mathcal{S}^{i, j}} (v) & = & \left\{ \begin{array}{ll}
    v, & \|v^i - v^j \| \leqslant \pot (i, j)\,,\\
    (w^1, \ldots, w^l), & \text{otherwise}\,,
  \end{array} \right.
\end{eqnarray}
where
\begin{eqnarray}
  w^k & = & \left\{ \begin{array}{ll}
    v^k, & k \nin \{i, j\}\,,\\
    v^i - \frac{\|v^i - v^j \|- \pot (i, j)}{2}  \frac{v^i - v^j}{\|v^i - v^j\|}, & k = i\,,\\
    v^j + \frac{\|v^i - v^j \|- \pot (i, j)}{2}  \frac{v^i - v^j}{\|v^i - v^j\|}, & k = j\,.
  \end{array} \right.
\end{eqnarray}
The complete Dykstra method for projecting a vector $v$ onto $\mathcal{S}$ is outlined in Alg.~\ref{alg:dykstra}.
\begin{algorithm}[tb]
\caption{Dykstra's Method for Projecting onto the Intersection of Convex Sets} \label{alg:dykstra}
\begin{algorithmic}[1]
\STATE Associate with each $(i, j), i < j$ a unique index $t \in \{1, \ldots, k\}$, $k \assign l (l - 1)$.
\STATE $x \leftarrow v \in \mathbbm{R}^{d \times l}$.
\STATE $y^1, \ldots, y^k \leftarrow 0 \in \mathbbm{R}^{d \times l}$.
\WHILE{($x$ not converged)}
  \FOR{$t = 1, \ldots, k$}
    \STATE $x' \leftarrow \Pi_{S^{(i_t, j_t)}} (x + y^t)$.
    \STATE $y^t \leftarrow x + y^t - x'$.
    \STATE $x \leftarrow x'$.
  \ENDFOR
\ENDWHILE
\end{algorithmic}
\end{algorithm} 
While the sequence of $y$ may be unbounded, $x$ converges to
$\Pi_{\mathcal{S}} (v)$ (cf.~{\cite{Gaffke1989,Xu2000}}).


\subsection{Computing the Objective}\label{sec:objeval}

For computing the primal-dual gap and the binarization step, it is necessary to
evaluate the objective for a given $u$. Unfortunately, in the general case
this is nontrivial as the objective's integrand $\Psi$ is defined implicitly as
\begin{eqnarray}
  \Psi (z) & = & \max_{v \in \mathcal{D}_{\tmop{loc}}^{\pot}} \langle z, v \rangle\,. \label{eq:psizobj}
\end{eqnarray}
We exploit that $\Pi_{\mathcal{D}}$ can be computed and use an iterative gradient-projection method,
\begin{eqnarray}
  v^{(k + 1)} & \leftarrow & \Pi_{\mathcal{D}} (v^{(k)} + \tau z)
\end{eqnarray}
for some $\tau > 0$, starting with $v^{(0)} = z$. As the objective in (\ref{eq:psizobj}) is linear,
$\mathcal{D}_{\tmop{loc}}^{\pot}$ is bounded and thus $\Psi$ is bounded from above, convergence
follows {\cite{Goldstein1964,Levitin1966}} for any step size $\tau$ {\cite{Wang2004}}:
\begin{equation}
\lim_{k \rightarrow \infty} \langle z, v^{(k)} \rangle = \Psi (z)\,.
\end{equation}
There is a trade-off in choosing the step size, as large $\tau$ lead to a
smaller number of outer iterations, but an increased number of nontrivial operations
in the projection. We chose $\tau = 2$, which worked well for all examples.
It is also possible to use any of the other nonsmooth optimization methods presented above.


\section{Experiments}\label{sec:experiments}

Regarding the practical performance of the presented approaches, we focused on
two main issues: convergence speed and tightness of the relaxation. We will first
quantitatively compare the presented algorithms in terms of runtime and the number of inner iterations,
and then provide some results on the effect of the Euclidean metric vs. the envelope regularization.

The algorithms
were implemented in Matlab with some core functions, such as the computation of the
gradient and the projections on the dual sets, implemented in C++.
We used Matlab's built-in FFTW interface for computing the DCT for the Douglas-Rachford
approach. All experiments were conducted on an Intel Core2 Duo $2.66$ GHz with $4$ GB of RAM
and 64-bit Matlab 2009a.


\subsection{Relative Performance}\label{sec:relperformance}

To compare the convergence speed of the three different approaches, we computed the relative primal-dual gap at
each iteration as outlined in Sect.~\ref{sec:optimization}. As it bounds the suboptimality
of the current iterate (see Sect.~\ref{sec:optimality}), it constitutes a reliable and convenient criterion for
performance comparison.

Unfortunately the gap is not available for the envelope method, as it requires the primal objective
to be evaluatable. Using a numerical approximation such as the one in Sect.~\ref{sec:objeval} is not
an option, as these methods can only provide a \emph{lower} bound for the objective. This would lead
to an underestimation of the gap, which is critical as one is interested in
the behavior when the gap is very close to zero. Therefore we restricted the
gap-based performance tests to the Euclidean metric regularizer.

In order to make a fair comparison we generally analyzed the progression of the gap with respect to computation time, rather than the number of iterations.

For the first tests we used the synthetical $256 \times 256$ ``four colors'' input image (Fig.~\ref{fig:fourcinput}).
It represents a typical segmentation problem with several objects featuring sharp corners and round structures above a uniform background. The label set consists of three classes for the foreground and one class for the background.
The image was overlaid with iid Gaussian noise with $\sigma = 1$ and truncated to
$[0,1]$ on all RGB channels. We used a simple $\ell^1$ data term, $s_i(x) = \| g(x) - c^i \|_1$,
where $g(x) \in [0,1]^3$ are the RGB color values of the input image in $x$, and $c^i$ is a
prototypical color vector for the $i$-th class.
\begin{figure}
\centering
\includegraphics[width=.24\columnwidth]{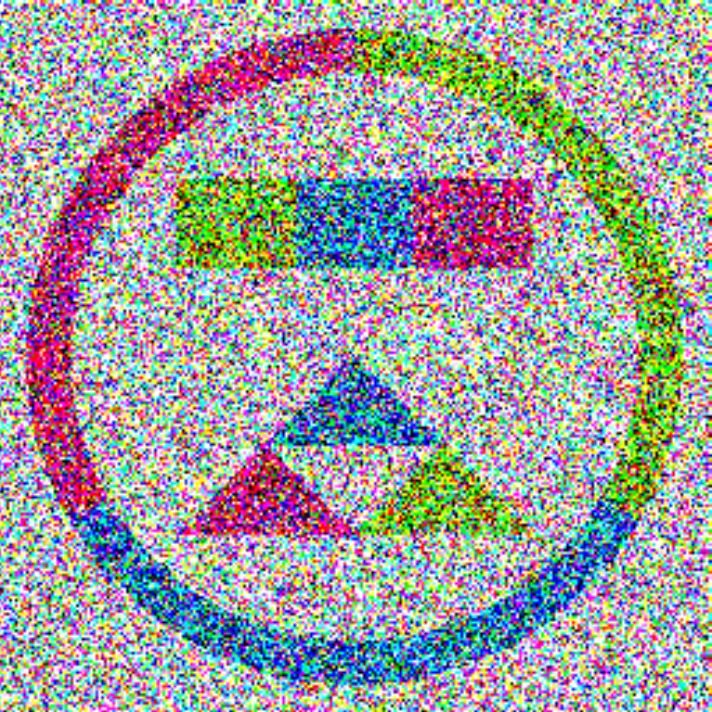}
\includegraphics[width=.24\columnwidth]{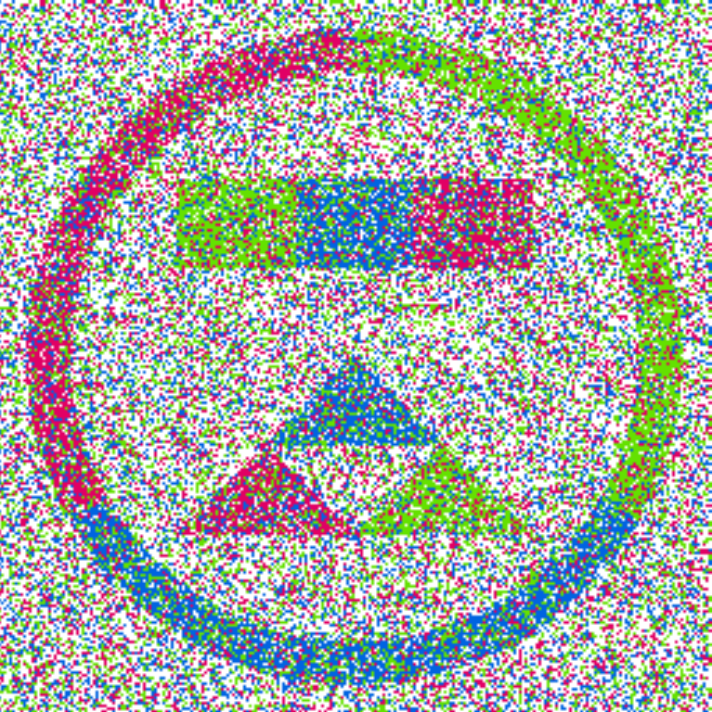} 
\includegraphics[width=.24\columnwidth]{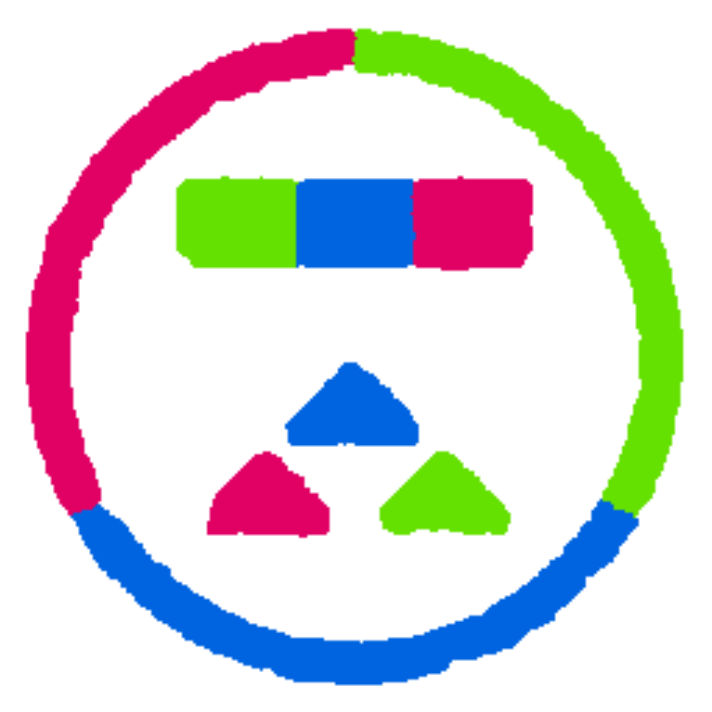}
\includegraphics[width=.24\columnwidth]{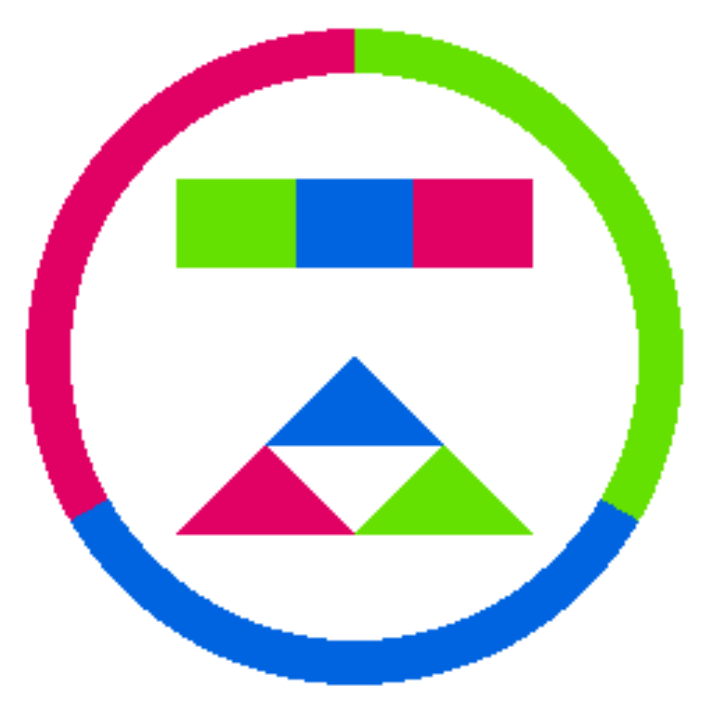}
\caption{Synthetical ``four colors'' input image for the performance tests. \textbf{Top row:}
Input image with Gaussian noise, $\sigma = 1$; local labeling without regularizer.
\textbf{Bottom row:} Result with the Potts regularizer and Douglas-Rachford optimization; ground truth.
}%
\label{fig:fourcinput}
\end{figure}

The runtime analysis shows that FPD and Douglas-Rachford perform similar,
while the Nesterov method falls behind considerably in both the primal and the dual objective (Fig.~\ref{fig:fourcspeedpottsobjective}).

\begin{figure*}
\centering
\setlength{\unitlength}{.48\columnwidth}
\begin{picture}(1,.63571)
\put(0,0){\includegraphics[width=\unitlength]{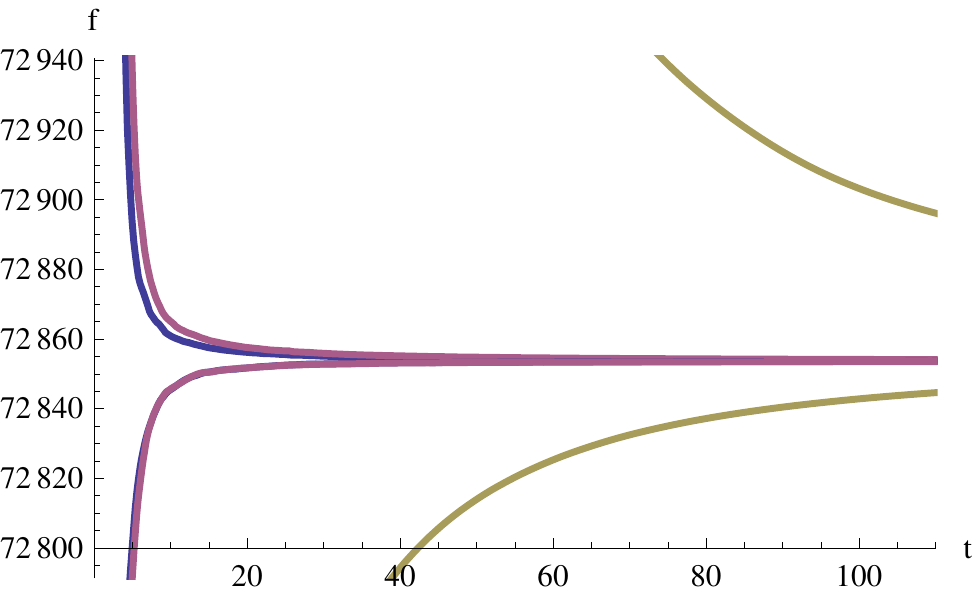}}
\put(.725,.35){\includegraphics[width=0.2\unitlength]{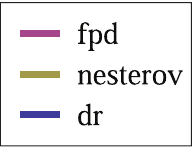}}
\end{picture}
\setlength{\unitlength}{.48\columnwidth}
\begin{picture}(1,.63571)
\put(0,0){\includegraphics[width=\unitlength]{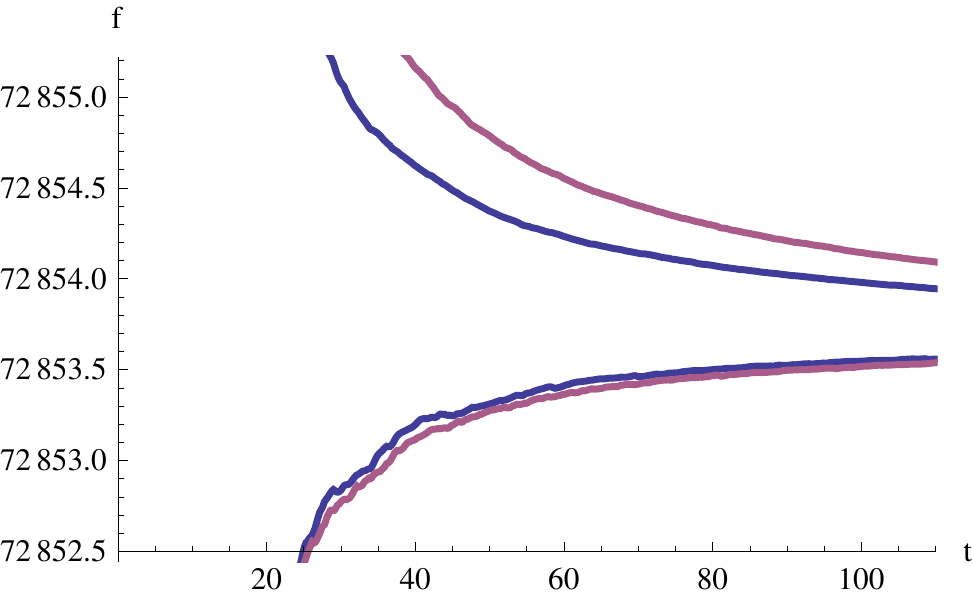}}
\put(.725,.41){\includegraphics[width=0.2\unitlength]{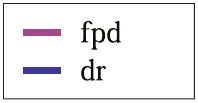}}
\end{picture}
\caption{Convergence speed for the ``four colors'' image in Fig.~\ref{fig:fourcinput}. \textbf{Left:} Primal (upper) and dual (lower) objectives vs. computation time for the (from top to bottom) Nesterov, Fast Primal-Dual (FPD) and Douglas-Rachford methods. \textbf{Right:} Detailed view of the FPD and DR methods. The primal and dual objectives provide
upper and lower bounds for the objective of the true optimum. Douglas-Rachford and FPD perform equally,
while the Nesterov method falls behind.
}%
\label{fig:fourcspeedpottsobjective}
\end{figure*}

The picture changes when considering the gap with respect to the number of iterations rather than time, eliminating
influences of the implementation and system architecture. To achieve the same accuracy, Douglas-Rachford requires only one third of the iterations compared to FPD (Fig.~\ref{fig:fourcspeedpottsgap}). This advantage does not fully translate
to the time-based analysis as the DCT steps increase the per-step computational cost significantly. However in this
example the projections on the sets $\mathcal{C}$ and $\mathcal{D}$ were relatively cheap compared to the DCT. In situations
where the projections dominate the time per step, the reduced iteration count can be expected to lead to an almost equal reduction in computation time.

\begin{figure*}
\centering
\setlength{\unitlength}{.48\columnwidth}
\begin{picture}(1,.63571)
\put(0,0){\includegraphics[width=\unitlength]{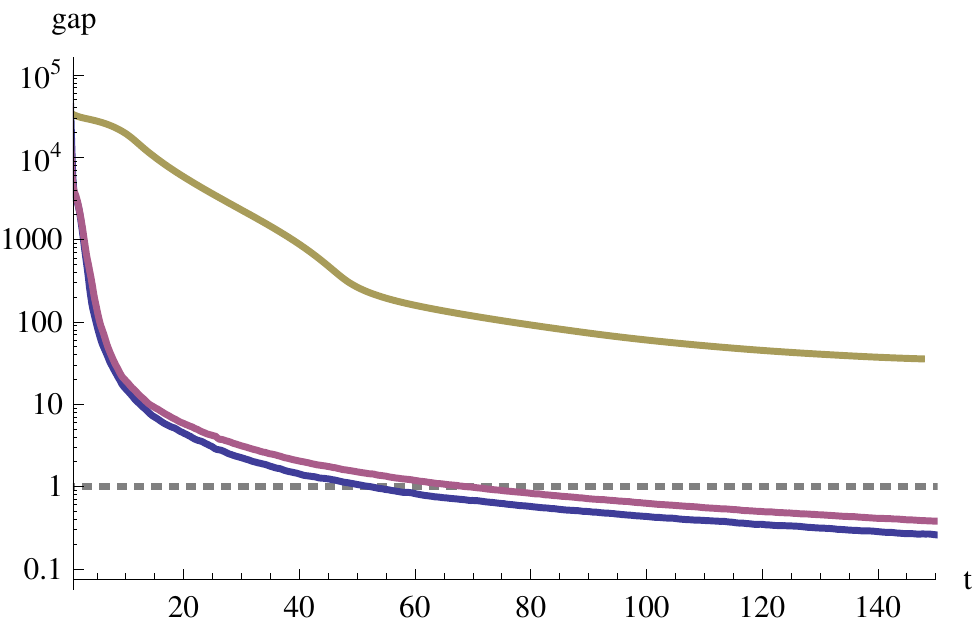}}
\put(.725,.41){\includegraphics[width=0.2\unitlength]{manual_legend_three.pdf}}
\end{picture}
\setlength{\unitlength}{.48\columnwidth}
\begin{picture}(1,.63571)
\put(0,0){\includegraphics[width=\unitlength]{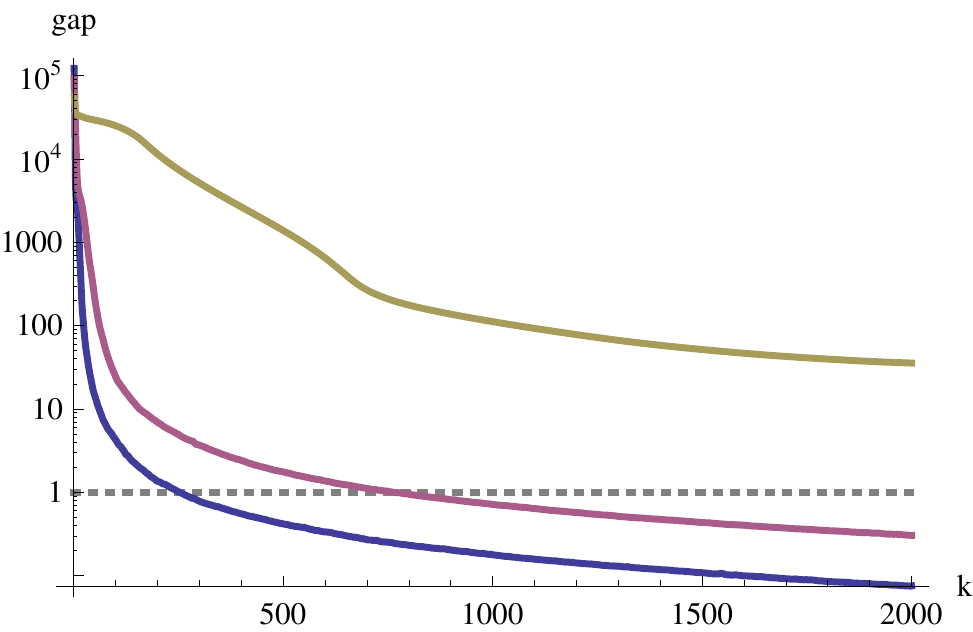}}
\put(.725,.41){\includegraphics[width=0.2\unitlength]{manual_legend_three.pdf}}
\end{picture}
\caption{Primal-Dual gap for Fig.~\ref{fig:fourcspeedpottsobjective} with respect to time and number of iterations. \textbf{Top:} Primal-Dual gap vs. time and number of iterations. The Nesterov method (top) again falls behind, while FPD (center) and Douglas-Rachford (bottom) are equally fast. \textbf{Bottom:} Primal-Dual gap vs. number of iterations. The Douglas-Rachford method requires only one third of the FPD iterations, which makes it suitable for problems with expensive inner steps.
}%
\label{fig:fourcspeedpottsgap}
\end{figure*}

One could ask if these relations are typical to the synthetical data used. However we found them confirmed on
a large majority of the problems tested. As one example of a real-world example, consider the ``leaf'' image (Fig.~\ref{fig:leafdifferentreg}). We computed a segmentation into $12$ classes with Potts regularizer, again based on the $\ell^1$ distances for the data term, with very similar relative performance as for the ``four colors'' image (Fig.~\ref{fig:leafobjtime}).

\begin{figure}
\centering
\setlength{\unitlength}{.48\columnwidth}
\begin{picture}(1,.63571)
\put(0,0){\includegraphics[width=\unitlength]{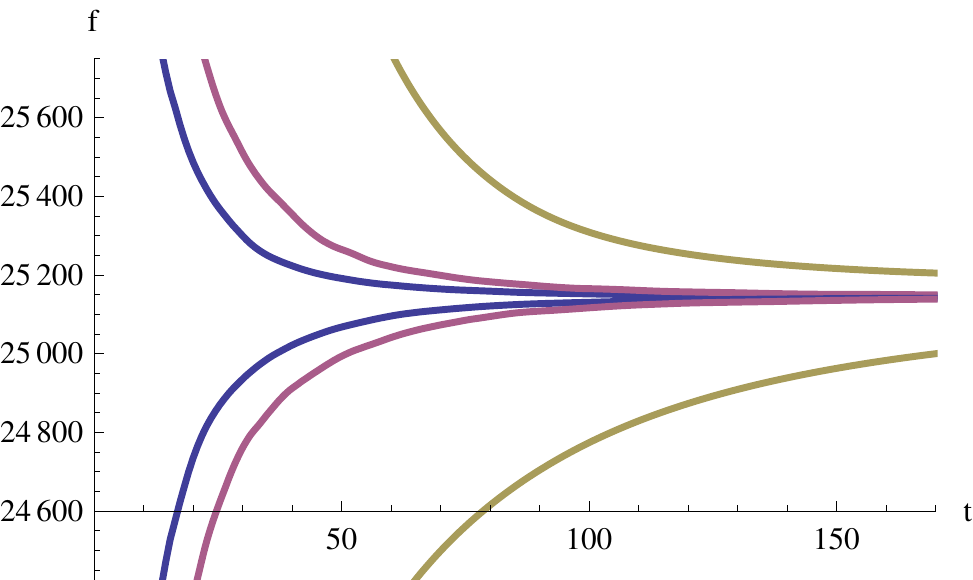}}
\put(.725,.41){\includegraphics[width=0.2\unitlength]{manual_legend_three.pdf}}
\end{picture}
\setlength{\unitlength}{.48\columnwidth}
\begin{picture}(1,.63571)
\put(0,0){\includegraphics[width=\unitlength]{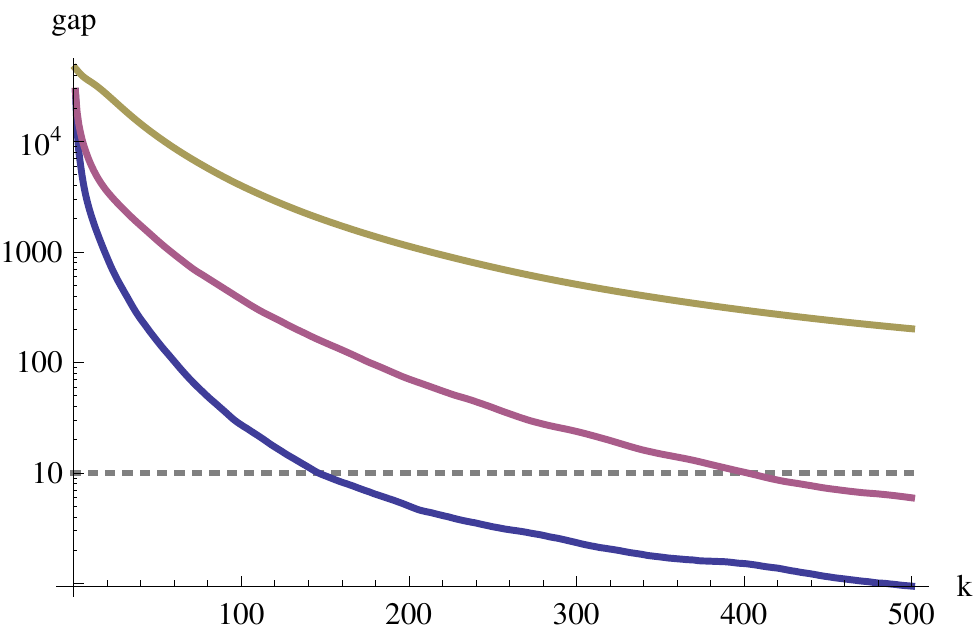}}
\put(.725,.41){\includegraphics[width=0.2\unitlength]{manual_legend_three.pdf}}
\end{picture}
\caption{Objectives and primal-dual gap for the real-world leaf image in Fig.~\ref{fig:leafdifferentreg} with
$12$ classes and Potts potential. \textbf{Left:} Primal (upper) and dual (lower) objectives vs. time.
The Nesterov method (top) falls behind the FPD (center)
and Douglas-Rachford (bottom) methods. \textbf{Right:} Gap vs. number of iterations. As with the
synthetical four-colors image (Fig.~\ref{fig:fourcspeedpottsgap}), the Douglas-Rachford approach reduces the number of
required iterations by approximately a factor of $3$.
}%
\label{fig:leafobjtime}
\end{figure}


\subsection{Number of Variables and Regularization Strength}

To examine how the presented methods scale with increasing image size, we evaluated the ``four colors'' image at $20$ different scales ranging from $16\times16$ to $512\times512$. Note that if the grid spacing is held constant, the
regularizer weights must be scaled proportionally to the image width resp. height in order to
obtain structurally comparable results, and not mix up effects of the problem size and of the regularization strength.

In order to compensate for the increasing number of variables, the stopping criterion was based on the
relative gap \eqref{eq:relativegap}. The algorithms terminated as soon as the relative gap fell below $10^{-4}$.
The Nesterov method consistently produced gaps in the $10^{-3}$ range and never achieved the threshold within the limit of $2000$ iterations. Douglas-Rachford and FPD scale only slightly superlinearly with the problem size, which is quite a good result for such comparatively simple first-order methods (Fig.~\ref{fig:fourcspeedpottstimevssize}).
\begin{figure}
\centering
\setlength{\unitlength}{.48\columnwidth}
\begin{picture}(1,.63571)
\put(0,0){\includegraphics[width=\unitlength]{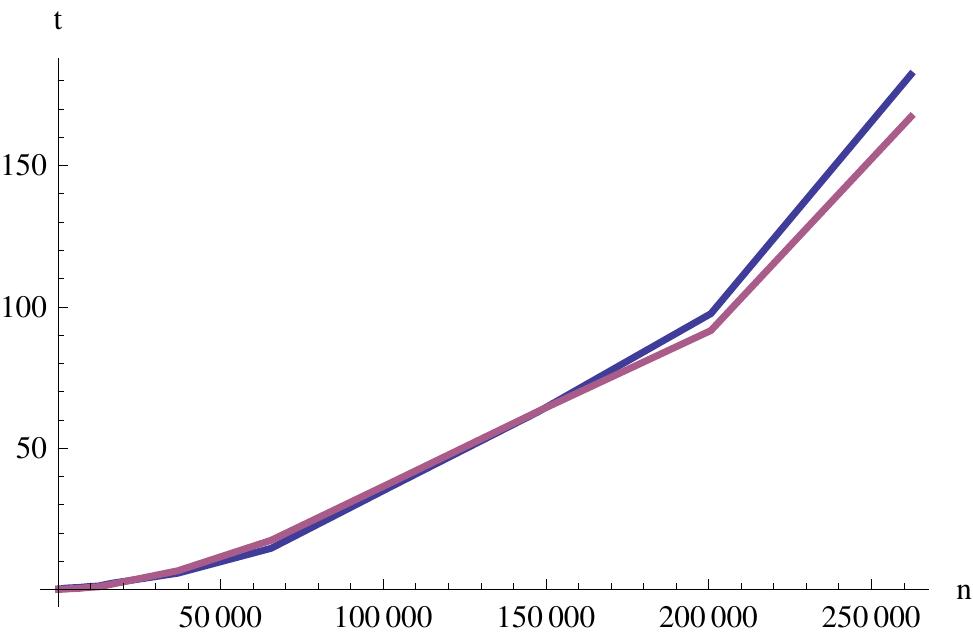}}
\put(.725,.11){\includegraphics[width=0.2\unitlength]{manual_legend_two.pdf}}
\end{picture}
\caption{Computation time for increasing problem size for the Douglas-Rachford (top, dark) and FPD (bottom, light) methods. Shown is the time in seconds required to achieve a relative gap of $10^{-4}$. The computational
effort scales slightly superlinearly with the number of pixels. The Nesterov method never converged to the
required accuracy within the limit of $2000$ iterations.
}%
\label{fig:fourcspeedpottstimevssize}
\end{figure}

While we deliberately excluded influences of the regularizer in the previous experiment, it is also interesting to examine
how algorithms cope with varying regularization strength. We fixed a resolution of $256 \times 256$ and performed
the same analysis as above, scaling the regularization term by an increasing sequence of $\lambda$ in the $[0.1,5]$ range (Fig.~\ref{fig:fourcvaryinglambda}).

Interestingly, for low regularization, where much of the noise remains in the solution, FPD clearly takes the lead. For scenarios with large structural changes, Douglas-Rachford performs better. We observed two peaks in the runtime plot
which we cannot completely explain. However we found that at the first peak, structures in the image did not disappear at in parallel but rather one after the other, which might contribute to the slower convergence. Again, the Nesterov method never achieved the required accuracy.
\begin{figure*}
\centering
\setlength{\unitlength}{.98\columnwidth}
\begin{picture}(1,.27182)
\put(0,0){\includegraphics[width=\unitlength]{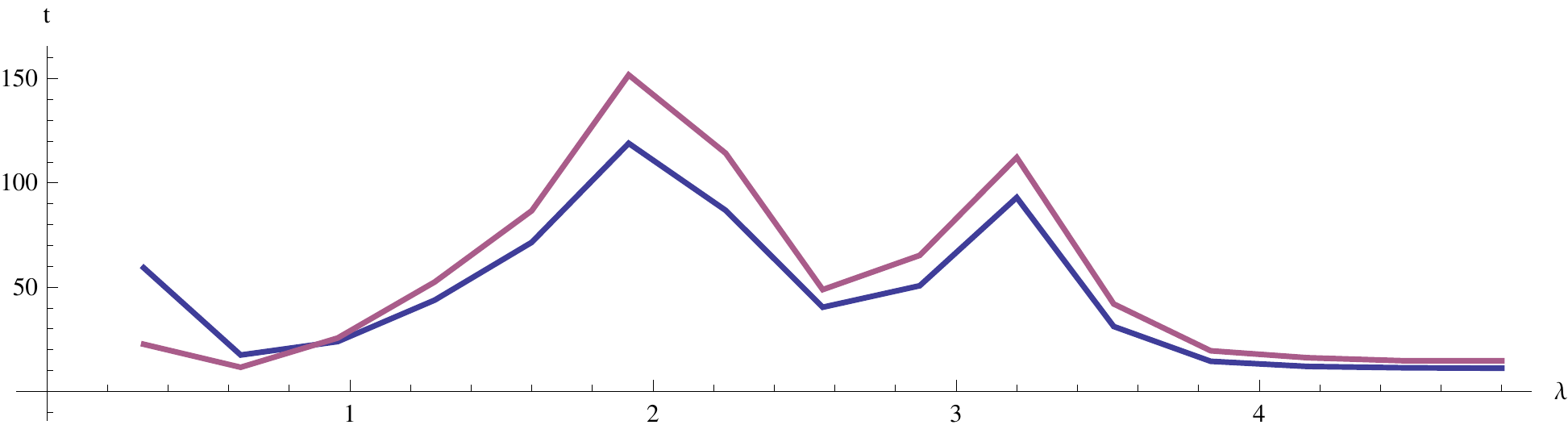}}
\put(.85,.17){\includegraphics[width=0.0855\unitlength]{manual_legend_two.pdf}}
\end{picture}\\
\includegraphics[width=.9404\columnwidth]{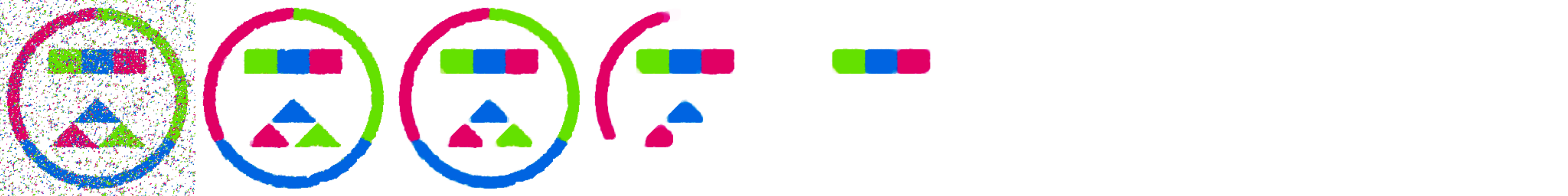}
\caption{Computation time for varying regularization strength $\lambda$ for the Douglas-Rachford (top, dark) and FPG (bottom, light) methods. The images at the bottom show the final result for the $\lambda$ above. FPD is strong on
low regularization problems, while Douglas-Rachford is better suited for problems with large structural changes. The
Nesterov method never achieved the relative gap of $10^{-5}$ within $2000$ iterations.
}%
\label{fig:fourcvaryinglambda}
\end{figure*}


\subsection{Breaking Points}

We have no clear explanation why the Nesterov method appears to almost always fall behind. However it is possible
to compare its behavior with the a priori bound from Prop.~\ref{prop:nesterovconv}. By inspecting the derivations in the original work \cite{Nesterov2004}, it can be seen that exactly one half of the final
bound comes from the smoothing step, while the other half is caused by the finite number of iterations:
\begin{equation}
\delta_{\tmop{total}} = \delta_{\tmop{smooth}} + \delta_{\tmop{iter}},\quad \textnormal{where} \quad \delta_{\tmop{smooth}} = \delta_{\tmop{iter}}\,.
\end{equation}
Moreover, $\delta_{\tmop{iter}}$ decreases with $1/(k+1)^2$, which gives an a priori per-iteration suboptimality bound of
\begin{equation}
\delta_{\tmop{total}}^{(k)} = \delta_{\tmop{smooth}} + \left(\frac{N+1}{k+1}\right)^2 \delta_{\tmop{iter}}\,.
\end{equation}
On the ``four colors'' image, the actual gap stays just below $\delta_{\tmop{total}}^{(k)}$ in the beginning (Fig.~\ref{fig:fourcspeedpottsnesterov}). This implies that the theoretical suboptimality bound can hardly be improved, e.g. by choosing constants more precisely. Unfortunately, the bound is generally rather large, in this case at $\delta_{\tmop{total}} = 256.8476$ for $2000$ iterations. While the Nesterov method outperforms the theoretical bound $\delta_{\tmop{total}}$ by a factor of $2$ to $10$ and even drops well below the worst-case smoothing error $\delta_{\tmop{smooth}}$, it still cannot compete with the other methods, which achieve a gap of $0.3052$ (FPD) resp. $0.0754$ (Douglas-Rachford).

\begin{figure}
\centering
\includegraphics[width=.6\columnwidth]{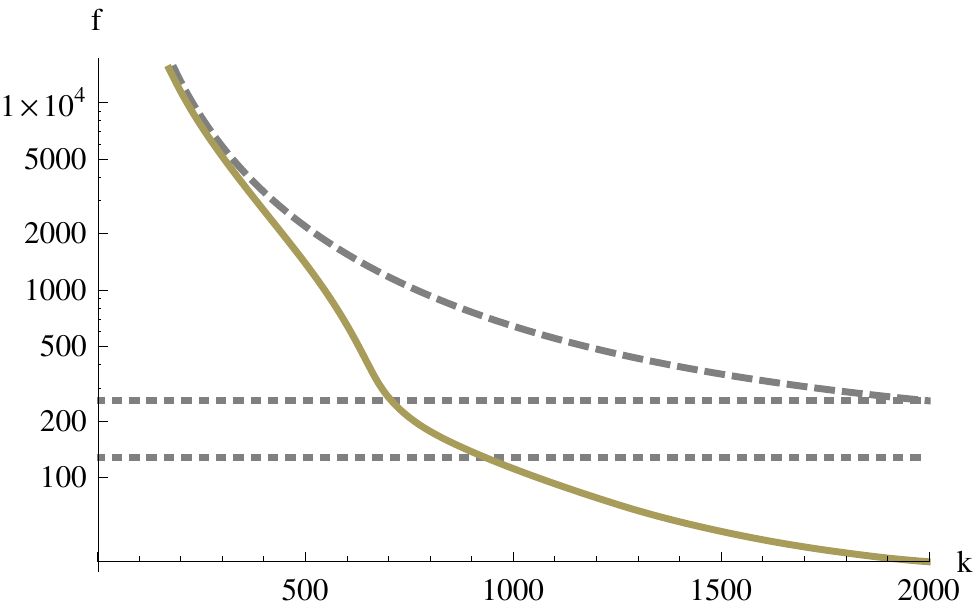}
\caption{Theoretical vs. practical performance of the Nesterov method for Fig.~\ref{fig:fourcspeedpottsobjective}. As expected, the method stays below the theoretical per-iteration bound $\delta_{\tmop{total}}^{(k)}$ (dashed). At the final iteration, the worst-case total bound $\delta_{\tmop{total}}$ (dotted, top) is outperformed by a factor of $7$, which implicates that the error introduced by smoothing is also well below its worst-case bound $\delta_{\tmop{smooth}}$ (dotted, bottom).
}%
\label{fig:fourcspeedpottsnesterov}
\end{figure}

There is an interesting extreme case where the Nesterov method seems to come to full strength. Consider the noise-free
``triple point'' inpainting problem (Fig.~\ref{fig:inverse}). The triple junction in the center can only be reconstructed
by the Potts regularizer, as the $\ell^1$ data term has been blanked out around the center. By reversing the sign of the data term, one obtains the ``inverse triple point'' problem, an extreme case that has also been studied in \cite{Chambolle2008} and shown to be an example where the relaxation leads to a strictly nonbinary solution.

On the inverse problem, the Nesterov method catches up and even surpasses FPD. This stands in contrast with the regular triple point problem, where all methods perform as usual. We conjecture that this sudden
strength comes from the inherent averaging over all previous gradients (step \ref{step:nesaverage} in Alg.~\ref{alg:nesterovmain}): in fact, on the inverse problem Douglas-Rachford and FPD display a pronounced oscillation in the
primal and dual objectives, which is accompanied by slow convergence. In contrast, the Nesterov method consistently shows a monotone and smooth convergence.
\begin{figure*}
\centering
{
\setlength{\tabcolsep}{2pt}
\begin{tabular}{m{.15\textwidth}m{.15\textwidth}m{.15\textwidth}m{.48\textwidth}}%
\includegraphics[width=.15\textwidth]{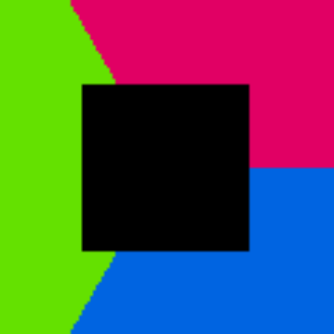}&%
\includegraphics[width=.15\textwidth]{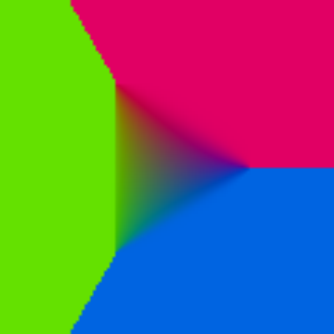}&%
\includegraphics[width=.15\textwidth]{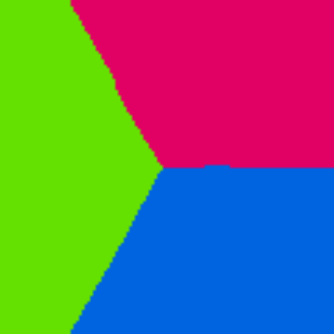}&%
\setlength{\unitlength}{.48\textwidth}
\begin{picture}(1,.63571)
\put(0,0){\includegraphics[width=\unitlength]{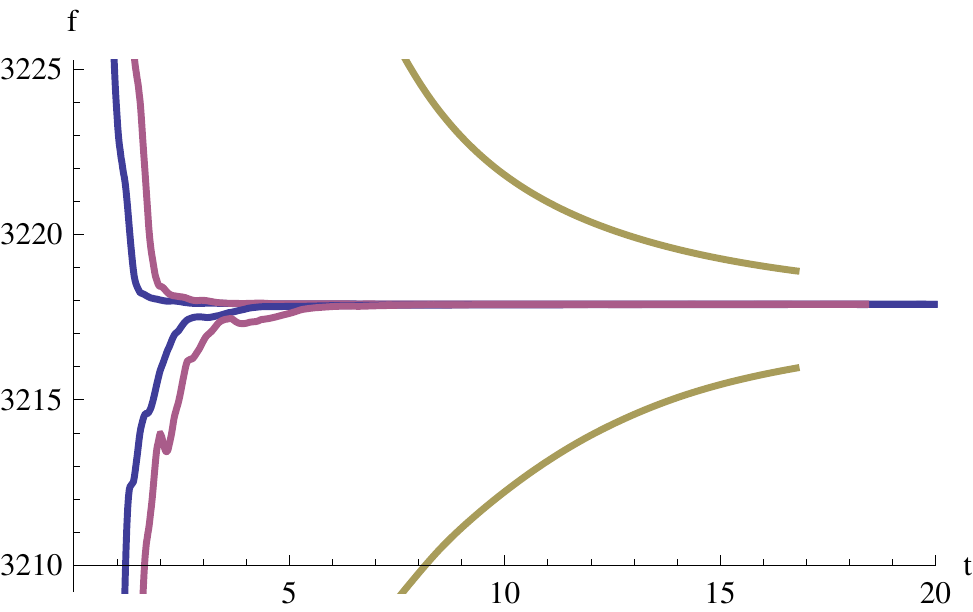}}
\put(.725,.41){\includegraphics[width=0.2\unitlength]{manual_legend_three.pdf}}
\end{picture}\\
\includegraphics[width=.15\textwidth]{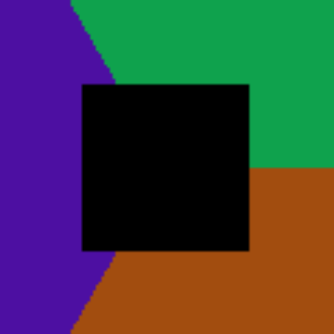}&%
\includegraphics[width=.15\textwidth]{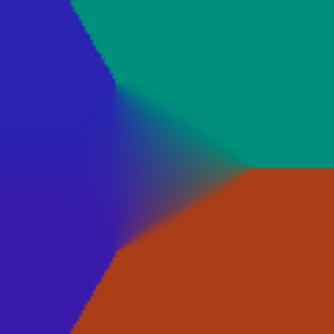}&%
\includegraphics[width=.15\textwidth]{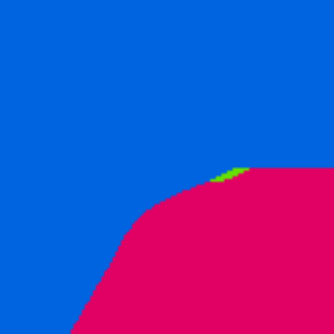}&%
\setlength{\unitlength}{.48\textwidth}
\begin{picture}(1,.63571)
\put(0,0){\includegraphics[width=\unitlength]{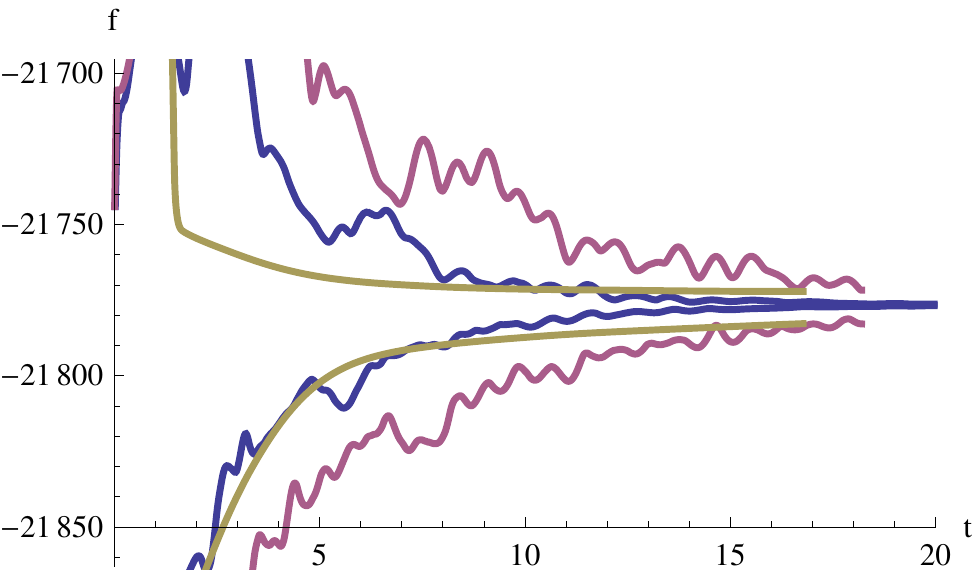}}
\put(.725,.41){\includegraphics[width=0.2\unitlength]{manual_legend_three.pdf}}
\end{picture}
\end{tabular}%
}
\caption{Primal and dual objectives for the triple point (top) and inverse triple point (bottom) inpainting problems.
\textbf{Left to right}: Input image with zeroed-out region around the center; relaxed solution; binarized solution;
primal (upper) and dual (lower) energies vs.~time. The triple junction in the center has to be reconstructed solely by the Potts regularizer. The inverse triple point problem exhibits a strictly nonbinary relaxed solution. For the inverse triple point, Douglas-Rachford (bottom) and FPD (center) show an oscillatory behavior which slows down convergence. The Nesterov approach (top) does not suffer from oscillation due to the inherent
averaging, and surpasses FPD on the inverse problem.
}%
\label{fig:inverse}
\end{figure*}


\subsection{Performance for the Envelope Relaxation}

\begin{figure*}
\centering
\setlength{\unitlength}{.48\columnwidth}
\begin{picture}(1,.63571)
\put(0,0){\includegraphics[width=\unitlength]{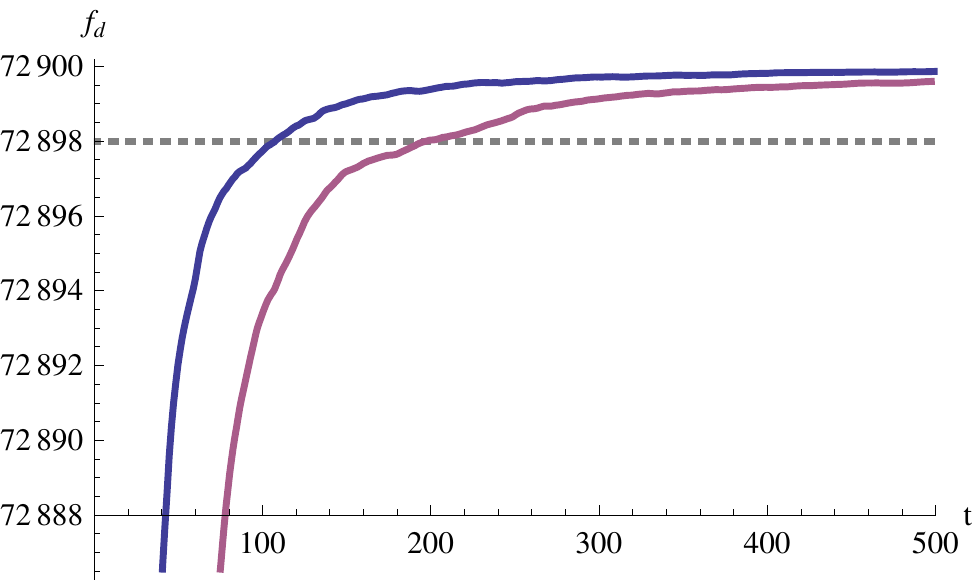}}
\put(.725,.15){\includegraphics[width=0.2\unitlength]{manual_legend_two.pdf}}
\end{picture}
\setlength{\unitlength}{.48\columnwidth}
\begin{picture}(1,.63571)
\put(0,0){\includegraphics[width=\unitlength]{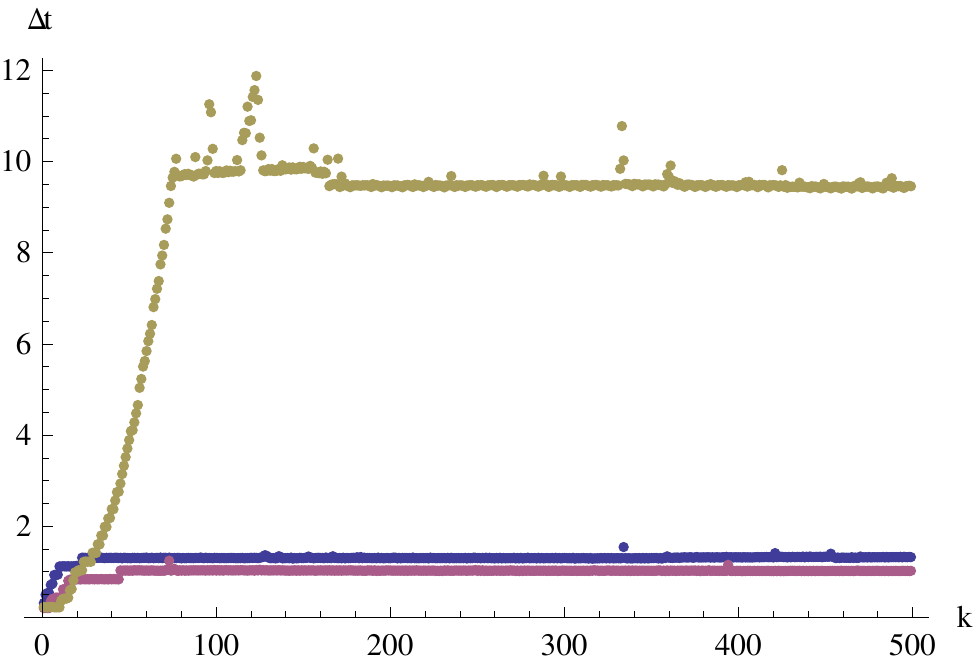}}
\put(.725,.15){\includegraphics[width=0.2\unitlength]{manual_legend_three.pdf}}
\end{picture}
\caption{Performance on the ``four colors'' image with Potts interaction potential and the envelope regularizer. \textbf{Left:} Dual objectives of for Douglas-Rachford (top) and FPD (bottom) vs. time. The reduced iteration count of the Douglas-Rachford method becomes more apparent in the time plot as the time per iteration is now dominated by the projection rather than the DCT. \textbf{Right:} Time per iteration for Nesterov (top), Douglas-Rachford (center) and FPD (bottom). The Nesterov method fails
to converge as it accumulates errors from the approximate projections, which in turn leads to slower and more inexact projections.
}%
\label{fig:fourcenvelope}
\end{figure*}

Undoubtedly, the difficulty when using the envelope based regularizer
comes from the slow and inexact projection steps which have to be approximated iteratively. Therefore we re-evaluated the ``four colors'' benchmark image with the envelope regularizer. The iterative Dykstra projection (Alg.~\ref{alg:dykstra}) was stopped when the iterates differed by at most $\delta = 10^{-2}$, with an additional limit of $50$ iterations. While the gap cannot be computed in this case, the dual objective can still be evaluated and provides an indicator for the convergence speed.

We found that in comparison to the Euclidean metric regularizer from the previous examples, the margin between FPD and Douglas-Rachford increases significantly. This is consistent with the remarks in Sect.~\ref{sec:relperformance}: the lower iteration count of the Douglas-Rachford method becomes more important, as the projections dominate the per-iteration runtime (Fig.~\ref{fig:fourcenvelope}).

Surprisingly the Nesterov method did not converge at all. On inspecting the per-iteration runtime,
we found that after the first few outer iterations,
the iterative projections became very slow and eventually exceeded the limit of $50$ iterations with $\delta$ remaining between $2$ and $5$. In contrast, $20$ Dykstra iterations were usually sufficient to obtain $\delta = 10^{-9}$ (Douglas-Rachford) resp.~$\delta = 10^{-11}$ (FPD).

We again attribute this to the averaging property of the Nesterov method: as it accumulates the results of the previous
projections, errors from the inexact projections build up. This is accelerated by the dual variables quickly becoming
infeasible with increasing distance to the dual feasible set, which in turn puts higher demands on the iterative projections.
Douglas-Rachford and FPD did not display this behavior and consistently required $5$ to $6$ Dykstra iterations from the first
to the last iteration.


\subsection{Tightness of the Relaxations}\label{sec:tightness}

Besides the properties of the optimization methods, it is interesting to study the effect 
of the relaxation -- i.e. Euclidean metric or envelope type -- on the relaxed and binarized solutions.

To get an insight into the tightness of the relaxations, we used the Douglas-Rachford method to repeat
the ``triple point'' inpainting experiment in \cite{Chambolle2008} with both relaxations (Fig.~\ref{fig:discreteness}).
Despite the inaccuracies in the projections, the envelope regularizer generates
a nearly binary solution: $97.6\%$ of all pixels were assigned
``almost binary'' labels with an $\ell^{\infty}$ distance of less than $0.05$ to one of the
unit vectors $\{e^1,\ldots,e^l\}$. For the Euclidean metric method, this constraint was only satisfied
at $88.6\%$ of the pixels. The result for the envelope relaxation is very close to the sharp triple
junction one would expect from the continuous formulation, and shows that the envelope relaxation is
tighter than the Euclidean metric method.

However, after binarization both approaches generate almost identical discrete results. The Euclidean
metric method was more than four times faster, with $41.1$ seconds per $1000$ iterations vs. $172.16$
seconds for the envelope relaxation, which required $8$--$11$ Dykstra steps per outer iteration.
\begin{figure}
\centering
$\begin{array}{ccc}
\includegraphics[width=.20\columnwidth]{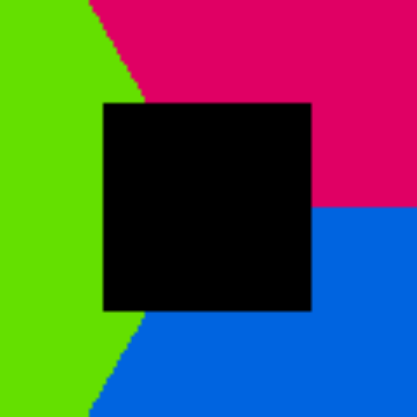} &
\includegraphics[width=.20\columnwidth]{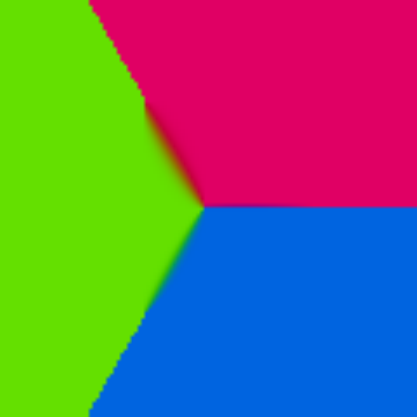} &
\includegraphics[width=.20\columnwidth]{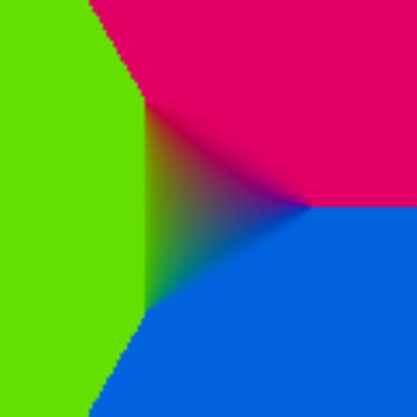}\\
\includegraphics[width=.20\columnwidth]{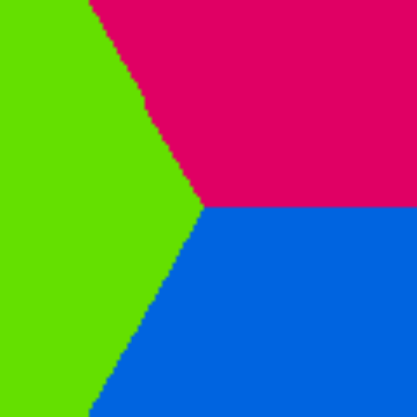} &
\includegraphics[width=.20\columnwidth]{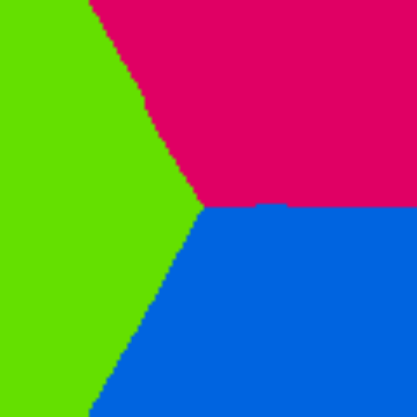} &
\includegraphics[width=.20\columnwidth]{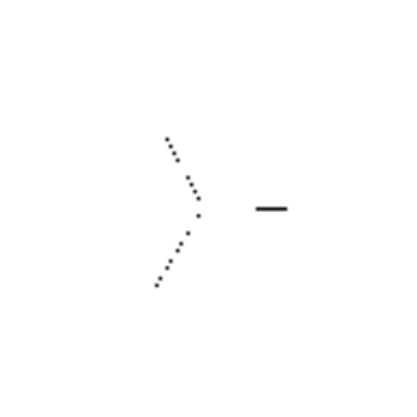}
\end{array}$
\caption{Tightness of the relaxation. \textbf{Top row:} In the input (left), the data term was blanked out in
a quadratic region. All structure within the region is generated purely by the regularizer with a standard Potts interface potential. The envelope relaxation is tighter and generates a much more ``binary'' solution (center) than the Euclidean metric method (right). \textbf{Bottom row:} After binarization of the relaxed solutions, the envelope (left) and Euclidean metric (center) methods generate essentially the same solution, as can be seen in the difference image (right). The Euclidean
metric method performed more than four times faster due to the inexpensive projections.
}%
\label{fig:discreteness}%
\end{figure}

While the triple point is a problem specifically designed to challenge the regularizer, real-world images
usually contain more structure as well as noise, while the data term is available for most or all of the image.
To see if the above results also hold under these conditions, we repeated the previous experiment with the
``sailing'' image and four classes (Fig.~\ref{fig:discretenesssailing}). The improved tightness of the envelope
relaxation was also noticeable, with $96.2\%$ vs. $90.6\%$ of ``almost binary'' pixels. However,
due to the larger number of labels and the larger image size of $360 \times 240$, runtimes increased to
$4253$ (envelope) vs. $420$ (Euclidean metric) seconds.

The relaxed as well as the binarized solutions show some differences but are hard to distinguish visually. It is difficult to
pinpoint if these differences are caused by the tighter relaxation or by numerical issues:
while the Douglas-Rachford method applied to the Euclidean metric relaxation converged to a final relative gap of $1.5 \cdot 10^{-6}$, no such bound is available to estimate the accuracy of the solution for the envelope relaxation, due to the inexact projections and the intractable primal objective.
\begin{figure}
\centering
$\begin{array}{cc}
\includegraphics[width=.3\columnwidth]{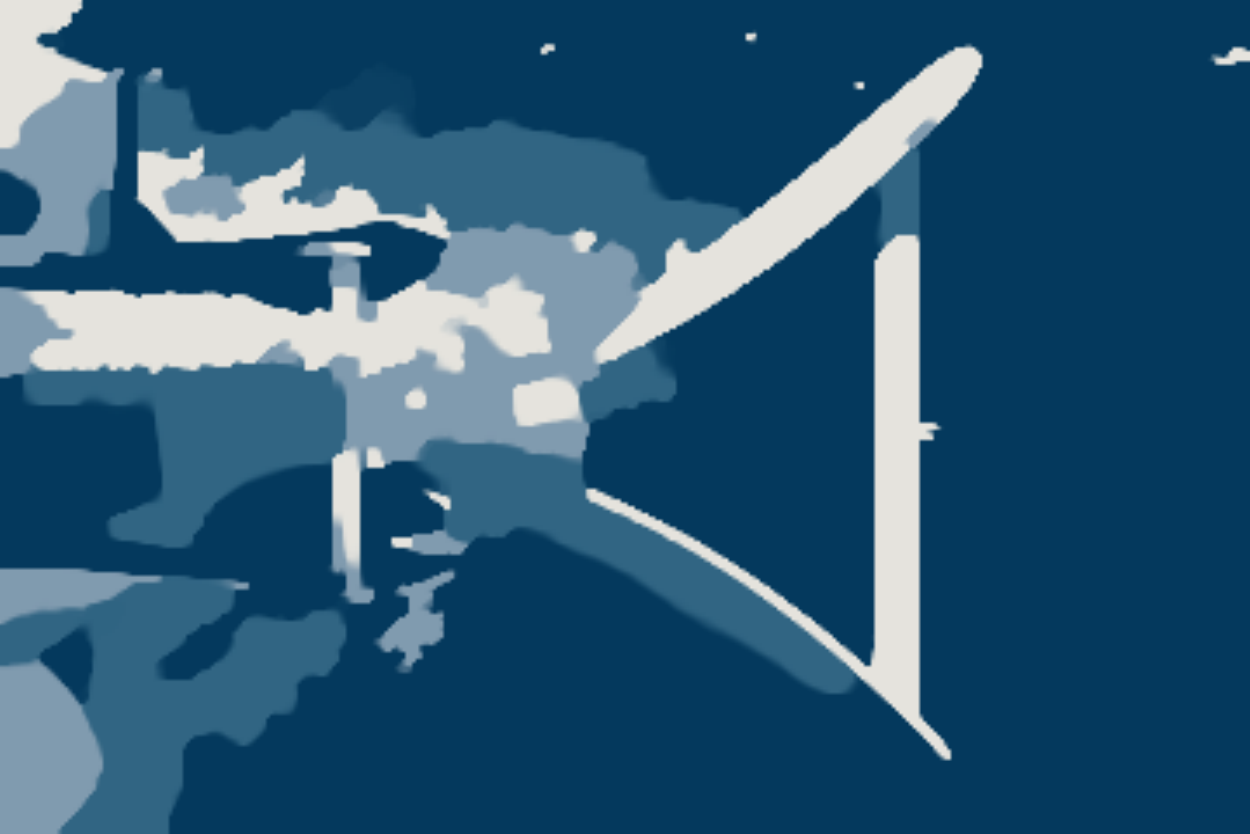} &
\includegraphics[width=.3\columnwidth]{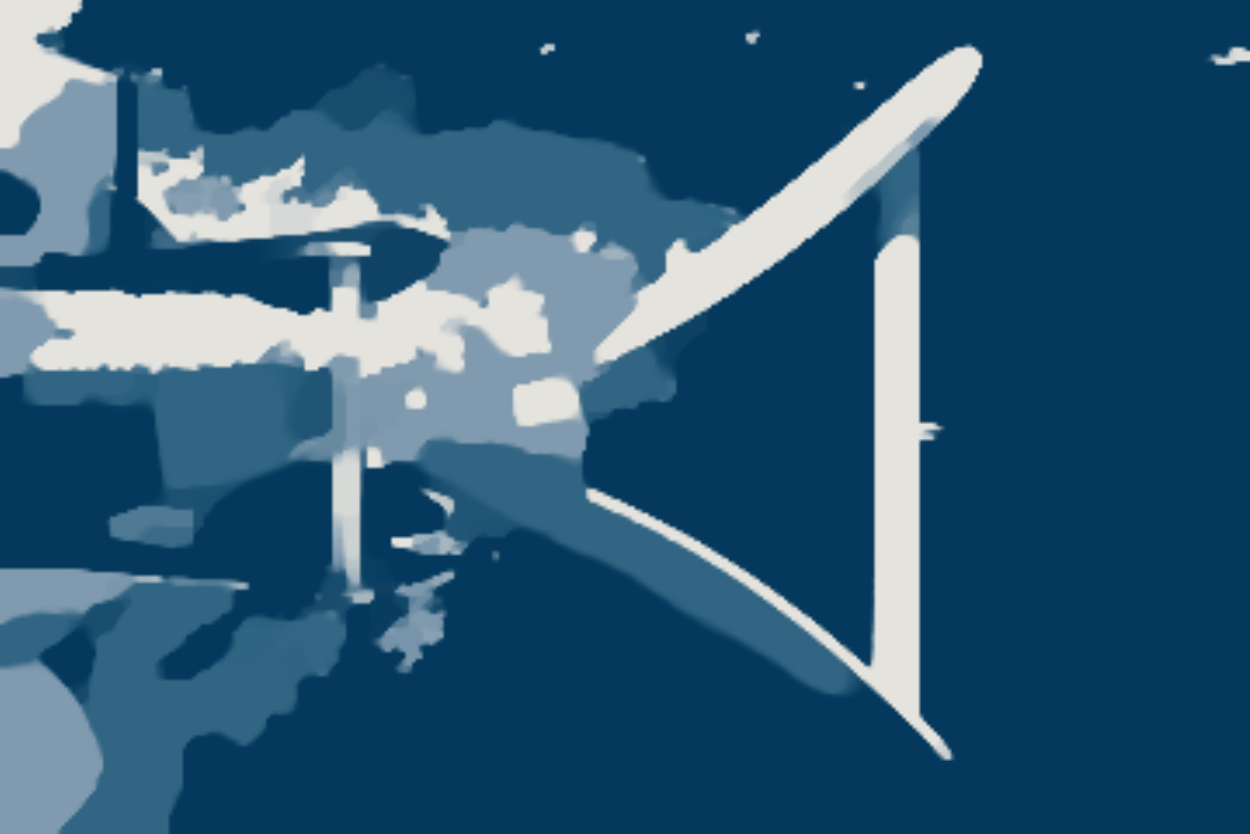}\\
\includegraphics[width=.3\columnwidth]{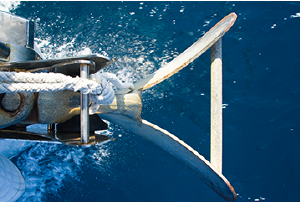} &
\includegraphics[width=.3\columnwidth]{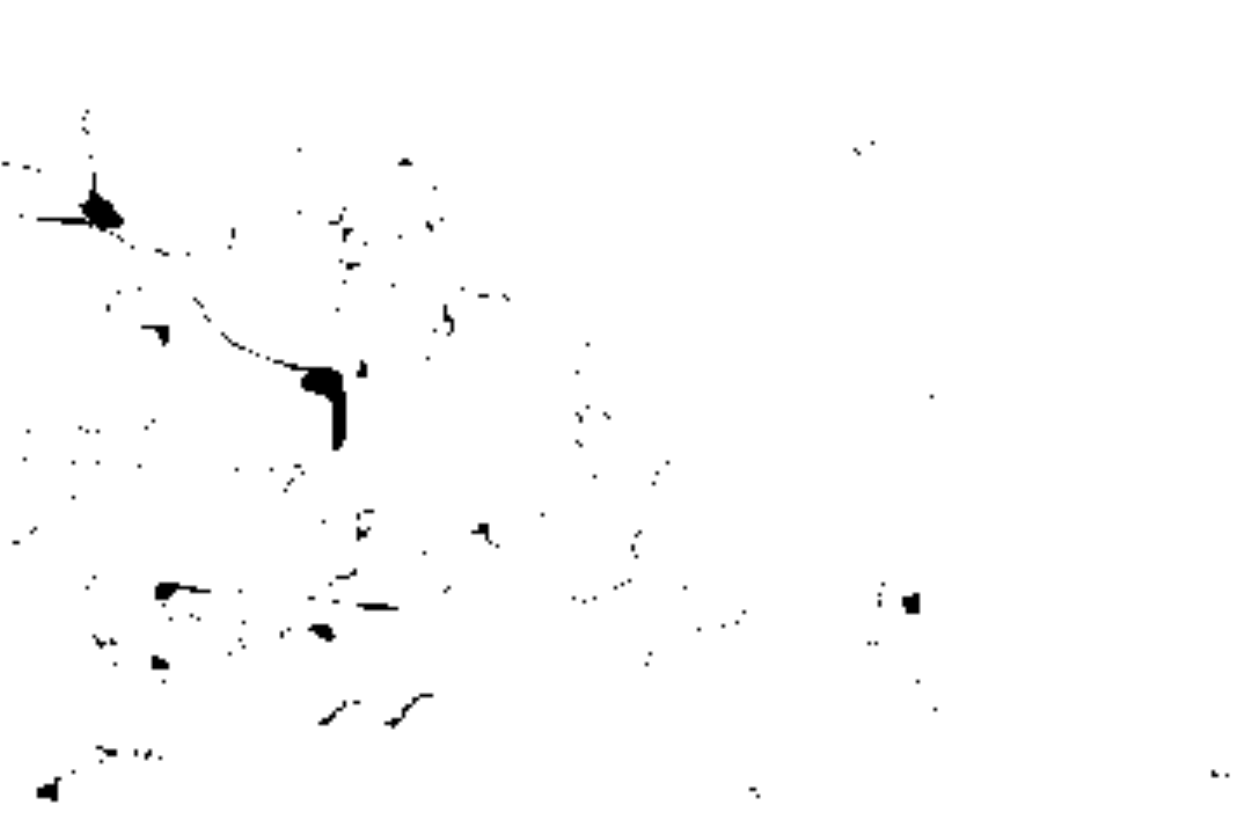}
\end{array}$
\caption{Effect of the choice of relaxation method on the real-world ``sailing'' image (image courtesy of F.~Becker). \textbf{Top row:} Four-class segmentation using envelope (left) and Euclidean metric (right) methods. Shown are the solutions of the relaxed problem. \textbf{Bottom row:} Original image (left); difference image of the discretized solutions (right). While the envelope relaxation
leads to substantially more ``almost discrete'' values in the relaxed solution, it also runs more than $10$ times slower
and does not provide a suboptimality bound. The generated solutions are visually almost identical.
}%
\label{fig:discretenesssailing}%
\end{figure}


\subsection{Binarization and Global Optimality}

As an example for a problem with a large number of classes, we analyzed the ``penguin'' inpainting problem
from~\cite{Szeliski2006}. We chose $64$ labels corresponding to $64$ equally spaced gray values. The input image
contains a region where the image must be inpainted in addition to removing the considerable noise.
Again the data term was generated by the $\ell^1$ distance, which reduces here to the absolute difference of the gray values.
In order to remove noise but not overly penalize hard contrasts, such as between the black wing and the white front,
we chose a regularizer based on the truncated linear potential as introduced in Sect.~\ref{sec:euclmetricmethod}.

Due to the large number of labels, this problem constitutes an example where the Euclidean metric approach
is very useful. As the complexity of the projections for the envelope relaxation grows quadratically with the
number of labels, computation time becomes prohibitively long for a moderate amount of classes.
In contrast, the Euclidean metric method requires considerably less computational effort and still
approximate the potential function to a reasonable accuracy (Fig.~\ref{fig:penguin-approxcutlinear}).

In the practical evaluation, the Douglas-Rachford method converged in $1000$ iterations to a relative gap of $8.3 \cdot 10^{-4}$, and recovered both smooth details near the beak, and hard edges in the inpainting region (Fig.~\ref{fig:penguin-discretization}).
This example also clearly demonstrates the superiority of the improved binarization
scheme proposed in Sect.~\ref{sec:binarization}. As opposed to the first-max scheme, the improved scheme generated considerably
less noise. The energy increased only by $2.78\%$ compared to $15.78\%$ for the first-max approach.

The low energy increase is directly related to global optimality for the discrete problem: as the relaxed solution is
provably nearly optimal, we conclude that the energy of the \emph{binarized} solution must lie within
$2.78\%$ of the optimal energy for the original \emph{combinatorial} problem (\ref{eq:problemcomb}). Similar results were obtained for the other images: $5.64\%$ for the ``four colors'' demo, $1.02\%$ for the ``leaf'' image and $0.98\%$ for the ``triple point'' problem.

These numbers indicate that the relaxation seems to be quite tight in many cases, and allows to recover good approximations for the solution of the discrete combinatorial labeling problem by solving the convex relaxed problem.
\begin{figure}[tb]
\centering
$\begin{array}{ccc}
\includegraphics[width=.20\columnwidth]{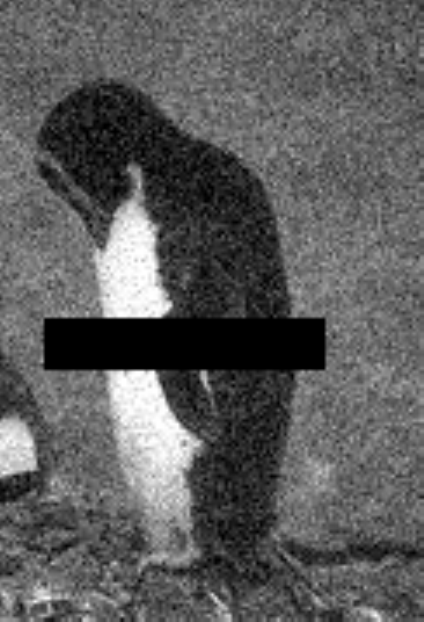} &
\includegraphics[width=.20\columnwidth]{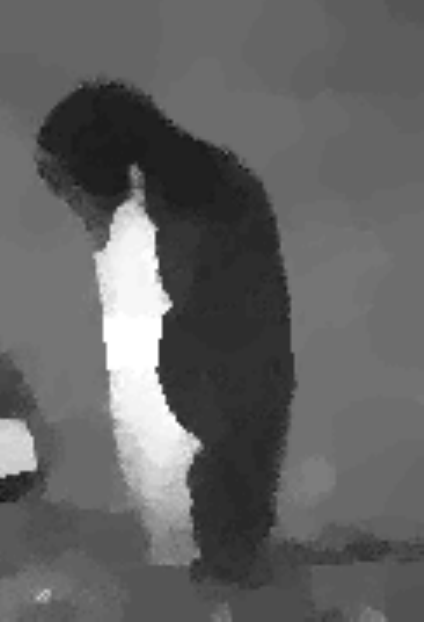} &
\includegraphics[width=.20\columnwidth]{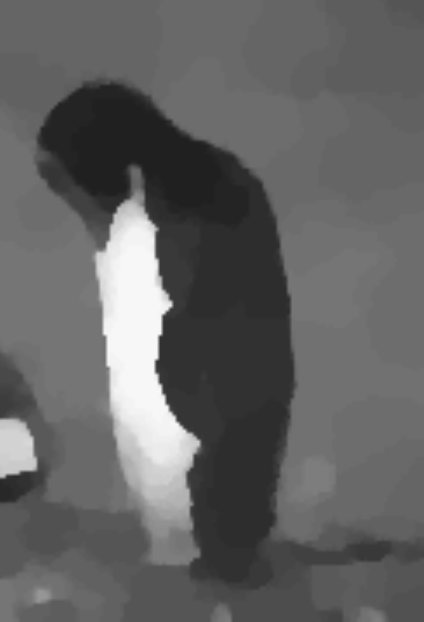} \\
\includegraphics[width=.20\columnwidth]{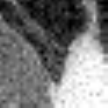} &
\includegraphics[width=.20\columnwidth]{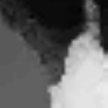} &
\includegraphics[width=.20\columnwidth]{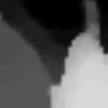} 
\end{array}$
\caption{Denoising/inpainting problem with $64$ classes and the nontrivial truncated linear potential approximated by an Euclidean metric. \textbf{Left to right:} Noisy input image with inpainting region marked black \cite{Szeliski2006}; result with first-max binarization; result with the proposed binarization scheme (\ref{eq:nonuniformbinarization}). The first-max method introduces noticeable noise in the binarization step. The proposed method takes into account the non-uniformity of the used ``cut-linear'' potential (Fig.~\ref{fig:penguin-approxcutlinear}), resulting in a clean labeling and an energy increase of only $2.78\%$ vs. $15.78\%$ for the first-max method. This shows that the obtained solution solves the
originally combinatorial multiclass labeling problem to a suboptimality of $2.78\%$.
}%
\label{fig:penguin-discretization}%
\end{figure}


\section{Conclusion and Further Work}

The present work provides a reference and framework for
continuous multilabeling approaches. The presented algorithms are robust and fast and
are suited for massive parallelization. From the experiments it became clear that
solving the convex relaxed problem allows to recover very good solutions for the
original combinatorial problem in most cases.

The performance evaluations showed that the Douglas-Rachford method
consistently requires about one third of the iterations compared to the
Fast Primal-Dual method. For low regularization and
fast projections, FPD outperforms the Douglas-Rachford method. In all other cases, Douglas-Rachford performs equally or better,
with a speedup of $2$-$3$ if the projections are expensive. Overall, the proposed
Douglas-Rachford method approach appears to be a solid all-round method that also handles extreme cases well.

From the viewpoint of numerics, in our evaluations we did not specifically consider the effect of choosing
different step sizes for the Douglas-Rachford method. Also, it seems as if the smooth
optimization step in the Nesterov method usually performs much better than its theoretical
bound. Adjusting the strategy for choosing the smoothing parameter could yield a faster
overall convergence and possibly render the method competitive. In this regard, it would also
be interesting to include the inexactness of the projections into the convergence analysis.
There are also several theoretical questions left, such as how to include non-metric distances
into the continuous formulation.

In any case we think that the present framework unites continuous and discrete
worlds in an appealing way, and hopefully contributes to reducing the popularity gap
compared to purely grid-based, graph cut methods.

\textbf{Acknowledgments.} The authors would like to thank Simon Setzer
for the stimulating discussion. We are also grateful to an anonymous
reviewer, whose extensive comments greatly helped to improve the presentation of this work.


\section{Appendix}\label{sec:appendix}

\begin{proposition}\label{prop:bounddlocd}
  Let $v = (v^1, \ldots, v^l) \in \mathcal{D}_{\tmop{loc}}^{\pot}$, then
  \begin{equation}
    \|v\| \leqslant \min_i \Bigl( \sum_j \pot (i, j)^2 \Bigr)^{\frac{1}{2}}\,.
  \end{equation}
\end{proposition}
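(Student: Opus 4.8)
The plan is to reduce the bound to an elementary computation by centering the columns $v^k$ around one arbitrary reference column and invoking the two defining properties of $\mathcal{D}_{\tmop{loc}}^{\pot}$: the pairwise constraints $\|v^i - v^j\| \leqslant \pot(i,j)$ and the linear constraint $\sum_k v^k = 0$. Since $\|v\|^2 = \sum_k \|v^k\|^2$ for the Frobenius norm, it suffices to bound this sum by $\sum_j \pot(i,j)^2$ for a single, freely chosen index $i$, and then to minimise over $i$.

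First I would fix $i \in \{1,\ldots,l\}$ and set $w^k \assign v^k - v^i \in \mathbbm{R}^d$, so that $w^i = 0$ and, by the defining inequalities together with $\pot(i,i) = 0$, one has $\|w^k\| \leqslant \pot(i,k)$ for every $k$. The centering constraint $\sum_k v^k = 0$ then becomes $l\, v^i + \sum_k w^k = 0$, hence $v^i = -\bar w$ with $\bar w \assign \frac{1}{l}\sum_m w^m$, and therefore $v^k = w^k - \bar w$ for all $k$.

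Next I would apply the elementary variance identity $\sum_k \|w^k - \bar w\|^2 = \sum_k \|w^k\|^2 - l\,\|\bar w\|^2$, which follows by expanding the square and using $\sum_k w^k = l\,\bar w$ to eliminate the cross term. Dropping the nonnegative quantity $l\,\|\bar w\|^2$ gives $\|v\|^2 = \sum_k \|w^k - \bar w\|^2 \leqslant \sum_k \|w^k\|^2 \leqslant \sum_k \pot(i,k)^2 = \sum_j \pot(i,j)^2$. Since $i$ was arbitrary, taking the minimum over $i$ and then a square root yields the stated bound.

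I do not expect a genuine obstacle here; the only point requiring care is the bookkeeping of the centering step — namely recognising that the constraint $\sum_k v^k = 0$ is exactly what lets one rewrite $\|v\|^2$ as a sum of squared deviations of the $w^k$ about their mean, so that the cross term vanishes and the pairwise distance bounds $\|w^k\| \leqslant \pot(i,k)$ can be substituted directly. The freedom in the choice of the reference index $i$ is what produces the minimum on the right-hand side.
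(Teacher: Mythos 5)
Your proof is correct and takes essentially the same route as the paper's: both fix an arbitrary reference index $i$, use the constraint $\sum_k v^k = 0$ to kill the cross term, drop the same nonnegative quantity (your $l\,\|\bar w\|^2$ equals the paper's $l\,\|v^i\|^2$, since $\bar w = -v^i$), and then apply the pairwise bounds $\|v^k - v^i\| \leqslant \pot(i,k)$ before minimising over $i$. Your variance-identity phrasing is just a repackaging of the paper's direct expansion of $\sum_j \|v^j - v^i\|^2$.
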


\begin{proof}[Proof of Prop.~\ref{prop:bounddlocd}.]
From the constraint $\sum_{j=1}^{l} v^j = 0$ in \eqref{eq:dlocd} we deduce, for arbitrary but fixed $i \in \{1, \ldots, l\}$,
\begin{eqnarray}
  \sum_{j=1}^{l} \|v^j \|^2 & \leqslant & \left( \sum_{j=1}^{l} \|v^j \|^2 \right) - 0 + l\|v^i \|^2\nonumber\\
  & = & \left( \sum_{j=1}^{l} \|v^j \|^2 \right) - 2 \langle v^i, \sum_{j=1}^{l} v^j \rangle + l\|v^i \|^2\nonumber\\  
  & = & \sum_{j=1}^{l} \left( \|v^j \|^2 - 2 \langle v^i, v^j \rangle +\|v^i \|^2 \right)\nonumber\\
  & = & \sum_{j=1}^{l} \|v^j - v^i \|^2 \leqslant \sum_{j=1}^{l} \pot (i, j)^2 .
\end{eqnarray}
As $i$ was arbitrary this proves the assertion.
\end{proof}

\begin{proof}[Proof of Eq.~\eqref{eq:dctainversion}]
We mainly use the fact that $(P \otimes Q)(R \otimes S) = (P R)\otimes(Q S)$ for matrices $P,Q,R,S$ with compatible dimensions:
\begin{eqnarray}
(I + L^{\top} L)^{- 1} & = & \left( I + \left( A \otimes \tmop{grad}
  \right)^{\top}  \left( A \otimes \tmop{grad} \right) \right)^{- 1}\\
  & = & \left( I + \left( A^{\top} A \right) \otimes \left(
  \tmop{grad}^{\top} \tmop{grad} \right) \right)^{- 1}\\
  & = & \left( I + \left( V^{- 1} \tmop{diag} (a) V \right) \otimes \left(
  B^{- 1} \tmop{diag} (c) B \right) \right)^{- 1}\\
  & = & \left( I + \left( V^{- 1} \otimes B^{- 1} \right) \left( \tmop{diag}
  (a) \otimes \tmop{diag} (c) \right)  \left( V \otimes B \right) \right)^{-
  1}\\
  & = & \left( \left( V^{- 1} \otimes B^{- 1} \right) \left( I + \tmop{diag}
  (a) \otimes \tmop{diag} (c) \right)  \left( V \otimes B \right) \right)^{-
  1}\\
  & = & \left( V^{- 1} \otimes B^{- 1} \right) \left( I + \tmop{diag} (a)
  \otimes \tmop{diag} (c) \right)^{- 1}  \left( V \otimes B \right).
\end{eqnarray}  
Using $V^{-1}=V^{\top}$, \eqref{eq:dctainversion} follows.
\end{proof}

\begin{proof}[Proof of Prop.~\ref{prop:dr-convergence}]
The idea of the proof is to show that the sequence $(w''^{(k)})$ is exactly the
minimizing sequence produced by the algorithm applied to the dual problem (\ref{eq:primaldualobj}),
with step size $1/\tau$. Thus, if the dual algorithm converges, $(w''^{(k)})$ converges to
the solution of the dual problem, which proves the assertion.
To show the equivalency, first note that the formulation of the primal problem from \eqref{eq:discretemodelx},
\begin{eqnarray}
  & & \min_{u \in \mathcal{C}} \max_{v \in \mathcal{D}}
  \langle u, s \rangle + \langle Lu, v \rangle - \langle b, v \rangle
\end{eqnarray}
already covers the dual problem
\begin{eqnarray}
  & & \max_{v \in \mathcal{D}} \min_{u \in \mathcal{C}}
  \langle u, s \rangle + \langle Lu, v \rangle - \langle b, v \rangle\nonumber\\
  & = & \min_{v \in \mathcal{D}} \max_{u \in \mathcal{C}}
  \langle b, v \rangle + \langle u, - L^{\top} v \rangle - \langle u, s \rangle
\end{eqnarray}
by the substitutions
\begin{eqnarray}
  & & v \leftrightarrow u, \; \mathcal{C} \leftrightarrow \mathcal{D}\,, \; b
  \leftrightarrow s\,, \; L \leftrightarrow - L^{\top}\,.
\end{eqnarray}
The dual problem can thus be solved by applying the above substitutions to Alg.~\ref{alg:dr-primal}.
Additionally substituting $w \leftrightarrow z$ in Alg.~\ref{alg:dr-primal} in order to avoid confusion
with iterates of the primal method, leads to the dual algorithm, Alg.~\ref{alg:dr-dual}.
\begin{algorithm}[tb]
\caption{Dual Douglas-Rachford for Multi-Class Labeling} \label{alg:dr-dual}
\begin{algorithmic}[1]
\STATE Choose $\bar{v}^{(0)} \in \mathbbm{R}^{n \times d \times l}, \bar{z}^{(0)} \in \mathbbm{R}^{n \times l}$.
\STATE Choose $\tau_D > 0$.
\STATE $k \leftarrow 0$.
\WHILE{(not converged)}
  \STATE $v^{(k)} \leftarrow \Pi_{\mathcal{D}} \left( \bar{v}^{(k)} - \tau_D b \right)$.
  \STATE $z''^{(k)} \leftarrow \Pi_{\mathcal{C}} \left(\frac{1}{\tau_D} (\bar{z}^{(k)} - s) \right)$.

  \STATE $v'^{(k)} \leftarrow (I + L L^{\top})^{- 1}  \left( (2 v^{(k)} - \bar{v}^{(k)}) + (-L) (\bar{z}^{(k)} - 2 \tau_D z''^{(k)}) \right)$.
  \STATE $z'^{(k)} \leftarrow (-L^{\top}) v'^{(k)}$. 
  
  \STATE $\bar{v}^{(k + 1)} \leftarrow \bar{v}^{(k)} + v'^{(k)} - v^{(k)}$.
  \STATE $\bar{z}^{(k + 1)} \leftarrow z'^{(k)} + \tau_D z''^{(k)}$.
	\STATE $k \leftarrow k + 1$.
\ENDWHILE
\end{algorithmic}
\end{algorithm} 
  We first show convergence of Alg.~\ref{alg:dr-primal} and Alg.~\ref{alg:dr-dual}. By construction, these amount to applying Douglas-Rachford splitting to the primal resp. dual formulations
\begin{eqnarray}
& & \min_{u, w} \underbrace{\delta_{L u = w} (u, w)}_{=: h_1(u,w)} +
                \underbrace{\langle u, s \rangle +
                \delta_{\mathcal{C}} (u) +
                \sigma_{\mathcal{D}} (w - b)}_{=: h_2(u,w)}\,,\\
& & \min_{v, z} \underbrace{\delta_{- L^{\top} v = z} (v, z)}_{=: h_{D,1}(v,z)} +
                \underbrace{\langle v, b \rangle +
                \delta_{\mathcal{D}} (v) +
                \sigma_{\mathcal{C}} (z - s)}_{=: h_{D,2}(v,z)}\,.
\end{eqnarray} 
As both parts of the objectives
  are proper, convex and lsc, it suffices to show additivity of
  the subdifferentials {\cite[Cor.~10.9]{Rockafellar2004}} {\cite[Thm.~3.15]{Eckstein1989}} {\cite[Prop.~3.20, Prop.~3.19]{Eckstein1989}} {\cite{Eckstein1992}}. Due to the boundedness of $\mathcal{C}$ and $\mathcal{D}$,
  \begin{eqnarray}
    \tmop{ri} \left( \tmop{dom} h_2 \right) \cap \tmop{ri} \left( \tmop{dom} h_1 \right)
    & = & \left( \tmop{ri} \left( \mathcal{C} \right) \times \mathbbm{R}^{n d l} \right) \cap \{Lu = w\}\nonumber\\
    & = & \{(u, Lu) | u \in \tmop{ri} (\mathcal{C})\}\,,\\
    \tmop{ri} \left( \tmop{dom} h_{D,2} \right) \cap \tmop{ri} \left( \tmop{dom} h_{D,1} \right)
    & = &\left( \tmop{ri} (\mathcal{D}) \times \mathbbm{R}^{n l} \right) \cap \{- L^{\top} v = z\}\nonumber\\
    & = &\{(v, - L^{\top} v) | v \in \tmop{ri} (\mathcal{D})\}\,.
  \end{eqnarray}
  Both of these sets are nonempty as $\tmop{ri} (\mathcal{C}) \neq \emptyset \neq \tmop{ri}(\mathcal{D})$.
  This implies additivity of the subdifferentials for the proposed objective \cite[Cor.~10.9]{Rockafellar2004}
  and thus convergence (in the iterates as well as the objective) of the tight Douglas-Rachford iteration (cf. \cite[Thm.~3.15]{Eckstein1989}).

  We will now show that the primal and dual algorithms essentially generate the
  same iterates, i.e. from $\tau \assign \tau_P = 1/\tau_D$, $\bar{u}^{(k)} = \tau \bar{z}^{(k)}$
  and $\bar{w}^{(k)} = \tau \bar{v}^{(k)}$, it follows that $\bar{u}^{(k + 1)} = \tau \bar{z}^{(k + 1)}$,
  $\bar{w}^{(k + 1)} = \tau \bar{v}^{(k + 1)}$ and $u^{(k)} = z''^{(k)}$, $v^{(k)} = w''^{(k)}$.
  The last two equalities follow immediately from the previous ones by definition of the algorithms. Furthermore,
  \begin{eqnarray}
    \bar{v}^{(k + 1)} & = & \bar{v}^{(k)} + v'^{(k)} - v^{(k)} \nonumber\\
    & = & \bar{v}^{(k)} + ( I + LL^{\top} )^{- 1}\nonumber\\
    & & \cdot ( ( 2 v^{(k)} - \bar{v}^{(k)} ) + ( - L )  ( \bar{z}^{(k)} - 2 \tau^{-1} z''^{(k)} ) ) - v^{(k)}\\
    & = & \tau^{-1} \bar{w}^{(k)} + ( I + LL^{\top} )^{- 1}  ( ( 2 w''^{(k)} - \tau^{-1} \bar{w}^{(k)} )\nonumber\\
    & & + ( - L )  ( \tau^{-1} \bar{u}^{(k)} - 2 \tau^{-1} u^{(k)} ) ) - w''^{(k)}\\
    & = & \tau^{-1} \bar{w}^{(k)} + ( I + LL^{\top} )^{- 1} ( 2 w''^{(k)} - \tau^{-1} \bar{w}^{(k)} ) - w''^{(k)}\nonumber\\
    & & - ( I + LL^{\top} )^{- 1} L ( \tau^{-1} \bar{u}^{(k)} - 2 \tau^{-1} u^{(k)} )\,.
  \end{eqnarray}  
  By the Woodbury identity, $(I + LL^{\top})^{- 1} = I - L (I + L^{\top} L)^{- 1} L^{\top}$ and in particular
  $(I + LL^{\top})^{-1} L = L (I + L^{\top}L)^{-1}$,
  therefore  
  \begin{eqnarray}
    \bar{v}^{(k + 1)} & = & \tau^{-1} \bar{w}^{(k)} + 2 w''^{(k)} - \tau^{-1} \bar{w}^{(k)}\nonumber\\
    & & - L ( I + L^{\top} L )^{- 1} L^{\top} ( 2 w''^{(k)} - \tau^{-1} \bar{w}^{(k)} ) - w''^{(k)}\nonumber\\
    & & - L ( I + L^{\top} L )^{- 1}  ( \tau^{-1} \bar{u}^{(k)} - 2 \tau^{-1} u^{(k)} ) \\
    & = & w''^{(k)} - L ( I + L^{\top} L )^{- 1}\nonumber\\
    & & \cdot( L^{\top} ( 2 w''^{(k)} - \tau^{-1} \bar{w}^{(k)} ) + ( \tau^{-1} \bar{u}^{(k)} - 2 \tau^{-1} u^{(k)} ) )\\
    & = & \tau^{-1} L ( I + L^{\top} L )^{- 1}  \nonumber\\
    & & \cdot( ( 2 u^{(k)} - \bar{u}^{(k)} ) + L^{\top} ( \bar{w}^{(k)} - 2 \tau w''^{(k)} ) ) + w''^{(k)}\\
    & = & \tau^{-1}(Lu'^{(k)} + \tau w''^{(k)}) = \tau^{-1} \bar{w}^{(k + 1)}\,.
  \end{eqnarray}
  By primal-dual symmetry, the same proof shows that $\bar{u}^{(k + 1)} = \tau \bar{z}^{(k + 1)}$.
    To conclude, we have shown that $w''^{(k)} = v^{(k)}$ for $\tau_D = 1/\tau$ and suitable initialization of the dual method.
  As the dual method was shown to converge, $(w''^{(k)})$ must be a maximizing sequence for $f_D$. Together
  with the convergence of $u^{(k)}$ to a minimizer of $f$, this proves the first part of the proposition.
  Equation \eqref{eq:suboptboundinprop} follows, as for any saddle-point $(u^{\ast},v^{\ast})$,
  \begin{eqnarray}
    f_D \left( w''^{(k)} \right) \leqslant f_D (v^{\ast}) & = & f (u^{\ast}) \leqslant f (u^{(k)})
  \end{eqnarray}
  holds, and therefore $f(u^{(k)})-f(u^{\ast}) \geqslant 0$ and $f(u^{\ast}) \geqslant f_D \left( w''^{(k)} \right)$.  
\end{proof}

\bibliographystyle{alpha}
\bibliography{lellmann-schnoerr11}

\end{document}